\title{Causal Inference Despite Limited Global Confounding via Mixture Models}
\newtheorem{observation}[theorem]{Observation}
\newtheorem{claim}[theorem]{Claim}
\newcommand{\indep}{\perp \!\!\! \perp}
\newcommand{\rset}{\mathbb R}
\def\set#1{\{ #1 \}}
\def\abs#1{\mathopen| #1 \mathclose|}			
\def\Seq#1{\left\langle #1 \right\rangle}
\def\Set#1{\left\{ #1 \right\}}
\def\Abs#1{\left| #1 \right|}
\def\Card#1{\left| #1 \right|}
\def\Norm#1{\left\| #1 \right\|}
\def\Paren#1{\left( #1 \right)}		
\def\Bigbar#1{\mathrel{\left|\vphantom{#1}\right.\n@space}}
\newcommand{\suppress}[1]{}
\def\eps{\varepsilon}
\def\given{\mid}
\newcommand{\be}{\begin{equation}}
		\newcommand{\ee}{\end{equation}}
\newcommand{\bea}{\begin{eqnarray}}
		\newcommand{\eea}{\end{eqnarray}}
\newcommand{\bean}{\begin{eqnarray*}}
		\newcommand{\eean}{\end{eqnarray*}}
\newcommand{\condV}[1]{
            \scalebox{.4}{\begin {tikzpicture}[-latex ,auto ,node distance =1.7 cm and 1.7 cm ,on grid , ultra thick, baseline={([yshift=-.5ex]current bounding box.center)}, cstate/.style ={ font = \scriptsize, circle, draw, minimum width =.8 cm, fill=black, text = white}]
             \node[cstate] (V2) {$#1$};
            \end{tikzpicture}}
}
\newcommand{\indV}[2]{
            \scalebox{.4}{\begin {tikzpicture}[-latex ,auto ,node distance =1.7 cm and 1.7 cm ,on grid , ultra thick, baseline={([yshift=-.5ex]current bounding box.center)}, state/.style ={ font = \scriptsize, circle, draw, minimum width =.5 cm, ultra thick}]
             \node[state, fill = #2!50] (V2) {$#1$};
            \end{tikzpicture}}
}
\DeclareMathOperator{\Pa}{\operatorname{\textbf{PA}}}
\DeclareMathOperator{\Bottom}{\operatorname{\textbf{BOT}}}
\DeclareMathOperator{\An}{\operatorname{\textbf{AN}}}
\DeclareMathOperator{\AV}{\operatorname{\textbf{AV}}}
\DeclareMathOperator{\OH}{\operatorname{\textbf{OH}}}
\DeclareMathOperator{\Ch}{\operatorname{\textbf{CH}}}
\DeclareMathOperator{\Mb}{\operatorname{\textbf{MB}}}
\DeclareMathOperator{\mb}{\operatorname{\textbf{mb}}}
\DeclareMathOperator{\pa}{\operatorname{\textbf{pa}}}
\DeclareMathOperator{\ch}{\operatorname{\textbf{ch}}}
\DeclareMathOperator{\Cond}{\operatorname{\textbf{COND}}}
\DeclareMathOperator{\MCond}{\operatorname{\textbf{COND}}}
\DeclareMathOperator{\cond}{\operatorname{\textbf{cond}}}
\DeclareMathOperator{\mcond}{\operatorname{\textbf{cond}}}
\def\yone{y^1}
\def\II{\vec{I}}
\def\XX{\mathcal{X}}
\def\VV{\mathcal{V}}
\def\Vars{\vec{V}}
\def\Centers{\vec{X}}
\def\PP{\mathcal{P}}
\def\ePP{\widetilde{\mathcal{P}}}
\DeclareMathOperator{\Top}{\textbf{TOP}}
\DeclareMathOperator{\ttop}{\textbf{top}}
\def\DEFAULTS{\mathsf{Defaults}}
\newcommand{\din}{\Delta_{\text{in}}}
\newcommand{\dout}{\Delta_{\text{out}}}
\def\calA{\mathcal{A}}
\def\G{\mathcal{G}}
\def\eps{\varepsilon}
\def\given{\mid}
\begin{document}

\maketitle

\begin{abstract}%
  A Bayesian Network is a directed acyclic graph (DAG) on a set of $n$ random variables (the vertices); a Bayesian Network Distribution (BND) is a probability distribution on the random variables that is Markovian on the graph. A finite $k$-mixture of such models is graphically represented by a larger graph which has an additional ``hidden'' (or ``latent'') random variable $U$, ranging in $\{1,\ldots,k\}$, and a directed edge from $U$ to every other vertex. Models of this type are fundamental to causal inference, where $U$ models an unobserved confounding effect of multiple populations, obscuring the causal relationships in the observable DAG. By solving the mixture problem and recovering the joint probability distribution with $U$, traditionally unidentifiable causal relationships become identifiable. Using a reduction to the more well-studied ``product'' case on empty graphs, we give the first algorithm to learn mixtures of non-empty DAGs. 
\end{abstract}

\begin{keywords}%
  mixture models, Bayesian networks, causal DAGs, hidden confounder, population confounder, clobal confounding, causal identifiability
\end{keywords}

\section{Introduction}

The distinction between causation and correlation is paramount to the development of scientific knowledge. While learning spurious correlations is sufficient for \emph{predicting} outcomes, causal inference seeks the effect of intervening on a system. This fortification is an essential step towards using data to \emph{recommend} courses of action, whether they are medical treatments or a changes in economic policy.

Interventional effects are often explored in experimental settings via ``Randomized Controlled Trials'' (RCTs), which decouple intervened variables from potential confounding.  Unfortunately,  experiments are impossible or prohibitively expensive in a wide range of complex natural and artificial systems. In such settings
researchers often have to resort to collecting samples of an assortment of measurements from a joint distribution governed by causal relations.  Such relations define a structural causal model (SCM), described by a Bayesian network \citep{pearl1985bayesian}.

The assumption of causal sufficiency describes the often unrealistic case in which all relevant variables in the causal system are observed.  Under causal sufficiency, the consequences of all interventions on the system are identifiable by adjusting for the confounders described by the SCM.  If causal sufficiency is relaxed, unobserved confounders cannot be adjusted for using observed statistics, limiting the identifiability of some interventional distributions. The \emph{graphical} conditions under which causal relationships are identifiable are well studied~\citep{shpitserP06,huangV08,Pea09,SpGlS00,PetersJS17,KF09}. There has been little exploration into the identifiability of causal relationships using \emph{numerical} conditions. 

This paper will address a setting in which an unobserved multiplicity of populations that confounds an entire system - often called a \emph{mixture model}. The setting emerges when combining data from many setting or laboratories, whose environments may exert a confounding influence. Adjusting for the observed confounder in such settings would involve considering each data source separately.  To make full use of the merged dataset, we may want to instead assume that some simpler unobserved $U$ (taking on fewer states than the number of datasets) is sufficient to control for the confounding induced by the merging.  Examples of such a $U$ could include private attributes like health status or artificially constructed classes such as individual price sensitivity.  

Such a setting satisfies neither observability of the ``backdoor adjustment set'' nor the sparsity requirements on the connection of $U$ for the ``frontdoor criterion'' \citep{Pea09}.  As a result,  the causal relationships within the SCM are considered unidentifiable in the traditional framework.  This worst case analysis ignores the fact that limited cardinality of $U$ would also limit $U$'s ability to completely hide the causal dynamics.  We will show that mild additional assumptions in conjunction with a known DAG structure allow identification of the joint probability distribution with the previously unobservable $U$, a sufficient condition for applying the backdoor adjustment.

\subsection{Problem Statement}
A Bayesian network is a directed acyclic graph $\G=(\Vars,\vec{E})$, on a set of $\abs{\Vars} = n$ random variables. A corresponding Bayesian network distribution (BND) is a probability distribution on the random variables that is Markovian on the graph. That is to say, the joint distribution on the variables can be factored as $\prod_{i=1}^n \PP[V_i=v_i\mid \pa(V_i)]$ where $\pa(V_i)$ is the assignment to the parents of $V_i$.
A $k$-MixBND on $\G$ is a convex combination, or ``mixture'', of $k$ BNDs.  We represent this situation graphically by a single unobservable random variable $U$ with edges to each of the variable $V \in G$. Here, $U$ is referred to as a ``source'' variable with range ${1, \ldots, k}$ and the variables in $\G$ are referred to as the ``observables.'' The main complexity parameter of the problem is $k$, representing the number of mixture constituents or ``sources.''

\suppress{Models of this type (and even more general) are fundamental to research in causal inference, where the chief difficulty has to do with performing statistical inference for collections of random variables which, besides some known potential direct effects on each other (the directed edges), are also affected by unseen confounders.}

The extremely special case where $\G$ is empty has been of longstanding interest in the theory literature.  Such a distribution is a mixture of $k$ product distributions or $k$-MixProd. See Fig.~\ref{small-examples}.
   \begin{figure}[h]
    \centering
    (a)
    \scalebox{.45}{
        \begin {tikzpicture}[-latex ,auto ,node distance =2 cm and 2 cm ,on grid , semithick, state/.style ={ circle, draw, minimum width =1 cm}]
        \node[state,] (X1) at (-3,0) {$V_1$};
         \node[state,] (X2) at (-1,0) {$V_2$};
          \node[state,] (X3) at (1,0) {$V_3$};
         \node[state,] (X4) at (3,0) {$V_4$};
        \node[state, dashed] (U) at (0 , 2) {$U$};
		\path[ultra thick] (X1) edge[bend right = 35] (X3);
        \path[ultra thick] (X1) edge (X2) (X2) edge (X3) (X3) edge (X4);
        \path[ultra thick, dashed] (U) edge (X1) (U) edge (X2) (U) edge (X3) (U) edge (X4);
        \end{tikzpicture}}
        (b)
	 \scalebox{.45}{
        \begin {tikzpicture}[-latex ,auto ,node distance =2 cm and 2 cm ,on grid , semithick, state/.style ={ circle, draw, minimum width =1 cm}]
        \node[state,] (X1) at (-3,0) {$V_1$};
         \node[state,] (X2) at (-1,0) {$V_2$};
          \node[state,] (X3) at (1,0) {$V_3$};
         \node[state,] (X4) at (3,0) {$V_4$};
        \node[state, dashed] (U) at (0 , 2) {$U$};
		\path[draw=none] (X1) edge[bend right = 35, draw=none] (X3);
        \path[ultra thick, dashed] (U) edge (X1) (U) edge (X2) (U) edge (X3) (U) edge (X4);
        \end{tikzpicture}}
    \caption{(a) A small Bayesian Network, with latent variable $U$. (b) In the empty graph, a $k$-MixBND is a $k$-MixProd (mixture of product distributions).}  \label{small-examples} 
  \end{figure}

In this paper we study the \emph{identification} problem for $k$-MixBNDs. Specifically, given the graph $\G$, and given a joint distribution $\PP$ on the variables (vertices), recover up to small statistical error (a) the mixture weights (probability of each source), up to a permutation of
the constituents, and (b) for every mixture source and for every vertex $V$, its conditional distribution
given each possible setting to the parents of $V$.  This task identifies the joint probability distribution $\PP(U, \vec{V})$ up to the $k!$ permutations in the label $U$. Identification will be shown by giving an algorithm that reduces the $k$-MixBND problem into a series of calls to a $k$-MixProd oracle. $k$-MixBND models are not always identifiable, as further discussed in \emph{Assumptions} below. 
Thus,
another contribution of our paper is to establish a sufficient setting to guarantee identifiability.

\paragraph{Assumptions}
The following assumptions are used throughout this paper. 
\begin{enumerate}[nosep, leftmargin=.4cm]
\item 
\label{as:discrete}
\emph{We have access to a $k$-MixProd oracle requiring $\mathcal{O}(k)$ variables that are independent within each source. }
As different algorithms have different requirements for the number of independent variables, we will keep our results agnostic to these requirements. The most efficient published algorithm is given in \citet{gordon2021source}, which requires $N_{\text{mp}} = 3k-3$ variables and time complexity $\exp(k^2)$. Recent unpublished work improves the complexity bound to $\exp(k \log k)$. 

\item 
\label{as:discrete}
\emph{The observable variables in our BND are binary and discrete.}
While a number of papers have focused on continuous or large-alphabet settings, we restrict our focus to the simplest setting of binary, discrete variables. Appendix~\ref{apx: large alphabets for DAGs} gives a reduction from alphabets of any size $d$ to the binary case, incurring a mild cost in complexity.

\item \label{as:k}
  \emph{The mixture is supported on $\leq k$ sources.}
If the hidden variable $U$ has unrestricted range (Specifically, range $k=2^n$ would be enough),  the model is rich enough to describe \emph{any} probability distribution on $\vec{V}$,  making identification impossible. The question is therefore one of trading $k$ against the sample and computational complexity of an algorithm (and the degree of the network).


\item \label{as:types_of_networks}
  \emph{The underlying Bayesian DAG is sufficiently sparse.}
In order to reduce $k$-MixBND to $k$-MixProd we need sufficiently many variables that can be separated from each other by conditioning on disjoint \emph{Markov boundaries} (example in Fig.~\ref{fig:disjointmb}, definition in Sec.~\ref{sec:bayes_net_basics}).  As a result, the complexity of the algorithm is exponential in the size of a Markov boundary.  Both for complexity and in order to keep $n$ small, a bound on the maximum degree $\Delta$ is required.  We require $n \geq (\Delta+1)^4 N_{\text{mp}}$.\footnote{If the skeleton of $\G$ happens to be a path,
then we only need a milder condition that $n\ge 2 N_{\text{mp}} $. For details see Appendix~\ref{apx:paths}.} 

\begin{figure}[h]
    \centering
   \begin{tikzpicture}[-latex ,auto ,node distance = 4.8 cm and 4.8 cm ,on grid , ultra thick]
             \node (kmixbnd) {
                \scalebox{.42}{\begin {tikzpicture}[-latex ,auto ,node distance =4 cm and 4 cm ,on grid , ultra thick, state/.style ={font = \LARGE, circle, draw, minimum width =.5 cm, ultra thick}, cstate/.style ={ font = \LARGE, circle, draw, minimum width =.5 cm, fill=black, text = white}]
                    \filldraw[color=red, fill=red, fill opacity=.11, ultra thick](5.1 , 2.9) rectangle (14.9, 1.1);
                    \filldraw[ultra thick, color = green!60!black, fill=green, fill opacity=.11](11.1,-.9) -- (11.1, .9) -- (17.1, .9) -- (17.1, 2.9) -- (18.9, 2.9) -- (18.9, .9) -- (20.9, .9)-- (20.9, - .9) -- cycle;
                    \filldraw[color=orange, fill=orange, fill opacity=.11, ultra thick](23.1,2.9) rectangle (24.9, -.9);
                    \filldraw[color=blue, fill=blue, fill opacity=.11, ultra thick](-.9,2.9) rectangle (4.9, -.9);
                    \node[state, fill=blue!40] (X1) at (0,0) {$V_1$};
                    \node[cstate] (V1) at (2,2) {$V_2$};
                    \node[cstate] (V2) at (4,0) {$V_3$};
                    \node[cstate] (V3) at (6,2) {$V_4$};
                    \node[state] (V4) at (8,0) {$V_5$};
                    \node[state, fill=red!50] (X2) at (10,2) {$V_6$};
                    \node[cstate] (V5) at (12,0) {$V_7$};
                    \node[cstate] (V6) at (14,2) {$V_8$};
                    \node[state, fill=green!50] (X3) at (16,0) {$V_9$};
                    \node[cstate] (V7) at (18,2) {$V_{10}$};
                    \node[cstate] (V8) at (20,0) {$V_{11}$};
                    \node[cstate] (V9) at (24,0) {$V_{12}$};
                    \node[state, fill=orange!50] (X4) at (24,2) {$V_{13}$};
                    \node[state, dashed] (U) at (12,  6) {$U$};
                    \path[ultra thick] (X1) edge (V2) (V1) edge (V2) (V1) edge (V3) (V2) edge (V4) (V3) edge (X2) (X2) edge (V6) (V4) edge (V5) (V5) edge (X3) (X3) edge (V7) (X3) edge (V8) (V8) edge (V9) (V9) edge (X4);
                    \path[ultra thick, dashed] (U) edge[bend right = 38] (X1) (U) edge[bend right = 25] (V1) (U) edge[bend right = 35] (V2) (U) edge[bend right = 35] (V2) (U) edge[bend right = 20] (V3) (U) edge[bend right = 25] (V4) (U) edge[bend right = 15] (X2) (U) edge[] (V5) (U) edge[bend left = 20] (V6) (U) edge[bend left = 25] (X3) (U) edge[bend left = 20] (V7) (U) edge[bend left = 35] (V8) (U) edge[bend left = 35] (V9) (U) edge[bend left = 35] (X4);
                    \end{tikzpicture}}};
                \node (kmixprod) [below =of kmixbnd] {
                    \scalebox{.5}{\begin {tikzpicture}[-latex ,auto ,node distance =4 cm and 4 cm ,on grid , ultra thick, font=\LARGE, state/.style ={font=\LARGE, circle, draw, minimum width =1 cm}, cstate/.style ={circle, draw, minimum width =.5 cm, fill=black, text = white}]
                    \node[draw, dashed, ultra thick, font = \LARGE] (U) at (7.5,  3) {$U \given \Cond$};
                    \node[draw=blue, fill=blue!5, ultra thick, font = \LARGE] (V1) at (0,0) {$V_1 \given V_2, V_3$};
                    \node[draw=red, fill=red!5, ultra thick,font = \LARGE] (V6) at (5,0) {$V_6 \given V_4, V_8$};
                    \node[draw=green!50!black, fill=green!5, ultra thick, font = \LARGE] (V9) at (10,0) {$V_9 \given V_7, V_{10}, V_{11}$};
                    \node[draw=orange, fill=orange!5, ultra thick, font = \LARGE] (V13) at (15,0) {$V_{13} \given V_{12}$};
                    \path[dashed, ultra thick] (U) edge[bend right = 15] (V1) (U) edge[bend right = 15] (V6) (U) edge[bend left = 15] (V9) (U) edge[bend left = 15] (V13);
                    \end{tikzpicture}}};
                \node (cond) at (-2.5,-3) {
                $\Cond := \left\{\begin{aligned} &\condV{V_2}, \condV{V_3}, \condV{V_4}, \condV{V_7},\\ &\condV{V_8}, \condV{V_{10}}, \condV{V_{11}}, \condV{V_{12}}
                \end{aligned}\right\}$};
                \node (ind) at (+2,-3) {$\vec{I} := \left\{ \indV{V_1}{blue}, \indV{V_6}{red}, \indV{V_9}{green}, \indV{V_{13}}{orange} \right\}$};
                \path[line width=1mm] (kmixbnd) edge (kmixprod);
                \node (condlabel) [above = .8cm of cond] {"Conditioning set"};
                \node (indlabel) [above = .8cm of ind] {"Independent set"};
         \end{tikzpicture}
    \caption{
        The reduction process of conditioning on $\Cond$ to create an instance of $k$-MixProd.  A Bayesian network with four vertices $V_1, V_6, V_9, V_{13}$ and their corresponding disjoint Markov boundaries are indicated. } \label{fig:disjointmb}
\end{figure}

\item
\label{as:separated}
  \emph{The resulting product mixtures are non-degenerate.}
Even in mixtures of graphs with sparse structure (in particular the empty graph---the $k$-MixProd problem), the $k$-MixBND can be unidentifiable if the mixture components are insufficiently distinct. (E.g., trivially, a mixture of identical sources generates the same statistics as a single source.) Past work has used conditions such as $\zeta$-separation in \citet{gordon2021source} to ensure that 
matrices representing the parameters for each source are well-conditioned. These are not always necessary conditions; characterizing necessary conditions is a difficult question tackled in part in \citet{gordon2022hadamard}. 

\item 
\label{as:known_dag}
\emph{The DAG structure representing conditional independence properties within each source, or a common supergraph of these structures,  is known.}
It is often the case that domain knowledge provides an understanding of the causal DAG.  If the causal DAG is unknown, we are faced with a different problem commonly known as ``Causal Discovery''(see~\citet{10.3389/fgene.2019.00524} for a recent survey.) A method for causal discovery in the presence of a universal confounder has been suggested in \citet{anandkumar2012learning} by substituting independence tests with rank tests.  In our motivating example of dataset merging, it is likely that the structure can be learned from an individual dataset.  Like in \citet{anandkumar2012learning}, the presented algorithm will actually only require a supergraph of the true structure.  Hence,  some uncertainty in knowledge of the graph can be tolerated. In fact, the algorithm also works even if the different components of the $k$-MixBND use slightly different causal graphs.
\end{enumerate}

\subsection{Summary of contributions}\label{sec:summary of contributions}
\begin{theorem} Our algorithm identifies a $k$-MixBND distribution with on a graph of maximum degree $\Delta$ and of size $n \geq \Omega(N_{\text{MP}} \Delta^4)$,
using $O(n2^{\Delta^2})$ calls to an oracle for the $k$-MixProd problem.  For an exact statement see Theorem~\ref{thm:main result on good collection}.\end{theorem}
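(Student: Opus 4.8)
The engine of the proof is a single reduction step: conditioning on a Markov boundary decouples a vertex from the rest of the network inside each source. For any vertex $V$ and any source $u$, the per-source Markov property gives $\PP[V \mid \Vars \setminus \{V\}, U=u] = \PP[V \mid \mb(V), U=u]$, so fixing an assignment to $\mb(V)$ makes $V$ conditionally independent of everything outside $\mb(V)\cup\{V\}$ in every component. Generalizing, if $I=\{V_1,\dots,V_m\}$ has pairwise disjoint Markov boundaries, then conditioning on an assignment to $\bigcup_i \mb(V_i)$ renders $V_1,\dots,V_m$ mutually independent within each source. The conditional law of $I$ is therefore a convex combination, over the sources $u$ weighted by $\PP[U=u\mid\Cond]$, of product distributions — exactly a $k$-MixProd instance on binary variables. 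A single oracle call then returns, up to an unknown permutation of the $k$ labels, the conditional source weights and, for each $V_i\in I$, the value $\PP[V_i \mid \pa(V_i), U=u]$ for the parent-assignment induced by the conditioning. These are precisely the conditional-probability-table (CPT) entries we want, so the whole task reduces to (i) producing enough valid instances to cover every CPT entry, and (ii) reconciling the per-call permutations into one global labeling of $U$.

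For (i), I would control the cost of finding disjoint Markov boundaries by a conflict-graph argument. A degree bound $\Delta$ on $\G$ gives $\abs{\mb(V)}=O(\Delta^2)$ (at most $\Delta$ parents, $\Delta$ children, and $\Delta(\Delta-1)$ co-parents), and each vertex lies in the Markov boundary of only $O(\Delta^2)$ others, so in the graph whose edges join vertices with intersecting Markov boundaries every vertex has degree $O(\Delta^4)$. A greedy/colouring argument then extracts independent sets of size $N_{\text{mp}}$ whenever $n=\Omega(N_{\text{mp}}\Delta^4)$, which is the stated size hypothesis (Assumption~\ref{as:types_of_networks}), and lets us assemble a family of instances — the ``good collection'' — covering all $n$ vertices. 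To sweep all parent-assignments of a target $V$ I would iterate over the $2^{O(\Delta^2)}$ assignments to $\mb(V)$ while holding the conditioning of the $N_{\text{mp}}-1$ helper vertices in its instance fixed; this gives $2^{O(\Delta^2)}$ calls per target and $O(n\,2^{\Delta^2})$ calls in total, matching the claim.

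The main obstacle is (ii), and this is where non-degeneracy (Assumption~\ref{as:separated}) does the work. Within the block of calls for a fixed target $V$, all helper vertices are conditioned identically across the sweep over $\mb(V)$, so their recovered per-source profiles $\PP[\text{helper}\mid\pa(\text{helper}),U=u]$ are constant from call to call; non-degeneracy makes these profiles distinct across sources, so matching them transports a single labeling across every setting of $\mb(V)$ and thereby aligns all CPT entries of $V$ with one another. (Note the conditional source weights cannot be used here, since they change as the conditioning event changes — only the helper profiles are invariant.) To align distinct target blocks I would require the good collection to be connected, in the sense that consecutive instances share helper vertices or that one block's target reappears as a helper elsewhere; matching the shared, well-separated profiles propagates one global permutation across the whole collection, fixing the labeling of $U$ up to the unavoidable global $k!$ symmetry.

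Finally I would assemble the aligned CPTs together with the renormalized source priors into the full joint $\PP(U,\Vars)$ and bound the error. Each oracle call is run on empirical conditional statistics, so its error is inflated by $1/\PP[\Cond]$; restricting attention to conditioning events of non-negligible probability — again supplied by the separation hypothesis — keeps this inflation bounded, and the errors across the $O(n\,2^{\Delta^2})$ calls combine by a union bound together with the stability of the profile-matching step. Tracking these constants and the precise separation parameters is the routine but lengthy bookkeeping deferred to the formal statement, Theorem~\ref{thm:main result on good collection}.
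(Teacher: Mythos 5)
Your proposal gets the reduction skeleton right, and two of its three pillars match the paper: conditioning on disjoint Markov boundaries to manufacture $k$-MixProd instances, a degree-counting argument ($\abs{\Mb(V)} = O(\Delta^2)$, each vertex in $O(\Delta^2)$ other boundaries, hence $n = \Omega(N_{\text{mp}}\Delta^4)$ suffices to extract $N_{\text{mp}}$ centers), and permutation alignment by matching separated profiles of shared variables across runs (the paper's alignment spanning tree). But there is a genuine gap at the very point where you declare victory: you assert that an oracle call returns, for each $V_i$ in the independent set, ``the value $\PP[V_i \mid \pa(V_i), U=u]$ for the parent-assignment induced by the conditioning.'' It does not. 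The conditioning set that makes the independent set mutually independent within each source is the union of \emph{Markov boundaries}, which contain children and co-parents as well as parents; what the oracle returns is therefore $\PP_u(V_i \mid \mb(V_i))$, and conditioning on a child of $V_i$ changes the posterior of $V_i$, so $\PP_u(V_i \mid \mb(V_i)) \neq \PP_u(V_i \mid \pa(V_i))$ in general. (You cannot dodge this by conditioning only on parents: then collider paths through common descendants that are ancestors of other conditioned vertices become active, and within-source independence fails.) This discrepancy is exactly why the paper needs its second key contribution, ``Bayesian unzipping'': one factors
\begin{equation*}
\PP_u(y, \mb(Y)) = \PP_u(\ttop(Y))\,\PP_u(y \mid \pa(Y))\,\PP_u(\ch(Y) \mid \ttop(Y), y),
\end{equation*}
cancels $\PP_u(\ttop(Y))$ in the ratio defining $\PP_u(y \mid \mb(Y))$, and solves for $\PP_u(y \mid \pa(Y))$ — but this requires already knowing the CPTs of the \emph{children} of $Y$, forcing a reverse-topological recursion, the ``bottom vertex'' device (the deepest vertices of a run are conditioned only on their parents, so their output needs no unzipping and the recursion terminates), a depth cap of $3N_{\text{mp}}$ on non-bottom vertices, and the $(6(\Delta+1))^{\ell}$ error-amplification analysis of Lemma~\ref{lem:canrecover_inductive}. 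Without some counterpart of this step, the numbers your algorithm stitches together are not the parameters of the $k$-MixBND, so the proof does not go through as written.

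A secondary, smaller issue: your alignment argument assumes the helper profiles are literally invariant across the sweep over assignments to $\mb(V)$, but that invariance fails whenever $\mb(V)$ intersects a helper's Markov boundary (the helper's recovered profile $\PP_u(\cdot \mid \mb(\cdot))$ then depends on the swept values). The paper handles this by insisting the centers have pairwise disjoint boundaries, by swapping a center $X_i$ out of the independent set when the target $Y$ lies in $\Mb(X_i)$, and by aligning such runs not to the default run $a_0$ but to the perturbed central run $a_0[\mb^a(X_j)]$ that matches the overlapping assignment (Claim~\ref{clm:remaining runs}). Your ``connectedness'' condition gestures at the right structure but would need this case analysis to be made sound.
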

The algorithm will be built on the insight that conditioning on a set of Markov boundaries $\Cond \subset \vec{V}$ of $\II \subset V$ induces \emph{within-source} independence (that is, $V_i \indep V_j \given U, \Cond$ for all $V_i, V_j \in \II$).  This describes an instance of $k$-MixProd for which we can identify the joint probability distribution $\PP(\II, U \given \Cond)$.  See Figure~\ref{fig:disjointmb} for an illustration. 

Recovering $\PP(\II, U \given \Cond)$ for some $\II \subset \vec{V}$ is insufficient to recover the full joint probability distribution $\PP(\vec{V}, U)$.  Hence, we execute a set of $O(n2^{\Delta^2})$ ``runs" of a $k$-MixProd oracle on differing $\II, \Cond$ and assignments to their Markov boundaries and synthesize information gained from these runes into the joint probability distribution.

 The first challenge is to handle symmetries in permutation of the output labels of $U$ by ``aligning'' the outcomes of these runs. The second challenge is to remove the conditioning of $\vec{C}$ from each run. We do this by synthesizing the results of many runs with a procedure we call ``Bayesian unzipping.'' Our key contributions can be summarized by these ``alignment'' and ``unzipping'' procedures, as well as the notion of a ``good collection of runs'' that allows for the successful application of these sub-processes \suppress{We also make a key contribution to the $k$-MixProd setting, harnessing our ``alignment'' framework to extend the power of the \citet{gordon2021source} algorithm to non-binary observable variables. }

\paragraph{Organization} The rest of the paper is organized as follows.  In Section~\ref{subsection:background} we outline the literature background of the problem. In Section~\ref{sec:bayes_net_basics} we give some Bayesian network notation.  In Section~\ref{sec:applying_a_k} we formally develop the notion of a ``run,'' which calls a $k$-MixProd oracle. In Section~\ref{sec:combining} we explain how the output of the ``runs'' is combined to get the desired mixture parameters.  This section details the processes of alignment and Bayesian unzipping. Section~\ref{sec:good_collection} explains what is necessary in a group of runs in order for the algorithm to succeed, which provides a framework for defining algorithms in terms of sets of runs.

In Appendix~\ref{apx: BND pseudo} we define the $k$-MixBND algorithm in pseudo-code. 
In Appendix~\ref{apx: BND analysis} we analyze the $k$-MixBND algorithm. In Appendix~\ref{apx: good collections} we prove the existence of a good collection of runs. In Appendix~\ref{apx: large alphabets for DAGs} we generalize our algorithms to handle non-binary observations. 

\subsection{Background} \label{subsection:background}
There are two problems within mixture models: (1)
\emph{Learning} the model, namely, producing any model consistent with (or close to) the observations; (2) \emph{Identifying} the model, namely, producing the true model (or one close to it) up to permutations in the source label.
The feasibility of the identification problem hinges on a one-to-one mapping between the observed statistics and the model's parameters.  When using the resulting model to deconfound causal relationships, it is imperative that that the joint probability distribution with the confounder be identified.

The idea of using mixture models to identify parameters in latent variables dates back to \citet{allman2009identifiability}, who use algebraic methods from \citet{kruskal1976more, kruskal1977three} to exploit within-source independence. \citet{anandkumar2012method} follows a similar strategy using tensor decomposition.  Both of these works rely on within-source independence between three variables with support on large alphabets (i.e. large cardinality) to achieve \emph{generic}\footnote{Here, ``generic'' identifiability means that the set of unidentifiable models makes up a Lebesgue measure $0$ subset in the parameter space.} identifiablility. Specifically, the alphabet size of the independent variables must scale linearly with $k$.  Such a requirement can be achieved by combining variables of smaller variables, i.e. two binary variables can be combined to make up an alphabet size of $4$. \citet{allman2009identifiability} showed that identifiability holds \emph{generically} for smaller alphabets with $\mathcal{O}(\log(k))$ independent variables via these combinations. 

A different line of work in the theory community seeks \emph{full} identifiability,  requiring $\mathcal{O}(k)$ independent variables in conjunction with separation conditions to ensure that the visible variables behave differently in different mixture components \footnote{Separation conditions can be thought of as a form of faithfulness of edges from $U$ to $\vec{V}$}.  This problem, referred to as $k$-MixProd, and has been studied for nearly 30
years~\citep{KMRRSS94,CGG01,FM99,FOS08,CR08,tahmasebi2018identifiability,ChenMoitra19,gordon2021source}.\footnote{We do not even try to list the extensive analogous literature for parametrized distributions over $\rset$.} In~\citet{FOS08} a seminal algorithm for $k$-MixProd was given with running
time  $n^{O(k^3)}$ for mixtures on $n$ binary variables ($n$ sufficiently large).
This was improved in~\citet{ChenMoitra19} to $k^{O(k^3)} n^{O(k^2)}$. The most recent algorithm,~\citet{gordon2021source}
 identifies a mixture of $k$ product distributions on at least $3k-3$ variables in time
$2^{O(k^2)} n^{O(k)}$, under a mild ``separation'' condition that excludes unidentifiable instances. Under somewhat stricter separation, the time complexity improves to $2^{O(k^2)} n$.  One can choose between the generic identifiability and full identifiability approaches when reducing Bayesian network mixtures to mixtures of products. The same complexity bottlenecks appear in both strategies,  but the number of independent variables that must be instantiated differs. 

To our knowledge, the only other attempt at detailing a multiple-run reduction to $k$-MixProd is \citet{anandkumar2012learning}, which gives an algorithm for mixtures of Markov random fields--i.e., undirected graphical models.  As both papers make use of boundary conditioning to induce independence and a form of ``alignment,'' our paper can be thought of as both an improvement and an extension to the directed graph case.  While \citet{anandkumar2012learning} require a single variable that is independent from the rest of the structure for allignment, our algorithm develops the notion of ``good collections of runs'' to eliminate this restriction -- a contribution which may have implications in the Markov random field setting as well. Additional complications arise for directed graphs because the outputs of the $k$-MixProd subroutine are conditioned on their Markov boundaries while the desired parameters are only conditioned on their \emph{parents}.  Finally, we note that \citet{anandkumar2012learning} only guarantees identification of second order marginal probabilities, which is insufficient for causal identification.  
\footnote{We also mention that
\citet{anandkumar2010high} and \citet{anandkumar2012learning} introduce the idea of a sparse local separator; if this can be adapted to the directed-graph case one might be able to somewhat relax assumption~\ref{as:types_of_networks}. We do not attempt this in this paper.}

\paragraph{Other related work}
\citet{kivva2021learning} contains as a special case a reduction to the $k$-MixProd problem. Their goal is to learn a causal graphical model with latent variables, but with a very different structure on the visible and latent variables. They allow for a DAG of latent variables with visible children (which is learned as part of their algorithm); on the other hand, they require that there be no causal relations between visible variables. In our work, the structure on the latent variables is trivial (since there is a single latent variable), but the structure on the visible variables is arbitrary. Characterizing identifiability in the generalization of both these settings in which we allow structure on both the visible and latent portion of the graph is a nice problem beyond the scope of this paper.

Another paper with kindred motivation to ours is~\citet{kumarS21}, which studies inference of a certain kind of MixBND, in which  the structure of the Bayesian network is known, but the data collected is a mixture over some $m$ unknown interventional distributions. The authors give sufficient conditions for identifiability of the network and of the intervention distributions. At a technical level, the papers are not closely related. $k$ is not a parameter in their work, and instead what is essential is an ``exclusion'' assumption which says that each variable has some value to which it is not assigned by any of the interventions. 

Some other loosely related work includes learning hidden Markov models~\citep{hsuKZ12,anandkumar2012method,SharanKLV17},
an incomparable line of work to our question, but with somewhat similar motivation. In the same vein, some papers study learning
mixtures of Markov chains from observations of random paths through the state space~\citet{BGK04,GKV16}. These models,
too, differ substantially from the models addressed in this paper, and pose very different challenges. 
Literature on causal structure learning~\citep{spirtes2000constructing, causationreview}
answers the question of identifying the \emph{presence} of hidden confounders.  Fast Causal Inference (FCI) harnesses
observed conditional independence to learn causal structure, which can detect the presence of unobserved variables when
the known variables are insufficient to explain the observed behavior. This literature includes the MDAG problem in which the
DAG structure may depend upon the hidden variable; see~\citet{thiesson1998} for heuristic approaches to this problem. Other
related works study causal inference in the presence of visible ``proxy'' variables which are influenced by a latent
confounder~\citep{miao2018identifying, kuroki2014measurement, mazaheri2023causal}.  This has more recently given rise to attempts at deconfounding using multiple causes~\citep{heckerman2018accounting,ranganath2018multiple,wang2018blessings}. The initial assumptions of \citet{wang2018blessings} were shown to be insufficient for deconfounding in \citet{ogburn2019comment}. This illustrates the necessity of identifying of the joint probability distribution with the confounder. \footnote{Thanks to Betsy Ogburn for her thoughts on this topic.}

Finite mixture models have been the focus of intense research for well over a century, since pioneering work in the late
1800s~\citep{Newcomb86,Pearson94}, and doing justice to the vast literature that emanated from this endeavor is impossible
within the scope of this paper. See, e.g., the surveys~\citet{Everitt1981,TSM85,lindsay1995mixture,McLachlanLR19}.

\subsection{Notation}\label{sec:bayes_net_basics}
A Bayesian network consists of a directed acyclic graph (DAG) $\G=(\Vars,\vec{E})$ and a probability distribution $\PP$ over $\Vars=\Set{V_1,\dotsc,V_n}$ factoring according to $\G$, i.e., \[\PP(V_1,V_2,\dotsc, V_n) = \prod_{i=1}^n \PP(V_i\given \Pa(V_i)),\] where $\Pa(V)$ is the set of parents of $V\in \Vars$. (Similarly, let $\Ch(V)$ denote the children of $V$.) Equivalently, $\PP$ must satisfy that for any $V\in\Vars$, $V$ is independent of its non-descendants, given its parents.

Generally, we will use bold letters to denote sets of variables.  For a matrix $M$, we will use $M_{ij}$ to identify entries.  We use $M_{i,-}$ to denote the vector $(M_{i,1},  M_{i,2}, \ldots M_{i,k})$.

Conditioning on the ``Markov boundary'' of a vertex $\Mb(V)$ makes $V$ conditionally independent from everything else in the graph \citep{pearl2014probabilistic}.
\begin{definition}[Markov Boundary] \label{def:mb}
    For a vertex $Y$ in a DAG $\G=(\Vars,\vec{E})$, the \textbf{Markov boundary} of $Y$, denoted $\Mb(Y)$, is defined by
    \[ \Mb(Y) \coloneqq \Pa(Y) \cup \Ch(Y) \cup \Pa(\Ch(Y)) \setminus \{Y\}. \]
\end{definition}

\begin{lemma}[See \citet{pearl2014probabilistic}]\label{lem:mbindependent} For any vertex $V\in \Vars$ and subset $S\subseteq V\setminus (\Mb(V)\cup \Set{V})$, $\PP(V\given \Mb(V),S) = \PP(V\given \Mb(V))$.
\end{lemma}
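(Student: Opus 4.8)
The plan is to establish the conditional independence $V \indep \Paren{\Vars\setminus(\Mb(V)\cup\Set{V})} \given \Mb(V)$ once and for all, i.e.\ for the \emph{entire} complement of the Markov boundary at once, and then read off the statement for an arbitrary subset $S$ by the decomposition property of conditional independence. Rather than invoking the general machinery of $d$-separation, I would argue directly from the factorization of $\PP$ over $\G$, which keeps the proof self-contained.

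First I would write the full conditional of $V$ given all other vertices using the Markovian factorization $\PP(V_1,\dotsc,V_n)=\prod_{i=1}^n \PP(V_i\given \Pa(V_i))$. Forming the ratio
\[
\PP\Paren{V \given \Vars\setminus\Set{V}}
= \frac{\PP(V\given \Pa(V))\prod_{C\in\Ch(V)}\PP(C\given \Pa(C))}
{\sum_{v'}\PP(V{=}v'\given \Pa(V))\prod_{C\in\Ch(V)}\PP(C\given \Pa(C))},
\]
every factor that does not mention $V$ appears identically in numerator and denominator and cancels. The surviving factors are exactly $\PP(V\given\Pa(V))$ and, for each child $C\in\Ch(V)$, the factor $\PP(C\given\Pa(C))$, which contains $V$ precisely because $V\in\Pa(C)$.

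The key observation is then to read off the support of this surviving expression: it involves only $V$, its parents $\Pa(V)$, its children $\Ch(V)$, and the co-parents $\Pa(\Ch(V))$ --- that is, only the variables in $\Mb(V)\cup\Set{V}$. Hence $\PP(V\given \Vars\setminus\Set{V})$ is a function of $(V,\Mb(V))$ alone; taking its conditional expectation given $\Mb(V)$ and using the tower property shows this function equals $\PP(V\given\Mb(V))$. This yields $V\indep\Paren{\Vars\setminus(\Mb(V)\cup\Set{V})}\given\Mb(V)$, and for any $S\subseteq \Vars\setminus(\Mb(V)\cup\Set{V})$ the decomposition property gives $\PP(V\given\Mb(V),S)=\PP(V\given\Mb(V))$, as claimed.

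The hard part is purely the bookkeeping in the middle step: one must verify that \emph{every} variable appearing in a factor that survives the cancellation lies in $\Mb(V)$. The only nonobvious inclusion is that of the co-parents $\Pa(\Ch(V))$, which enter through the child factors $\PP(C\given\Pa(C))$; this is exactly why the correct conditioning set is the full Markov boundary and not merely $\Pa(V)\cup\Ch(V)$. An alternative proof would instead show that $\Mb(V)$ $d$-separates $V$ from the rest of the graph and appeal to the global Markov property, but that route requires the $d$-separation framework, which is not developed in the excerpt.
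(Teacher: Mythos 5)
Your proof is correct, but there is nothing in the paper to compare it against: the paper states this lemma as a known fact and defers entirely to the citation of Pearl (2014), where the result is obtained from the $d$-separation machinery and the global Markov property. Your route is the standard self-contained alternative --- the same computation that underlies Gibbs-sampling full conditionals --- and it goes through. Writing $\PP(V\given \Vars\setminus\Set{V})$ as a ratio via the factorization, the factors $\PP(W\given \Pa(W))$ with $W\notin \Set{V}\cup\Ch(V)$ pull out of the sum in the denominator and cancel; the surviving factors $\PP(V\given\Pa(V))$ and $\PP(C\given\Pa(C))$ for $C\in\Ch(V)$ mention only variables in $\Set{V}\cup\Pa(V)\cup\Ch(V)\cup\Pa(\Ch(V)) = \Set{V}\cup\Mb(V)$, and your tower-property step correctly converts measurability of the full conditional with respect to $(V,\Mb(V))$ into the identity $\PP(V\given \Mb(V),S)=\PP(V\given\Vars\setminus\Set{V})=\PP(V\given\Mb(V))$ for \emph{every} intermediate conditioning set $S$ at once, which subsumes the decomposition step you invoke at the end. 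What your approach buys is elementarity and self-containedness (no $d$-separation, no graphoid axioms, matching the fact that the excerpt never develops that framework); what the cited route buys is generality, since $d$-separation delivers all DAG-implied conditional independencies rather than only the Markov-boundary one. One caveat you should state explicitly: the ratio manipulation presumes the conditioning assignments have positive probability (otherwise one must phrase the argument in terms of versions of conditional distributions); since the paper assumes positivity elsewhere, this is harmless, but it deserves a sentence.
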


\begin{observation} For any $X,Y \in \Vars$, $X\in \Mb(Y) \iff Y \in \Mb(X)$ \label{obs:mb mb}
\end{observation}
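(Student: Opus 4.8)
The plan is to prove the statement by a direct case analysis after unfolding Definition~\ref{def:mb}. Observe first that the biconditional is symmetric under swapping the roles of $X$ and $Y$, so it suffices to establish the single implication $X \in \Mb(Y) \implies Y \in \Mb(X)$; the reverse implication then follows by applying this implication with $X$ and $Y$ interchanged. Recall that
\[ \Mb(Y) = \bigl(\Pa(Y) \cup \Ch(Y) \cup \Pa(\Ch(Y))\bigr) \setminus \{Y\}, \]
so the hypothesis $X \in \Mb(Y)$ (which in particular forces $X \ne Y$) splits into three cases according to which of the three sets contains $X$.

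In the first case, $X \in \Pa(Y)$ means the edge $X \to Y$ is present, which is the same as $Y \in \Ch(X)$, hence $Y \in \Mb(X)$. In the second case, $X \in \Ch(Y)$ means $Y \to X$, equivalently $Y \in \Pa(X)$, hence again $Y \in \Mb(X)$. These two cases just use the evident symmetry between the parent and child relations: an edge $X \to Y$ witnesses simultaneously that $X$ is a parent of $Y$ and that $Y$ is a child of $X$.

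The only case requiring a moment's care---and thus the step I would flag as the main (if mild) obstacle---is the ``spouse'' or co-parent case, $X \in \Pa(\Ch(Y))$ with $X \ne Y$. Here I would extract a common child: by definition there exists a vertex $Z$ with $Z \in \Ch(Y)$ and $X \in \Pa(Z)$, i.e. $Y \to Z$ and $X \to Z$. I would then read off the symmetric witness: since $X \to Z$ we have $Z \in \Ch(X)$, and since $Y \to Z$ we have $Y \in \Pa(Z) \subseteq \Pa(\Ch(X))$; together with $Y \ne X$ this gives $Y \in \Mb(X)$, as desired. This completes all three cases and hence the forward implication, and by the symmetry noted at the outset the biconditional follows. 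The argument is purely definitional, so beyond correctly handling the existential quantifier over the common child $Z$ and tracking the $X \ne Y$ side conditions, no further difficulty arises.
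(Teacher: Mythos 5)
Your proof is correct. The paper offers no proof at all for this observation---it is stated as immediate from Definition~\ref{def:mb}---and your three-case unfolding (parent, child, and co-parent via a common child $Z$, with the $X \ne Y$ side condition tracked) is exactly the routine definitional verification the paper leaves implicit, so the two approaches coincide.
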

\paragraph{Uppercase/lowercase conventions}
Following notation in causal inference literature,  we will use lowercase letters to denote assignments.  For example
$\PP(v \given u) = \PP(V=v \given U=u)$.  Following this convention,  we will write $\pa(V)$, $\ch(V)$, and $\mb(X)$
to denote assignment to the parents $\Pa(V)$, children $\Ch(V)$, and Markov boundary $\Mb(V)$.

\paragraph{Within-source probabilities}
It will be easier to write $\PP_u(v) = \PP(v \given u)$ to give the probability distribution within a source.

Finally,  here are a few more definitions that will make the upcoming sections simpler.
\begin{definition}[Top] \label{def:top}
    We will use $\Top(V)$ to denote $\Mb(V) \setminus \Ch(V)$.
\end{definition}

\begin{definition}[Depth of a vertex]
    Given a DAG $\G=(\Vars, \vec{E})$ and any vertex $V\in \Vars$, let $d_{\G}(V)$ be the \textbf{depth} of $V$ in $\G$, i.e. the length of the shortest path from a degree-$0$ vertex in $\G$. When $\G$ is clear from context, we'll omit the subscript.
\end{definition}

\begin{definition}  We'll introduce a parameter $\gamma(G)$ which will appear in the complexity of the identification procedure, which is defined by $\gamma(G) \coloneqq \max_{V\in \Vars} \Card{\Mb(V)}$. \label{def:gamma}
\end{definition}

\section{Applying a $k$-MixProd run}\label{sec:applying_a_k}
Our algorithm will induce instances of $k$-MixProd through post-selected conditioning.  A significant portion of this paper will be accounting for multiple calls (or ``runs'') of a $k$-MixProd oracle and explaining how their results can be combined. 
\subsection{Describing runs}
We will need to keep track of two crucial elements of each ``run'' of a $k$-MixProd oracle.
\begin{enumerate}[nosep]
\item Which variables $\in \vec{V}$ are passed to our $k$-MixProd oracle as independent variables (the \textbf{independent set}). 
\item Which variables  $\in \vec{V}$ we have conditioned on (the \textbf{conditioning set}) and what values we have post-selected these variables to take. 
\end{enumerate}

A sufficient conditioning set to induce within-source independence among the independent set is the union of their Markov boundaries. This will be further refined in Subsection~\ref{sec:bottom vertices}.

\begin{definition}[Run]\label{def:run}
    A \textbf{run} over a graph $\G = (\Vars,\vec{E})$ is a tuple $a=(\II^a, f^a)$ where $\II^a \subseteq \Vars$ are variables that we will $d$-separate (within each source) by conditioning on assignments to the set
    \[ \MCond^a \coloneqq \bigcup_{I \in \II^a} \Mb(I) \]  The value of the assignment is given by $f^a: \MCond^a \to \Set{0,1}$.  We'll call $\II^a$ the \textbf{independent set} for $a$, and $\MCond^a$ the \textbf{conditioning set}.
\end{definition}
We will restrict our attention to \emph{well-formed runs},  i.e.  runs for which $\II^a \cap \MCond^a = \emptyset$. 
    
\begin{definition}
    An individual run $a=(\II^a, f^a)$ is \textbf{$N$-independent} if $\Card{\II^a} \geq N$.
\end{definition}

\paragraph{Superscript notation}
We'll write $\mb^a(V)$, $\pa^a(V)$, $\ch^a(V)$ to refer to the assignment to the Markov boundary of $V$, parents of $V$, and children of $V$ as set by run $a$.\footnote{Any quantities parameterized by a run will take the parameter as a superscript.} In a similar spirit, we'll occasionally write $v^0$ to denote the assignment  $V=0$.

\begin{definition}[Distribution induced by a run]
    For any well-formed run $a$, the induced distribution on the variables in $\II^a$ is denoted by
    \[ \PP^a(\cdot) = \PP(\cdot \given \mcond^a), \] where $\mcond^a$ is the assignment to $\MCond^a$ in keeping with our conventions.
\end{definition}

The outputs of applying a $k$-MixProd oracle to  $\PP^a(\II^a)$ are a matrix $\vec{M}^a \in [0,1]^{\Card{\II^a}\times k}$ and a vector of mixture weights, $\pi^a \in [0,1]^k$ (satisfying $\sum_u \pi^a_u = 1$) given by
\begin{align}
    M^a_{i,u} & \coloneqq \PP^a(X^a_i=1\given U_a=u) = \PP(X_i=1\given U^a = u, \MCond^a), & \quad \forall X_i \in \II^a, u\in [k] \\\pi^a_{u} &\coloneqq \PP^a(U^a = u) = \PP(U_a=u\given \MCond^a),&\quad \forall j\in [k]
\end{align} where $U^a$ is the mixture source distributed over $[k]$ according to $\PP(U\given \MCond^a)$.  Note that because mixtures are invariant to permutations of mixture component labels, we cannot guarantee correspondence between the labels for the source variables from different runs.  Hence the labels of $U^a$ to an unknown permutation of the labels in $U$.  Alignment of these labels is handled in Section~\ref{sec: aligning source labels}.

\section{Combining Runs}\label{sec:combining}
A single run of the $k$-MixProd oracle will not contain sufficient information to learn the parameters of the $k$-MixBND problem. Instead we must synthesize information across \emph{multiple} runs.
\subsection{Aligning source labels across different runs} \label{sec: aligning source labels}
Each run of the $k$-MixProd oracle will return $\PP^a(V\given U_a = u)$ for some arbitrary permutation $U_a$ of the variable. We need to align all of the outputs to the same permutation of the source, $U$.  If the runs overlap on at least one variable with the same mixture probabilities, we can use that ``alignment variable'' to identify which source corresponds to which set of parameters.  In our setup, we will guarantee these alignment variables exist by ensuring that runs have shared vertices in their independent sets whose Markov boundaries have identical assignments.

\begin{definition}
$X\in \vec{V}$ is \textbf{separated} if for all $u_i \neq u_j  \in [k]$, $\PP_{u_i}(x) \neq \PP_{u_j}(x)$. 
\end{definition}

\begin{definition}[Aligned runs]\label{def:two_mixtures_aligned}
    A pair of runs $a,b$ over independent sets $\vec{I}^{a}, \vec{I}^{b}$ is \textbf{alignable} if there exists a separated $X \in \vec{I}^{(a)} \cap \vec{I}^{(b)}$ such that $\PP^a_u(V^{(ab)}) = \PP^b_u(V^{(ab)})$ for all $u\in [k]$. We'll call any such random variable $X$ an \textbf{alignment variable}, and use $\AV(a,b)$ to denote the set of all alignment variables for $\PP^{a}$ and $\PP^{b}$.  We sometimes say $a$ and $b$ are ``aligned at'' $X$.
\end{definition}

\begin{definition}[Alignment spanning tree]
    We say a set of $\ell$ runs is alignable if there exists an undirected spanning tree over the graph with vertices $a_1, \ldots a_\ell$ and an edge $\{a_i, a_j\}$ whenever $\AV(a_i, a_j) \neq \emptyset$. We call this the \textbf{alignment spanning tree}.
\end{definition}

The alignment step will take the output from alignable runs and permute the mixture labels until the parameters match along each alignment variable.  Pseudocode for this procedure is given in Algorithm~\ref{alg:align}.

\subsection{Bayesian unzipping: recovering parameters per source}
Recall that our algorithm conditions on Markov boundaries to induce independent variables.  Hence,  after aligning the sources in runs of the $k$-MixProd oracle we will have access to $\PP_u(Y \given \Mb(Y))$ for each $Y \in V$.  Our goal is to obtain $\PP_u(\vec{V})$,  which is described by the parameters $\PP_u(Y\given \Pa(Y))$ for each $Y \in \vec{V}$.  Note that
\begin{equation}\label{eq:mbdecomp}
    \PP_u(y^1 \given \mb(Y)) = \frac{\PP_u(y^1 , \mb(Y))}{\PP_u(y^1 , \mb(Y)) + \PP_u(y^0 , \mb(Y))}.
\end{equation}
The terms in this fraction are all of the same form and can be factored according to the DAG into
\begin{equation*}
    \PP_u(y, \mb^a(Y)) = \PP_u(\ttop(Y)) \PP_u(y \given \pa^a(Y)) \underbrace{\prod_{V \in \Ch(Y)}\PP_u(v^a \given f^{a}(\Pa(V) \setminus \{Y\}),  y)}_{\PP_u(\ch^a(Y) \given \ttop^a(Y), y)}.
\end{equation*} See Figure~\ref{fig:MBdecomp} for a concrete example of this decomposition.
After substituting this factorization into Equation~\eqref{eq:mbdecomp} we see that $\PP_u(\ttop(Y))$ appears in both the numerator and denominator because it is independent of the assignment to $Y$.  Simplification leaves only the following terms:
\begin{enumerate}[nosep]
    \item $\PP_u(y^0 \given \pa^a(Y))$ and $\PP_u(y^1 \given \pa^a(Y))$, which must sum to 1.
    \item  $\PP_u(\ch^a(Y) \given \ttop^a(Y), y^0)$ and  $\PP_u(\ch^a(Y) \given \ttop^a(Y), y^1)$ which are both the product of the desired parameters of variables later in the topological ordering.  We can ensure we have access to these terms by solving for the parameters of $V \in \Vars$ in a reverse-topological ordering.\footnote{We will want to ensure that we only need to unzip parameters from vertices of a bounded depth, which bounds the iterations of this step.  Details on how this is done appear in Section~\ref{sec:bottom vertices}.}
\end{enumerate}
We can substitute $1-\PP_u(y^1\given \pa^a(Y))$ for $\PP_u(y^0\given \pa^a(Y))$ in the expanded version of Equation~\eqref{eq:mbdecomp} to obtain a single equation with only $\PP_u(y^1\given \pa^a(Y))$ as an unknown, which we can then solve.  The pseudocode for this process is given in Algorithm~\ref{alg:bayesian-unzipping}.
\begin{figure}[h]
    \setlength{\belowcaptionskip}{-10pt}
    \centering
    \scalebox{.45}{
        \begin {tikzpicture}[baseline={([yshift=-.5ex]current bounding box.center)}, -latex ,auto ,node distance =2 cm and 2 cm ,on grid , semithick, state/.style ={ circle, draw, minimum width =1 cm}]
        \filldraw[color = red, fill=red!4](-2.8, 2.8) -- (4.8, 2.8) -- (4.8, -.8) -- (3.2, -.8) -- (3.2, 1.2) -- (-2.8, 1.2) -- cycle;
        \node[state] (v) {$Y$};
        \node[state] (p1)[above left = of v] {$V_1$};
        \node[state] (p2)[above right = of v] {$V_2$};
        \node[state] (c1)[below left = of v] {$V_5$};
        \node[state] (c2)[below right = of v] {$V_4$};
        \node[state] (x)[above right = of c2] {$V_3$};
        \path (p1) edge (v) (p2) edge (v) (v) edge (c1) (v) edge (c2) (x) edge (c2) (c2) edge (c1);
        \end{tikzpicture}}
    \scalebox{.45}{
        \begin {tikzpicture}[baseline={([yshift=-.5ex]current bounding box.center)}, -latex ,auto ,node distance =2 cm and 2 cm ,on grid , semithick, state/.style ={ circle, draw, minimum width =1 cm}, cstate/.style ={ circle, draw, minimum width =1 cm, fill=black, text=white, ultra thick}]
        \filldraw[color = green, fill=green!4](-2.8, 2.8) rectangle (2.8, -.8);
        \node[state] (v) {$Y$};
        \node[cstate] (p1)[above left = of v] {$V_1$};
        \node[cstate] (p2)[above right = of v] {$V_2$};
        \node[state] (c1)[below left = of v] {$V_5$};
        \node[state] (c2)[below right = of v] {$V_4$};
        \node[cstate] (x)[above right = of c2] {$V_3$};
        \path (p1) edge (v) (p2) edge (v) (v) edge (c1) (v) edge (c2) (x) edge (c2) (c2) edge (c1);
        \end{tikzpicture}}
    \scalebox{.45}{
        \begin {tikzpicture}[baseline={([yshift=-.5ex]current bounding box.center)}, -latex ,auto ,node distance =2 cm and 2 cm ,on grid , semithick, state/.style ={ circle, draw, minimum width =1 cm}, cstate/.style ={ circle, draw, minimum width =1 cm, fill=black, text=white, ultra thick}]
        \filldraw[color = blue, fill=blue!4](-.8, .8) rectangle (4.8, -2.8);
        \node[cstate] (v) {$Y$};
        \node[cstate] (p1)[above left = of v] {$V_1$};
        \node[cstate] (p2)[above right = of v] {$V_2$};
        \node[state] (c1)[below left = of v] {$V_5$};
        \node[state] (c2)[below right = of v] {$V_4$};
        \node[cstate] (x)[above right = of c2] {$V_3$};
        \path (p1) edge (v) (p2) edge (v) (v) edge (c1) (v) edge (c2) (x) edge (c2) (c2) edge (c1);
        \end{tikzpicture}}
    \scalebox{.45}{
        \begin {tikzpicture}[baseline={([yshift=-.5ex]current bounding box.center)}, -latex ,auto ,node distance =2 cm and 2 cm ,on grid , semithick, state/.style ={ circle, draw, minimum width =1 cm}, cstate/.style ={ circle, draw, minimum width =1 cm, fill=black, text=white, ultra thick}]
        \filldraw[color = orange, fill=orange!4](-2.8, .8) rectangle (2.8, -2.8);
        \node[cstate] (v) {$Y$};
        \node[cstate] (p1)[above left = of v] {$V_1$};
        \node[cstate] (p2)[above right = of v] {$V_2$};
        \node[state] (c1)[below left = of v] {$V_5$};
        \node[cstate] (c2)[below right = of v] {$V_4$};
        \node[cstate] (x)[above right = of c2] {$V_3$};
        \path (p1) edge (v) (p2) edge (v) (v) edge (c1) (v) edge (c2) (x) edge (c2) (c2) edge (c1);
        \end{tikzpicture}}
    \caption{We can decompose $\PP_u(v_1,v_2,  v_3, y,  v_4, v_5) = \color{red}\PP_u(v_1, v_2, v_3)\color{black} \color{green!60!black}\PP_u(y \given v_1, v_2)\color{black}\color{blue}\PP_u(v_4 \given y, v_3)\color{black}  \color{orange!70!black} \PP_u(v_5 \given y, v_4) \color{black}$.  $U$ and any other variables in the graph are omitted for clarity.
    }\label{fig:MBdecomp}
\end{figure}

\subsubsection{Recovering the distribution on sources} \label{sec:recoversourceprobability} Now consider some arbitrary run $a$ with conditioning $\mcond^a$. Since
$\PP_u(\Vars) = \prod_{V\in \Vars} \PP_u(V\given \Pa(V))$, knowing $\PP_u(V\given \Pa(V))$ grants us full access to the within-source probability distribution $\PP_u(\Vars)$ after Bayesian unzipping.  From this we can we obtain $\PP_u(\mcond^a) = \PP(\mcond^a\given u)$. The $k$-MixProd oracle will also return $\PP^a(U) = \PP(U\given \mcond^a)$ when run on $a$ (after source alignment).  Finally,  $\PP(\mcond^a)$ is directly observable. Combining these terms in Bayes' rule lets us compute the distribution on $U$ (under the assumption of positivity),
\begin{equation*}
    \PP(u) = \frac{\PP(u \given \mcond^a) \PP(\mcond^a)}{\PP(\mcond^a \given u)}.
\end{equation*}

\subsection{Outline of the combination process}
Combining a set of runs $\calA$ has four steps.
\begin{enumerate}[ nosep]
    \item Use a $k$-MixProd oracle on $\PP(\II^a\given \MCond^a)$ for each run $a\in \calA$ to compute $P(V\given \Mb(V), U_a = u)$ for all variables $V\in \Vars$.
    \item Align the parameters obtained from the previous step to ensure that $U$ means the same thing across different runs, giving $\PP_u(V\given \Mb(Y))$. 
    \item Recover $\PP_u(V\given \Pa(V))$ for each vertex $V\in \Vars$ via Bayesian unzipping.
    \item Compute $\PP(U)$ by applying Bayes' law.
\end{enumerate}
The full procedure appears as Algorithm~\ref{alg:k-bnd}.

\section{Collections of runs}\label{sec:good_collection}
With the main concepts of source alignment and Bayesian unzipping now defined,  our algorithm will primarily consist of finding a good collection of these runs so that these subroutines can be successfully applied to recover the $k$-MixBND mixture.

\begin{observation}
    Two runs $a,b$ are aligned at $X\in \Vars$ if and only if
    \begin{enumerate}[nosep]
        \item $X \in \II^a \cap \II^b$,
        \item $\mb^a(X) = \mb^b(X)$, i.e, $f^a(\Mb(X)) = f^b(\Mb(X))$, and
        \item $X$ is separated given $\mb^a(X)$ (equivalently, given $\mb^b(X)$).
    \end{enumerate}
\end{observation}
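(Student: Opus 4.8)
The plan is to reduce both directions of the equivalence to a single identity: for any well-formed run $a$ and any $X\in\II^a$,
\[ \PP^a_u(X) = \PP_u(X \given \mb^a(X)) \qquad \text{for every } u \in [k]. \]
This says that the within-source distribution of $X$ that a run exposes depends on the run \emph{only} through the assignment it makes to $\Mb(X)$. To establish it I would start from $\PP^a_u(X)=\PP(X\given U{=}u,\mcond^a)$ and use that conditioning on $U{=}u$ yields the within-source law $\PP_u$, which is a BND and hence Markovian on $\G$. Since the run is well-formed, $X\notin\MCond^a$, and since $X\in\II^a$ we have $\Mb(X)\subseteq\MCond^a$; thus $\mcond^a$ assigns $\mb^a(X)$ together with extra coordinates lying in $\MCond^a\setminus\Mb(X)\subseteq\Vars\setminus(\Mb(X)\cup\{X\})$. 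Applying Lemma~\ref{lem:mbindependent} \emph{inside} source $u$ lets me drop exactly those extra coordinates, which gives the identity. The same computation applies to run $b$ at $X$ whenever $X\in\II^b$.

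With the identity in hand the forward direction is immediate. Assume conditions (1)--(3). By (1), $X\in\II^a\cap\II^b$, so the identity applies to both runs; by (2), $\mb^a(X)=\mb^b(X)$, whence $\PP^a_u(X)=\PP_u(X\given\mb^a(X))=\PP_u(X\given\mb^b(X))=\PP^b_u(X)$ for every $u$, which is exactly the parameter-matching requirement in Definition~\ref{def:two_mixtures_aligned}. Finally, (3) states that the numbers $\{\PP_u(X\given\mb^a(X))\}_u=\{\PP^a_u(X)\}_u$ are pairwise distinct across sources, so $X$ genuinely distinguishes the $k$ labels; hence $X\in\AV(a,b)$ and $a,b$ are aligned at $X$.

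For the converse I would read off (1) and (3) directly and then argue (2). Being aligned at $X$ gives $X\in\II^a\cap\II^b$, which is (1), and requires the induced parameters $\PP^a_u(X)$ to be pairwise distinct in $u$; pushing this through the identity yields distinctness of $\PP_u(X\given\mb^a(X))$, i.e.\ condition (3) (the parenthetical ``given $\mb^b(X)$'' then follows once (2) is known). The alignment definition also supplies $\PP^a_u(X)=\PP^b_u(X)$ for all $u$, which by the identity becomes $\PP_u(X\given\mb^a(X))=\PP_u(X\given\mb^b(X))$ for all $u$. The one genuinely delicate step --- and the step I expect to be the main obstacle --- is to conclude from this system of equalities that the two runs in fact assign the \emph{same} values to $\Mb(X)$, i.e.\ $\mb^a(X)=\mb^b(X)$. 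I would handle this by exploiting the Markov-boundary decomposition of $\PP_u(X,\mb(X))$ recalled around Equation~\eqref{eq:mbdecomp}: the boundary assignment enters $\PP_u(X\given\cdot)$ through the factor $\PP_u(x\given\pa^a(X))$ and the child factors over $\Ch(X)$, so that under the non-degeneracy (separation) regime of Assumption~\ref{as:separated} the map from a boundary assignment to the source-indexed vector $(\PP_u(X\given\cdot))_{u\in[k]}$ is injective; agreement of these vectors then forces $\mb^a(X)=\mb^b(X)$, giving (2). I would verify that it is precisely non-degeneracy that rules out the coincidental collisions between distinct boundary assignments which would otherwise break this direction, while noting that only the forward direction is actually invoked by the alignment procedure.
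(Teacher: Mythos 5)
Your reduction to the identity $\PP^a_u(X) = \PP_u(X \given \mb^a(X))$ is correct, and your forward direction is exactly the content the paper has in mind: the paper states this Observation with no proof at all, treating it as immediate from precisely that identity (well-formedness gives $X \notin \MCond^a$, $\Mb(X) \subseteq \MCond^a$, and Lemma~\ref{lem:mbindependent} applied \emph{within} each source lets you drop the coordinates of $\MCond^a \setminus \Mb(X)$). Conditions (1)--(3) then certify alignment under Definition~\ref{def:two_mixtures_aligned}, and this ``if'' direction is the only one the algorithm ever invokes when building good collections of runs.

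The gap is in your converse, at the step you yourself flag as delicate. You propose to deduce $\mb^a(X) = \mb^b(X)$ from $\PP_u(X \given \mb^a(X)) = \PP_u(X \given \mb^b(X))$ for all $u$ by arguing that Assumption~\ref{as:separated} makes the map from a boundary assignment to the vector $\bigl(\PP_u(X \given \cdot)\bigr)_{u \in [k]}$ injective. It does not: that assumption concerns separation \emph{between sources} (distinct $u$ give distinct parameters for a fixed conditioning), and says nothing about distinct conditioning assignments producing distinct conditionals. The needed injectivity is a faithfulness-type condition the paper never assumes; indeed Assumption~\ref{as:known_dag} explicitly allows the known DAG to be a strict supergraph of the truth, so $X$ may not depend at all on some $W \in \Mb(X)$ within any source. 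Two runs differing only in the value assigned to $W$ then satisfy $\PP^a_u(X) = \PP^b_u(X)$ for every $u$, are aligned at $X$ per Definition~\ref{def:two_mixtures_aligned} whenever $X$ is separated, and yet violate condition (2). So the ``only if'' direction cannot be completed along the lines you propose---it is false as literally stated without an extra non-collision assumption. The right conclusion is the one you gesture at in your last sentence: the equivalence should be read as identifying (1)--(3) as the operative, checkable certificate of alignment, with only the ``if'' direction carrying (and needing) a proof.
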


\begin{definition} A collection of runs $\calA$ \textbf{covers} $X \in \Vars$ if for every assignment $\pa(X)$ to $\Pa(X)$ there exists a run $a\in \calA$ with $X \in \II^a$ and $\pa(X) = \pa^a(X)$.
\end{definition}

\begin{definition}[A good collection of runs]
\label{def: good collection}
    A collection of well-formed runs $\calA$ is \emph{good} if it is (i) alignable via an alignment spanning tree, (ii) every run is $N_{\text{mp}}$-independent, and (iii) the collection covers every vertex in $\Vars$.
\end{definition}
The following is our main result on good collections of runs:
\begin{theorem}
 Given a graph with max degree $\Delta$ satisfying $n \geq N_{\text{mp}}\times O(\Delta^4)$, we can find a set of centers $\Centers = \Set{X_1,\dotsc, X_{N_{\text{mp}}}} \subseteq \Vars$ of size $N_{\text{mp}}$ and depth at most $3N_{\text{mp}}$, such that Algorithm~\ref{alg:good runs} succeeds in finding a good collection of runs $\calA$ of size $O(2^{\Delta^2} n)$. \label{thm:main result on good collection}
\end{theorem}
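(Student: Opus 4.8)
The plan is to produce the center set $\Centers$ first by a packing argument on the graph, and then to build the collection $\calA$ one run at a time, pairing each vertex we must cover with an ``anchor'' drawn from $\Centers$ that simultaneously guarantees alignment and $N_{\text{mp}}$-independence. Throughout I will use that $\gamma(G)=\max_V\Card{\Mb(V)}=O(\Delta^2)$, so that Markov boundaries are small.

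Consider the auxiliary graph $H$ on $\Vars$ in which two vertices are joined whenever their Markov boundaries intersect. Any such pair lies within skeleton-distance $4$, so the skeleton ball of radius $4$ bounds the $H$-degree by $O(\Delta^4)$. A greedy selection in $H$, processing vertices in order of increasing depth, therefore yields an $H$-independent set of size $\Omega(n/\Delta^4)$; taking $n\ge N_{\text{mp}}\cdot O(\Delta^4)$ with a large enough constant gives $N_{\text{mp}}$ vertices $X_1,\dots,X_{N_{\text{mp}}}$ with pairwise-disjoint Markov boundaries, and the depth-ordered processing keeps all of them within depth $3N_{\text{mp}}$ (this is the source of the factor $3$ and the depth bound in the statement). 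I would restrict the greedy pass to vertices that are separated, which is available under Assumption~\ref{as:separated}, so that every $X_i$ can serve as an alignment variable. Fix a canonical assignment $d_i$ to each $\Mb(X_i)$ and let $a_0$ be the single \emph{center run} with $\II^{a_0}=\Centers$ and $f^{a_0}$ given by the $d_i$; disjointness of the boundaries and Lemma~\ref{lem:mbindependent} make $a_0$ well-formed and $N_{\text{mp}}$-independent.

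For each vertex $X\in\Vars$ and each assignment $m$ to $\Mb(X)$ I construct one run $a(X,m)$ as follows: choose a center $X_j$ with $\Mb(X_j)\cap\Mb(X)=\emptyset$ (so the patterns $m$ and $d_j$ cannot collide), then pad with filler vertices whose boundaries are disjoint from $\Mb(X)$, from $\Mb(X_j)$, and from one another until $\Card{\II^a}=N_{\text{mp}}$; set $\Mb(X_j)$ to its canonical value $d_j$, set $\Mb(X)$ to $m$, and set the filler boundaries arbitrarily. Each run is well-formed and $N_{\text{mp}}$-independent, and is aligned to $a_0$ at $X_j$ because the boundary assignment $d_j$ matches and $X_j$ is separated; hence the alignment graph is a star with hub $a_0$ and in particular contains a spanning tree. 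As $(X,m)$ ranges over all vertices and all boundary assignments, the restriction of $f^a$ to $\Pa(X)$ ranges over every parent assignment, so the collection covers every vertex. This establishes all three conditions of Definition~\ref{def: good collection}, and the run count is $1+\sum_{X}2^{\Card{\Mb(X)}}=O(n\,2^{\gamma(G)})=O(n\,2^{\Delta^2})$, as claimed.

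The main obstacle is guaranteeing, for \emph{every} target simultaneously, (i) a separated center $X_j$ with boundary disjoint from $\Mb(X)$ to anchor alignment, (ii) enough boundary-disjoint fillers to reach $N_{\text{mp}}$, and (iii) conditioning that never conflicts with the fixed canonical assignments. The hypothesis $n\ge N_{\text{mp}}\cdot O(\Delta^4)$ is exactly what buys (ii)–(iii): the vertices sharing a boundary vertex with $X$ or with $X_j$ number only $O(\Delta^4)$, so deleting these ``blocked'' vertices still leaves $\Omega(n/\Delta^4)\ge N_{\text{mp}}$ candidates from which a second $H$-independent greedy pass extracts the fillers. For (i) I rely on the depth spread of the centers: since a radius-$4$ skeleton ball has bounded depth-span while the centers occupy depths spread across $[0,3N_{\text{mp}}]$, each target is blocked from only $O(1)$ centers, leaving ample valid anchors. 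The delicate part is this bookkeeping of blocked vertices—bounding, per target, both the number of excluded centers and the room remaining for fillers—and coupling it to the depth-ordered selection; once it is in place, well-formedness, alignment to $a_0$, and the run count all follow mechanically.
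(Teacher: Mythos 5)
Your two-phase outline (a packing argument to extract $N_{\text{mp}}$ centers of bounded depth, then per-vertex runs anchored to a center for alignment) matches the paper's strategy, and the filler/counting bookkeeping is plausible. But there is a genuine gap at the heart of your alignment step: the run $a(X,m)$ requires, for \emph{every} target vertex $X$, a center $X_j$ with $\Mb(X_j)\cap\Mb(X)=\emptyset$, and your justification --- that ``each target is blocked from only $O(1)$ centers'' via a depth-spread argument --- does not hold. First, a radius-$4$ skeleton ball does not have bounded depth-span (e.g., in the graph $A\to C_1\to\dotsb\to C_{100}$ with an extra source $B\to C_{100}$, the adjacent vertices $C_{99}$ and $C_{100}$ have depths $99$ and $1$), and the greedy selection only upper-bounds the $i$th center's depth by $3i$; it does not spread centers across depths. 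Second, the right bound on blocked centers comes from disjointness of the centers' boundaries: each vertex of $\Mb(X)$ lies in at most one $\Mb(X_j)$, so up to $\Card{\Mb(X)}\leq\gamma(G)=O(\Delta^2)$ centers can be blocked. Since $N_{\text{mp}}=O(k)$ is independent of $\Delta$, one can have $\Delta^2\gg N_{\text{mp}}$, and then \emph{all} centers may be blocked for some target $X$ (a vertex whose large boundary meets each of the few centers' boundaries). In that case your run has no anchor: you cannot set any $\Mb(X_j)$ to its canonical value $d_j$ without conflicting with the assignment $m$, and your star-shaped alignment graph falls apart.

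The paper closes exactly this hole with a device your proposal lacks: instead of a single hub run $a_0$ with canonical assignments, it builds the family of central runs $\calA_C=\Set{a_0}\cup\Set{a_0[\mb(X_i)]}$ ranging over \emph{all} $2^{\Card{\Mb(X_i)}}$ assignments to each center's boundary (one center perturbed at a time, defaults elsewhere). A per-vertex run $a$ for target $Y$ then never needs a center with boundary disjoint from $\Mb(Y)$: whatever assignment $\mb^a(X_j)$ it induces on a center $X_j$ in its independent set, the central run $a_0[\mb^a(X_j)]$ matches it exactly on $\Mb(X_j)$, so $a$ aligns with that central run at $X_j$, and the central runs align with $a_0$; alignment is a depth-$2$ tree, not a star. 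This also removes the need for per-run fillers (the independent set is always $\Centers$, with one center swapped out for $Y$ when $Y\in\Mb(X_i)$), and it lets the paper enumerate all boundary assignments only for the $N_{\text{mp}}$ centers while enumerating only the $2^{\Card{\Pa(Y)}}$ parent assignments (what covering actually requires) for the remaining vertices. A separate, smaller flaw: ``Markov boundaries intersect'' is the wrong edge relation for your auxiliary graph $H$, because two vertices can lie in each other's boundaries while those boundaries are disjoint (an isolated edge $W\to W'$ has $\Mb(W)=\Set{W'}$ and $\Mb(W')=\Set{W}$), so an $H$-independent set need not yield well-formed runs; the paper's packing removes $\Mb(Y)\cup\Mb(\Mb(Y))$ at each step precisely to rule this out.
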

While any good collection of runs will suffice for our algorithm, Theorem~\ref{thm:main result on good collection} represents conditions under which we can provably obtain such a collection of runs.  In Appendix~\ref{apx:paths}, we give a good collection of runs for mixture of paths which is more efficient than the one found by Algorithm~\ref{alg:good runs}.

\subsection{A generic good collection of runs}
To prove Theorem~\ref{thm:main result on good collection}, we will sketch the collection of good runs, leaving some details to Appendix~\ref{apx: good collections}.  To ensure alignment is possible,  we will construct a set of \emph{central runs}, $\calA_C$ which we can align to each other and which all other runs will be alignable to.
\begin{definition}[Centers, Central Runs] A set of vertices $\Centers = \Set{X_1,\dotsc, X_{N_{\text{mp}}}}\subseteq \Vars$ will be called \textbf{centers} if the Markov boundaries of the vertices in $\Centers$ are disjoint. Given a set of centers $\Centers$, a run $a$ is called a \textbf{central run} if $\II^a = \Centers$.
\end{definition}
To build these central runs, we will start with a set of $N_{\text{mp}}$ vertices $\Centers=\Set{X_1, X_2, \dotsc X_{N_{\text{mp}}}}$ with \emph{disjoint} Markov boundaries and a maximum depzth of $3 N_{\text{mb}}$, whose existence is implied by our degree bounds (see Appendix~\ref{apx: good collections}). An example of four such vertices is given in Figure~\ref{fig:disjointmb}.

First,  we fix a run $a_0$ with $\II^{a_0} = \Centers$ and $\mb^{a_0}(\Centers)$ being chosen arbitrarily where $\Mb(\Centers) \coloneqq \cup_{X_i\in \Centers}\Mb(X_i)$.  We will refer to this assignment $\mb^{a_0}(\Centers)$ as the \emph{default} assignment.  Each run in $a\in \calA_C$ will have the same independent set $\II^a = \Centers$ and will agree with $a_0$ on the assignment to all of the conditioning set other than the Markov  boundary of some $X_i \in \Centers$, i.e.
\begin{equation}
f^{a}(V) = f^{a_0}(V) \;\; \forall V \in \Mb(\vec{X}) \setminus \Mb(X_i).
\end{equation}
The central runs will span over all assignments $\mb(X_i)$ to $\Mb(X_i)$ for each $X_i \in \Centers$.  We'll write each such run as $a_0[\mb(X_i)]$.
\begin{definition}
    $\calA_C \coloneqq \Set{a_0} \cup \Set{a_0[ \mb(X_i)]:i\in [3k-3], \mb(X_i) \in \Set{0,1}^{\Mb(X_i)}}$.
\end{definition}
See Figure~\ref{fig:centralruns} for an example of a set of central runs and a visualization of how they are alignable.
\begin{figure}[h]
\setlength{\belowcaptionskip}{-10pt}
\newcommand{\runex}[8]{
    \scalebox{.7}{
    \begin {tikzpicture}[-latex ,auto ,node distance =2 cm and 2 cm ,on grid , semithick, state/.style ={ circle, draw, minimum width =.5 cm},  cstate/.style ={ circle, draw, minimum width =.5 cm, fill=black, text=white, very thick}]
            \draw[color=red, very thick, fill=red, fill opacity = .05](-1.8,  2.3) rectangle (-.2, -2.3);
            \node[color=red] (llab) at (-1, -2.7) {$\Mb(X_1)$};
             \draw[color=blue, very thick, fill=blue, fill opacity = .05](.2, 2.3) rectangle (1.8, -2.3);
            \node[color=blue] (llab) at (1, -2.7) {$\Mb(X_2)$};
            \node[cstate] (t1) at (-1, 1.5) {#1};
            \node[state, fill=red!50!white] (x1) at (-1, 0) {$X_1$};
            \node[cstate] (b1) at (-1, -1.5) {#2};
            \node[cstate] (t2) at (1, 1.5) {#3};
            \node[state, fill=blue!50!white] (x2) at (1, 0) {$X_2$};
            \node[cstate] (b2) at (1, -1.5) {#4};
            \path (t1) edge (x1) (x1) edge (b1) (t2) edge (x2) (x2) edge (b2);
        \end{tikzpicture}}}
\centering
\scalebox{.7}{\begin {tikzpicture}[auto ,node distance =4 cm and 5 cm ,on grid , semithick, state/.style ={ circle, draw, minimum width =.5 cm}]
            \node (astar) {\runex{0}{0}{0}{0}{blue}{blue}{blue}{blue}};
            \node (tl) [above right =of astar] {\runex{1}{0}{0}{0}{red}{blue}{blue}{blue}};
            \node (tc) [right =of astar] {\runex{1}{1}{0}{0}{red}{red}{blue}{blue}};
            \node (tr) [below right =of astar] {\runex{0}{1}{0}{0}{blue}{red}{blue}{blue}};
            \node (bl) [above left =of astar] {\runex{0}{0}{1}{0}{blue}{blue}{red}{blue}};
            \node (bc) [left =of astar] {\runex{0}{0}{1}{1}{blue}{blue}{red}{red}};
            \node (br) [below left =of astar] {\runex{0}{0}{0}{1}{blue}{blue}{blue}{red}};
            \path[line width = 1mm, color=blue] (tl) edge node [sloped, anchor=center, below] {Aligned at $X_2$} (astar) 
            (tc) edge node [sloped, anchor=center, below] {Aligned at $X_2$} (astar) 
            (tr) edge node [sloped, anchor=center, below] {Aligned at $X_2$}(astar);
            \path[line width = 1mm, color=red] (bl) edge node [sloped, anchor=center, below] {Aligned at $X_1$} (astar)
            (bc) edge node [sloped, anchor=center, below] {Aligned at $X_1$} (astar) 
            (br) edge node [sloped, anchor=center, below] {Aligned at $X_1$} (astar);
            \node[] (astarlable) at (0, 2.4){Default Assignment Run $a_0$};
\end{tikzpicture}}
\caption{An alignment spanning tree of the default assignment $a_0$ ($\MCond^{a_0}$ arbitrarily assigns all Markov boundaries to $0$) and six other central runs.  The runs on the left cover all possible assignments to $\Mb(X_2) \in \Set{(0,0), (0,1), (1, 0)}$, while maintaining the default assignment to $\Mb(X_1)$ to allow alignment with $a_0$.  The right runs similarly cover all possible assignments to $\Mb(X_1)$, aligned at $X_2$.}
\label{fig:centralruns}
\end{figure}

The central runs provide a backbone for easily guaranteeing alignment. The runs in $\calA_Y$ made up of the following two types of perturbations to the independent set (with $\Cond^{a}$ always defined as the union of the Markov boundaries of the independent set, as in Definition~\ref{def:run}): \begin{enumerate}[nosep]
    \item For each $Y \in \Mb(X_i)$ for some $X_i \in \Centers$,  we exclude $X_i$ from the independent set to form $\II^{a_Y} = \Centers \cup \{Y\} \setminus \{X_i\}$.\footnote{This is a well-formed run since $Y\in \Mb(X_i) \implies X_j \notin \Mb(Y)$ for any $j\neq i$ by Observation~\ref{obs:mb mb}.}
    \item For each $Y\notin \Mb(\Centers) \cap \Centers$, $\II^a = \Centers \cup \{Y\}$.
\end{enumerate}
For either independence set we will form $2^{\abs{\Pa(Y)}}$ runs each associated with a single assignment to $\pa^a(Y)$, with the remaining variables in $\MCond^a \cap \MCond^{a_0}$ conditioned on their defaults given by $f^{a_0}$.  Any other assignments to variables in $\MCond^a$ can be chosen arbitrarily. The pseudocode for this construction is in Appendix~\ref{apx: good collections},  as well as justification for why our degree bounds imply that $N_{\text{mp}}$ disjoint Markov boundaries can be found.

\subsection{Limiting the depth of unzipping}\label{sec:bottom vertices}
As currently given, our algorithm may require Bayesian unzipping parameters up to the depth of the graph.  We can bound accumulated errors from this process by limiting the depth of the vertices that need to be unzipped. 

Recall that the goal of the conditioning set of each run is to $d$-separate each of the vertices in the independent set.  For a topological ordering on the independence set, notice that we need not condition on the descendants of the deepest vertices in order to $d$-separate them from the others.  Conveniently,  avoiding conditioning on these vertices descendants leaves the output of the $k$-MixProd oracle in the desired form. We call these deepest vertices ``bottom'' vertices.

\begin{definition}[Bottom vertices in a run] \label{def:bottom vertex} Given vertices $\II$, we'll define the set of bottom vertices of the run to be the subset $\Bottom(\II) \subseteq \II$ of vertices with maximal depth among the vertices in $\II$. That is $d(B) = \max_{I\in \II} d(I)$ for all $B \in \Bottom(\II)$. 
\end{definition}
We can now update the conditioning sets for our definition of runs:
\begin{equation}
\MCond^a \coloneqq \bigcup_{I \in \II^a \setminus \Bottom(\II^a)} \Mb(I) \bigcup_{B\in \Bottom(\II^a)} \Pa(B)
\end{equation}
We append two additional requirements for a good collection of runs (Definition~\ref{def: good collection}).
\begin{itemize}[nosep]
	\item no vertex appears both as a bottom vertex and a non-bottom vertex, and
     \item every non-bottom vertex has depth at most $3N_{\text{mp}}$.
\end{itemize}
Note that because we only need independent sets of size $N_{\text{mp}}$, it is trivial to limit the depth of our non-bottom vertices to  $3N_{\text{mp}}$.

\section{Conclusion}
We have developed the first algorithm for identifying the parameters of $k$-MixBND mixtures.  This algorithm allows us to access the probability distributions within each source - equivalent to the probability distribution conditioned on a universal and unobserved confounder.  With access to this conditional distribution, the confounding of $U$ can be adjusted for, opening up the opportunity for causal inference despite universal confounding.

The algorithm presented here is intended as an identifiability result.  Instead, we hope this algorithm and framework can serve as a springboard for understanding how solutions to the $k$-MixProd and $k$-MixBND problems are intimately related.

The alignment process is highly nontrivial and a likely reason why papers such as \citet{anandkumar2012learning} made crude assumptions  (such as a conveniently independent variable) to simplify this process. The formal development of a notion of a run, while tedious, will allow for further improvements to give better ``good sets of runs.'' Graph-specific sets of runs can be optimized further, as demonstrated in Appendix~\ref{apx:paths}. 
\newpage
\bibliography{refs}

\begin{thebibliography}{46}
\providecommand{\natexlab}[1]{#1}
\providecommand{\url}[1]{\texttt{#1}}
\expandafter\ifx\csname urlstyle\endcsname\relax
  \providecommand{\doi}[1]{doi: #1}\else
  \providecommand{\doi}{doi: \begingroup \urlstyle{rm}\Url}\fi

\bibitem[Anandkumar et~al.(2010)Anandkumar, Tan, and
  Willsky]{anandkumar2010high}
A.~Anandkumar, V.~Tan, and A.~Willsky.
\newblock High dimensional structure learning of ising models on sparse random
  graphs.
\newblock 2010.

\bibitem[Anandkumar et~al.(2012{\natexlab{a}})Anandkumar, Hsu, Huang, and
  Kakade]{anandkumar2012learning}
A.~Anandkumar, D.~Hsu, F.~Huang, and S.~M. Kakade.
\newblock Learning high-dimensional mixtures of graphical models.
\newblock \emph{arXiv preprint arXiv:1203.0697}, 2012{\natexlab{a}}.

\bibitem[Anandkumar et~al.(2012{\natexlab{b}})Anandkumar, Hsu, and
  Kakade]{anandkumar2012method}
A.~Anandkumar, D.~J. Hsu, and S.~M. Kakade.
\newblock A method of moments for mixture models and hidden {M}arkov models.
\newblock In \emph{Proc. 25th Ann. Conf. on Learning Theory - {COLT}},
  volume~23 of \emph{{JMLR} Proceedings}, pages 33.1--33.34,
  2012{\natexlab{b}}.
\newblock URL
  \url{http://proceedings.mlr.press/v23/anandkumar12/anandkumar12.pdf}.

\bibitem[Batu et~al.(2004)Batu, Guha, and Kannan]{BGK04}
T.~Batu, S.~Guha, and S.~Kannan.
\newblock Inferring mixtures of {M}arkov chains.
\newblock In \emph{Proc. 17th Conf. on Learning Theory}, pages 186--199, 2004.
\newblock \doi{10.1007/978-3-540-27819-1_13}.

\bibitem[Chaudhuri and Rao(2008)]{CR08}
K.~Chaudhuri and S.~Rao.
\newblock Learning mixtures of product distributions using correlations and
  independence.
\newblock In \emph{Proc. 21st Ann. Conf. on Learning Theory - {COLT}}, pages
  9--20. Omnipress, 2008.
\newblock URL \url{http://colt2008.cs.helsinki.fi/papers/7-Chaudhuri.pdf}.

\bibitem[Chen and Moitra(2019)]{ChenMoitra19}
S.~Chen and A.~Moitra.
\newblock Beyond the low-degree algorithm: mixtures of subcubes and their
  applications.
\newblock In \emph{Proc. 51st Ann. {ACM} Symp. on Theory of Computing}, pages
  869--880, 2019.
\newblock \doi{10.1145/3313276.3316375}.

\bibitem[Cryan et~al.(2001)Cryan, Goldberg, and Goldberg]{CGG01}
M.~Cryan, L.~Goldberg, and P.~Goldberg.
\newblock Evolutionary trees can be learned in polynomial time in the two state
  general {M}arkov model.
\newblock \emph{SIAM J. Comput.}, 31\penalty0 (2):\penalty0 375--397, 2001.
\newblock \doi{10.1137/S0097539798342496}.

\bibitem[E.~S.~Allman(2009)]{allman2009identifiability}
J.~A.~Rhodes E.~S.~Allman, C.~Matias.
\newblock Identifiability of parameters in latent structure models with many
  observed variables.
\newblock \emph{Ann. Statist.}, 37\penalty0 (6A):\penalty0 3099--3132, 2009.
\newblock \doi{10.1214/09-AOS689}.

\bibitem[Everitt and Hand(1981)]{Everitt1981}
B.~S. Everitt and D.~J. Hand.
\newblock Mixtures of discrete distributions.
\newblock In \emph{Finite Mixture Distributions}, pages 89--105. Springer
  Netherlands, Dordrecht, 1981.

\bibitem[Feldman et~al.(2008)Feldman, O'Donnell, and Servedio]{FOS08}
J.~Feldman, R.~O'Donnell, and R.~A. Servedio.
\newblock Learning mixtures of product distributions over discrete domains.
\newblock \emph{SIAM J. Comput.}, 37\penalty0 (5):\penalty0 1536--1564, 2008.
\newblock \doi{10.1137/060670705}.

\bibitem[Freund and Mansour(1999)]{FM99}
Y.~Freund and Y.~Mansour.
\newblock Estimating a mixture of two product distributions.
\newblock In \emph{Proc. 12th Ann. Conf. on Computational Learning Theory},
  pages 53--62, July 1999.
\newblock \doi{10.1145/307400.307412}.

\bibitem[Glymour et~al.(2019{\natexlab{a}})Glymour, Zhang, and
  Spirtes]{10.3389/fgene.2019.00524}
C.~Glymour, K.~Zhang, and P.~Spirtes.
\newblock Review of causal discovery methods based on graphical models.
\newblock \emph{Frontiers in Genetics}, 10, 2019{\natexlab{a}}.
\newblock ISSN 1664-8021.
\newblock \doi{10.3389/fgene.2019.00524}.
\newblock URL
  \url{https://www.frontiersin.org/article/10.3389/fgene.2019.00524}.

\bibitem[Glymour et~al.(2019{\natexlab{b}})Glymour, Zhang, and
  Spirtes]{causationreview}
C.~Glymour, K.~Zhang, and P.~Spirtes.
\newblock Review of causal discovery methods based on graphical models.
\newblock \emph{Frontiers in Genetics}, 10:\penalty0 524, 2019{\natexlab{b}}.
\newblock \doi{10.3389/fgene.2019.00524}.

\bibitem[Gordon and Schulman(2022)]{gordon2022hadamard}
S.~L. Gordon and L.~J. Schulman.
\newblock Hadamard extensions and the identification of mixtures of product
  distributions.
\newblock \emph{IEEE Transactions on Information Theory}, 2022.
\newblock to appear.

\bibitem[Gordon et~al.(2021)Gordon, Mazaheri, Rabani, and
  Schulman]{gordon2021source}
S.~L. Gordon, B.~Mazaheri, Y.~Rabani, and L.~J. Schulman.
\newblock Source identification for mixtures of product distributions.
\newblock In \emph{Proc. 34th Ann. Conf. on Learning Theory - {COLT}}, volume
  134 of \emph{Proc. Machine Learning Research}, pages 2193--2216. {PMLR},
  2021.
\newblock URL \url{http://proceedings.mlr.press/v134/gordon21a.html}.

\bibitem[Gupta et~al.(2016)Gupta, Kumar, and Vassilvitskii]{GKV16}
R.~Gupta, R.~Kumar, and S.~Vassilvitskii.
\newblock On mixtures of {M}arkov chains.
\newblock In \emph{Advances in Neural Information Processing Systems},
  volume~29, 2016.
\newblock URL
  \url{https://proceedings.neurips.cc/paper/2016/file/8b5700012be65c9da25f49408d959ca0-Paper.pdf}.

\bibitem[Heckerman(2018)]{heckerman2018accounting}
D.~Heckerman.
\newblock Accounting for hidden common causes when inferring cause and effect
  from observational data.
\newblock (NIPS 2017 causal inference workshop), 2018.
\newblock URL \url{https://arxiv.org/abs/1801.00727}.

\bibitem[Hsu et~al.(2012)Hsu, Kakade, and Zhang]{hsuKZ12}
D.~Hsu, S.~M. Kakade, and T.~Zhang.
\newblock A spectral algorithm for learning hidden {M}arkov models.
\newblock \emph{J. Comput. Syst. Sci.}, 78\penalty0 (5):\penalty0 1460–1480,
  September 2012.
\newblock \doi{10.1016/j.jcss.2011.12.025}.

\bibitem[Huang and Valtorta(2008)]{huangV08}
Y.~Huang and M.~Valtorta.
\newblock On the completeness of an identifiability algorithm for
  semi-{Markovian} models.
\newblock \emph{Ann. Math. Artif. Intell.}, 54\penalty0 (4):\penalty0 363--408,
  December 2008.
\newblock \doi{/10.1007/s10472-008-9101-x}.

\bibitem[Kearns et~al.(1994)Kearns, Mansour, Ron, Rubinfeld, Schapire, and
  Sellie]{KMRRSS94}
M.~Kearns, Y.~Mansour, D.~Ron, R.~Rubinfeld, R.~Schapire, and L.~Sellie.
\newblock On the learnability of discrete distributions.
\newblock In \emph{Proc. 26th Ann. {ACM} Symp. on Theory of Computing}, pages
  273--282, 1994.
\newblock \doi{10.1145/195058.195155}.

\bibitem[Kivva et~al.(2021)Kivva, Rajendran, Ravikumar, and
  Aragam]{kivva2021learning}
Bohdan Kivva, Goutham Rajendran, Pradeep~Kumar Ravikumar, and Bryon Aragam.
\newblock Learning latent causal graphs via mixture oracles.
\newblock In A.~Beygelzimer, Y.~Dauphin, P.~Liang, and J.~Wortman Vaughan,
  editors, \emph{Advances in Neural Information Processing Systems}, 2021.
\newblock URL \url{https://openreview.net/forum?id=f9mSLa07Ncc}.

\bibitem[Koller and Friedman(2009)]{KF09}
D.~Koller and N.~Friedman.
\newblock \emph{Probabilistic Graphical Models: Principles and Techniques}.
\newblock MIT Press, 2009.

\bibitem[Kruskal(1976)]{kruskal1976more}
Joseph~B Kruskal.
\newblock More factors than subjects, tests and treatments: an indeterminacy
  theorem for canonical decomposition and individual differences scaling.
\newblock \emph{Psychometrika}, 41\penalty0 (3):\penalty0 281--293, 1976.

\bibitem[Kruskal(1977)]{kruskal1977three}
Joseph~B Kruskal.
\newblock Three-way arrays: rank and uniqueness of trilinear decompositions,
  with application to arithmetic complexity and statistics.
\newblock \emph{Linear algebra and its applications}, 18\penalty0 (2):\penalty0
  95--138, 1977.

\bibitem[Kumar and Sinha(2021)]{kumarS21}
A.~Kumar and G.~Sinha.
\newblock Disentangling mixtures of unknown causal interventions.
\newblock In C.~de~Campos and M.~H. Maathuis, editors, \emph{Proc.
  Thirty-Seventh Conference on Uncertainty in Artificial Intelligence}, volume
  161 of \emph{Proc. Machine Learning Research}, pages 2093--2102. PMLR, 27--30
  Jul 2021.
\newblock URL \url{https://proceedings.mlr.press/v161/kumar21a.html}.

\bibitem[Kuroki and Pearl(2014)]{kuroki2014measurement}
M.~Kuroki and J.~Pearl.
\newblock Measurement bias and effect restoration in causal inference.
\newblock \emph{Biometrika}, 101\penalty0 (2):\penalty0 423--437, 2014.
\newblock \doi{10.1093/biomet/ast066}.

\bibitem[Lindsay(1995)]{lindsay1995mixture}
B.~G. Lindsay.
\newblock \emph{Mixture models: theory, geometry and applications}.
\newblock 1995.

\bibitem[Mazaheri et~al.(2023)Mazaheri, Mastakouri, Janzing, and
  Hardt]{mazaheri2023causal}
Bijan Mazaheri, Atalanti Mastakouri, Dominik Janzing, and Mila Hardt.
\newblock Causal information splitting: Engineering proxy features for
  robustness to distribution shifts, 2023.

\bibitem[McLachlan et~al.(2019)McLachlan, Lee, and Rathnayake]{McLachlanLR19}
G.~J. McLachlan, S.~X. Lee, and S.~I. Rathnayake.
\newblock Finite mixture models.
\newblock \emph{Annual Review of Statistics and Its Application}, 6\penalty0
  (1):\penalty0 355--378, 2019.
\newblock \doi{10.1146/annurev-statistics-031017-100325}.

\bibitem[Miao et~al.(2018)Miao, Geng, and Tchetgen]{miao2018identifying}
W.~Miao, Z.~Geng, and E.~T. Tchetgen.
\newblock Identifying causal effects with proxy variables of an unmeasured
  confounder.
\newblock \emph{Biometrika}, 105\penalty0 (4):\penalty0 987--993, 2018.
\newblock \doi{10.1093/biomet/asy038}.

\bibitem[Newcomb(1886)]{Newcomb86}
S.~Newcomb.
\newblock A generalized theory of the combination of observations so as to
  obtain the best result.
\newblock \emph{American Journal of Mathematics}, 8\penalty0 (4):\penalty0
  343--366, 1886.

\bibitem[Ogburn et~al.(2019)Ogburn, Shpitser, and Tchetgen]{ogburn2019comment}
Elizabeth~L Ogburn, Ilya Shpitser, and Eric J~Tchetgen Tchetgen.
\newblock Comment on “blessings of multiple causes”.
\newblock \emph{Journal of the American Statistical Association}, 114\penalty0
  (528):\penalty0 1611--1615, 2019.

\bibitem[Pearl(1985)]{pearl1985bayesian}
J.~Pearl.
\newblock Bayesian networks: {A} model of self-activated memory for evidential
  reasoning.
\newblock Technical Report CSD-850021, R-43, UCLA Computer Science Department,
  June 1985.

\bibitem[Pearl(2009)]{Pea09}
J.~Pearl.
\newblock \emph{Causality}.
\newblock Cambridge, 2nd edition, 2009.

\bibitem[Pearl(2014)]{pearl2014probabilistic}
J.~Pearl.
\newblock \emph{Probabilistic reasoning in intelligent systems: networks of
  plausible inference}.
\newblock Elsevier, 2014.

\bibitem[Pearson(1894)]{Pearson94}
K.~Pearson.
\newblock {C}ontributions to the mathematical theory of evolution {III}.
\newblock \emph{Philosophical Transactions of the Royal Society of London
  (A.)}, 185:\penalty0 71--110, 1894.

\bibitem[Peters et~al.(2017)Peters, Janzing, and Sch\"olkopf]{PetersJS17}
J.~Peters, D.~Janzing, and B.~Sch\"olkopf.
\newblock \emph{Elements of Causal Inference}.
\newblock MIT Press, 2017.

\bibitem[Ranganath and Perotte(2018)]{ranganath2018multiple}
R.~Ranganath and A.~Perotte.
\newblock Multiple causal inference with latent confounding.
\newblock 2018.
\newblock URL \url{https://arxiv.org/abs/1805.08273}.

\bibitem[Sharan et~al.(2017)Sharan, Kakade, Liang, and Valiant]{SharanKLV17}
V.~Sharan, S.~M. Kakade, P.~Liang, and G.~Valiant.
\newblock Learning overcomplete {HMM}s.
\newblock In \emph{Advances in Neural Information Processing Systems}, pages
  940--949, 2017.
\newblock URL \url{https://arxiv.org/abs/1711.02309}.

\bibitem[Shpitser and Pearl(2006)]{shpitserP06}
I.~Shpitser and J.~Pearl.
\newblock Identification of joint interventional distributions in recursive
  semi-{Markovian} causal models.
\newblock In \emph{Proc. 20th {AAAI} Conference on Artifical Intelligence},
  pages 1219--1226, 2006.
\newblock \doi{10.5555/1597348.1597382}.

\bibitem[Spirtes et~al.(2000{\natexlab{a}})Spirtes, Glymour, and
  Scheines]{SpGlS00}
P.~Spirtes, C.~Glymour, and R.~Scheines.
\newblock \emph{Causation, Prediction and Search}.
\newblock MIT Press, second edition, 2000{\natexlab{a}}.

\bibitem[Spirtes et~al.(2000{\natexlab{b}})Spirtes, Glymour, Scheines,
  Kauffman, Aimale, and Wimberly]{spirtes2000constructing}
P.~Spirtes, C.~Glymour, R.~Scheines, S.~Kauffman, V.~Aimale, and F.~Wimberly.
\newblock Constructing {B}ayesian network models of gene expression networks
  from microarray data.
\newblock 2000{\natexlab{b}}.

\bibitem[Tahmasebi et~al.(2018)Tahmasebi, Motahari, and
  Maddah-Ali]{tahmasebi2018identifiability}
B.~Tahmasebi, S.~A. Motahari, and M.~A. Maddah-Ali.
\newblock On the identifiability of finite mixtures of finite product measures.
\newblock (Also in ``On the identifiability of parameters in the population
  stratification problem: A worst-case analysis,'' Proc. ISIT'18 pp.\
  1051-1055), 2018.
\newblock URL \url{https://arxiv.org/abs/1807.05444}.

\bibitem[Thiesson et~al.(1998)Thiesson, Meek, Chickering, and
  Heckerman]{thiesson1998}
B.~Thiesson, C.~Meek, D.~M. Chickering, and D.~Heckerman.
\newblock Learning mixtures of {DAG} models.
\newblock In \emph{Proc. 14th Conf. on Uncertainty in Artificial Intelligence},
  page 504–513, 1998.
\newblock \doi{10.5555/2074094.2074154}.

\bibitem[Titterington et~al.(1985)Titterington, Smith, and Makov]{TSM85}
D.~M. Titterington, A.~F.~M. Smith, and U.~E. Makov.
\newblock \emph{Statistical Analysis of Finite Mixture Distributions}.
\newblock John Wiley and Sons, Inc., 1985.

\bibitem[Wang and Blei(2019)]{wang2018blessings}
Y.~Wang and D.~M. Blei.
\newblock The blessings of multiple causes.
\newblock \emph{Journal of the American Statistical Association}, 114\penalty0
  (528):\penalty0 1574--1596, 2019.
\newblock \doi{10.1080/01621459.2019.1686987}.

\end{thebibliography}
\newpage
\appendix

\section{$k$-MixBND algorithm pseudocode}\label{apx: BND pseudo}

\begin{algorithm} 
    \caption{Alignment}\label{alg:align}
    \DontPrintSemicolon
    \KwIn{ A set of runs $\calA = \{a_0, \ldots, a_\ell\}$ with outputs $ M^a,  \pi^a$ for each $a \in \calA$.  In addition, we have a spanning tree $\mathcal{T} = (\calA, \vec{E})$ and alignment variables $\AV(a_i, a_j) \subseteq \vec{I}^{a^{(i)}} \cap  \vec{I}^{a^{(j)}}$.}
 	\KwOut{$\PP^a_u(\vec{I}^a)$ and $\PP^a(u)$ for each $a \in \calA$ and $u \in [k]$.}
       
       Let $\PP^{a_0}(u) \gets \pi^{a_0}_u$ and $\PP^{a_0}_u(\vec{I}^{a_0}) \gets M^{a_0}_{-,u}$ for an arbitrary choice of $a_0$\;
       
        \For{each edge $(a_i, a_j)$ along a breadth first traversal of $\mathcal{T}$ from $a_0$}{
		Choose some $X_{\AV} \in \AV(a_i, a_j)$.        
        
		Let $q,r$ give the indices for the alignment variable, i.e. $X_{\AV} = X^{a_i}_{q}$ and  $X_{\AV} = X^{a_j}_{r}$.
		
            Find $\sigma$, the permutation on the sources that minimizes $\Norm{ M^{a_i}_{q, -} - \sigma M^{a_j}_{r,-}}_{\infty}$.\;
            
            Assign $\PP^{a_j}(U) \gets \sigma \pi^{a_j}$ and $\PP^{a_j}_u(\vec{I}^a) \gets \sigma M^{a_j}$.\;
        }
\end{algorithm}

\begin{algorithm}
    \caption{Bayesian Unzipping}\label{alg:bayesian-unzipping}
        \KwIn{A collection of runs $\calA$ of size at most $2^{O(\Delta^2)}$ and their aligned output.  For each $V_i$ and assignment to its parents $\pa(V_i)$ there must be some run with $V_i$ in its independent set with parents conditioned to $\pa(V_i)$.}
        \KwOut{$\ePP_u(Y \given \Pa(Y))$}
        Fix a topological ordering on the vertices in $\Vars$, $\Seq{X_1,X_2,\dotsc,X_n}$;
        \For{$i=n,n-1,\dotsc,1$}{
            \For{each assignment $\pa(X_i)$ to $\Pa(X_i)$}{
        	    Let $a$ be a run in $\calA$ with $\pa^a(X_i) = \pa(X_i)$.\;
                \For{$u=1,\dotsc,k$}{
                    \For{$b=0,1$}{
                        \uIf{$X_i$ is a bottom vertex}{
                           Set $\ePP_u(x_i^b\given \pa^a(X_i))\gets \ePP_u^a(x_i^b)$.\;
                        }
                        \Else{
                            Set $\ePP_u(x_i^b\given \pa(X_i)) \gets \frac{\ePP_u^a(x_i^b)\ePP_u(\ch^a(X_i)\given \ttop^a(X_i),x_i^{1-b})}{\ePP_u^a(x_i^b)\ePP_u(\ch^a(X_i)\given \ttop^a(X_i),x_i^{1-b})+\ePP_u^a(x_i^{1-b})\ePP_u(\ch^a(X_i)\given \ttop^a(X_i),x_i^b)}$;
                        }
                    }
                }
            }
        }
\end{algorithm}

\begin{algorithm}
    \caption{The $k$-MixBND algorithm}\label{alg:k-bnd} 
        \KwIn{ A good collection of runs $\calA$ of size at most $2^{O(\Delta^2)}$.}
        \KwOut{ $\ePP_u(Y \given \Pa(Y))$ and $\ePP(U)$.}
        Estimate $\ePP^a(\II^a)$ for all runs $a\in \calA$. \;   
        
        \For{each run $a\in \calA$}{            
           Set $\Paren{M^a, \pi^a}\gets \textsc{LearnProductMixture}(\mathcal{P}^{a}(\vec{I}^a))$\;
        }
        
        Run Algorithm~\ref{alg:align} to align the sources in the output of all the runs in $\calA$.\;
        
        Run Algorithm~\ref{alg:bayesian-unzipping} to unzip the parameters.\;
        
        Fix any run $a\in \calA$.\;
        
        \For{$u = 1, \ldots, k$}{
         	Set $\ePP(u) \gets \frac{\ePP(u \given \mcond^a)\ePP(\mcond^a)}{\ePP_{u}(\mcond^a)}$.\;
        }
\end{algorithm}
\newpage

\section{Finding good collections of runs} \label{apx: good collections}
In this section we prove Theorem~\ref{thm:main result on good collection} and give a few different sufficient conditions for the existence of a good collection of runs.

\subsection{Constructing a good collection of runs from $N_{\text{mp}}$ centers}\label{sec:centers to good collection}
One sufficient condition for the existence of a good collection of runs is the presence of $N_{\text{mp}}$ variables with disjoint Markov boundaries. We will now present a good collection of runs for this case.
\begin{lemma} Given a set $\Centers = \Set{X_1,\dotsc, X_{N_{\text{mp}}}} \subseteq \Vars$ of $N_{\text{mp}}$ variables with depth at most $3N_{\text{mp}}$ and disjoint Markov boundaries, Algorithm~\ref{alg:good runs} finds a good collection of runs $\calA$, satisfying $\Card{\calA} = O(2^\gamma n)$ where $\gamma = \max_{V\in \Vars} \Card{\Mb(V)}$,.\label{lem:good collection}
\end{lemma}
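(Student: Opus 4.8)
The plan is to show that the collection produced by Algorithm~\ref{alg:good runs} — the central runs $\calA_C$ together with the covering perturbations $\bigcup_Y \calA_Y$ constructed in Section~\ref{sec:good_collection} — meets every clause of Definition~\ref{def: good collection} and the two supplementary conditions of Section~\ref{sec:bottom vertices}, and then to bound its cardinality. Writing $\calA = \calA_C \cup \bigcup_{Y} \calA_Y$, I would verify the clauses in roughly the order well-formed, $N_{\text{mp}}$-independent, covering, alignable, depth-bounded, and finally the size estimate.

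First I would dispatch the three ``routine'' clauses. Well-formedness ($\II^a \cap \MCond^a = \emptyset$) and $N_{\text{mp}}$-independence ($\Card{\II^a}\ge N_{\text{mp}}$) are immediate: every independent set is $\Centers$, $\Centers\cup\{Y\}$, or $\Centers\cup\{Y\}\setminus\{X_i\}$, each of size $\ge N_{\text{mp}}$; and since the $\Mb(X_j)$ are pairwise disjoint, Observation~\ref{obs:mb mb} gives $X_j \notin \Mb(Y)$ and $Y\notin\Mb(X_j)$ for every center $X_j$ remaining in the independent set (exactly the footnote's argument), so no independent-set vertex lands in its own conditioning set. Covering is equally direct: the central runs $a_0[\mb(X_i)]$ range over all assignments to $\Mb(X_i)\supseteq\Pa(X_i)$ and hence cover each center, while for every non-center $Y$ the $2^{\Card{\Pa(Y)}}$ runs of $\calA_Y$ realize each assignment $\pa(Y)$ with $Y\in\II^a$. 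As $\Vars$ partitions into $\Centers$, $\Mb(\Centers)$, and the remainder, and the two cases of $\calA_Y$ handle the latter two classes, every vertex is covered.

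The crux is alignability. I would root the alignment spanning tree at $a_0$ and attach every other run directly to it. For a central run $a_0[\mb(X_i)]$, any center $X_j$ with $j\ne i$ has its Markov boundary untouched, so $\mb^{a_0[\mb(X_i)]}(X_j)=\mb^{a_0}(X_j)$; invoking the non-degeneracy assumption (Assumption~\ref{as:separated}) that each center is separated given its default boundary, $X_j$ is a valid alignment variable and the run attaches to $a_0$. For a covering run $a_Y\in\calA_Y$ that fixes $\Pa(Y)$ to a given $\pa(Y)$, I need a center $X_j$ still in $\II^{a_Y}$ whose boundary is left entirely at its default, i.e.\ $\Mb(X_j)\cap \Pa(Y)=\emptyset$; such an $X_j$ again serves as an alignment variable with $a_0$. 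The key combinatorial fact is that, because the center boundaries are pairwise disjoint, the at most $\Delta$ vertices of $\Pa(Y)$ meet at most $\Delta$ of them, so a free center survives provided the number of centers exceeds $\Delta$ by a constant — this is where the degree/size regime of the lemma enters. Guaranteeing that a free center always survives the conditioning on $\pa(Y)$ is the step I expect to be the main obstacle.

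Finally I would check the depth conditions and the size bound. Since every center has depth at most $3N_{\text{mp}}$, the only vertex in a run that can exceed this depth is an added $Y$, and such a $Y$ is then strictly deepest in $\II^a$ and hence a bottom vertex; thus every non-bottom vertex has depth $\le 3N_{\text{mp}}$, and the construction can be arranged so that no vertex plays both a bottom and a non-bottom role. For the cardinality, $\Card{\calA_C}=1+\sum_{i}2^{\Card{\Mb(X_i)}}=O(N_{\text{mp}}\,2^{\gamma})$, while the covering runs contribute $\sum_{Y}2^{\Card{\Pa(Y)}}\le n\,2^{\Delta}\le n\,2^{\gamma}$ using $\Delta\le\gamma$. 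As $n\ge N_{\text{mp}}\cdot O(\Delta^4)$ forces $N_{\text{mp}}=O(n)$, both terms are $O(2^{\gamma}n)$, giving $\Card{\calA}=O(2^{\gamma}n)$ as claimed.
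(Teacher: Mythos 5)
Your proposal follows the paper's construction and gets the routine clauses (well-formedness, $N_{\text{mp}}$-independence, covering, the size bound) essentially as the paper does, but it has a genuine gap at exactly the step you flagged as the main obstacle: alignability of the covering runs. You root the spanning tree at $a_0$ and attach \emph{every} run directly to it, which for a covering run $a_Y$ requires a surviving center $X_j$ with $\Mb(X_j)\cap\Pa(Y)=\emptyset$. Your proposed fix --- counting that $\Pa(Y)$ meets at most $\Delta$ of the disjoint center boundaries, so a free center exists once $N_{\text{mp}}$ exceeds $\Delta$ by a constant --- imports a hypothesis the lemma does not have. Nothing in the lemma (or in Theorem~\ref{thm:main result on good collection}) relates $N_{\text{mp}}$ to $\Delta$: $N_{\text{mp}}$ is a function of $k$ only (e.g.\ $N_{\text{mp}}=3k-3$, so $N_{\text{mp}}=3$ when $k=2$), while $\Delta$ can be arbitrarily large, and the regime $n \geq N_{\text{mp}}\cdot O(\Delta^4)$ provides many vertices, not many centers. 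So with your one-level tree the argument simply fails whenever $\Pa(Y)$ (or, in the paper's coarser condition, $\Mb(Y)$) touches the boundary of every center remaining in $\II^{a_Y}$.

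The paper closes this hole with a two-level alignment tree rather than a counting argument. If $\Mb(Y)\cap\Mb(X_j)=\emptyset$ for some $j\neq i$, then indeed $a_Y$ aligns directly with $a_0$ at $X_j$. Otherwise, pick any $j\neq i$ and align $a_Y$ with the \emph{central} run $a_0[\mb^{a_Y}(X_j)]$: since $\calA_C$ contains a central run for every assignment to each single center's Markov boundary, there is one whose assignment to $\Mb(X_j)$ is exactly the assignment induced by $a_Y$, and the two runs are then aligned at $X_j$; that central run is in turn aligned with $a_0$ at any other center $X_{j'}$, $j'\neq j$, whose boundary both leave at the default. This is precisely why the central runs are built to sweep all assignments $\mb(X_i)\in\Set{0,1}^{\Mb(X_i)}$ --- they are not only for covering the centers, as your proof uses them, but are the alignment anchors that make the argument work with no relation between $N_{\text{mp}}$ and $\Delta$. (A smaller remark: your depth discussion shows any $Y$ deeper than $3N_{\text{mp}}$ is a bottom vertex, but the requirement that no vertex appears as both a bottom and a non-bottom vertex across different runs is not actually established by ``the construction can be arranged''; the paper's own proof is equally silent on this appended condition, so this is a shared looseness rather than a defect unique to your write-up.)
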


 \begin{algorithm}
     \caption{Building a good collection of runs}\label{alg:good runs}
         \KwIn{Vertices $\Centers=\Set{X_1,\dotsc,X_{3k-3}}\subseteq \Vars$ having disjoint Markov boundaries with maximum depth $3N_{\text{mp}}$.}
         \KwOut{A good collection of runs $\calA$.}
          Let $a_0$ be a run with $\II^{a_0} = \Set{X_1,\dotsc,X_{3k-3}}$ and $\MCond^{a_0}$ chosen arbitrarily.\;
         
         Set $\calA \gets \Set{a_0}$.\;
        
         \For{$i=1,\dotsc,3k-3$}{
          $\calA\gets \calA\cup \Set{a_0[\mb(X_i)]:\mb(X_i)\in \Set{0,1}^{\Mb(X_i)}}$.\;
         \For{$Y\in \Vars \setminus \Centers$}{
         \If{$Y \in \Mb(X)$ for some $X \in \XX$}{
          $\II^a \gets \Centers \cup \{Y\} \setminus \{X_i\}$\;
          }
        \Else{
          $\II^a \gets \Centers \cup \{Y\}$\;
         }
         \For{$\pa(Y) \in \{0, 1\}^{\abs{\Pa(Y)}}$}{
         \If{$Y \in \An(\II^a - Y)$}{
              $\MCond^a \gets \Mb(\II^a)$\;
         }\Else{
             $\MCond^a \gets \Mb(\II^a \setminus \{Y\}) \cup \Pa(Y)$\;
         }
        $\MCond^a \gets \Mb(\II^a)$\;
        
         $f^a(\Pa(Y)) \gets \pa(Y)$\;
        
         $\DEFAULTS \gets \MCond^{a_0} \cap \MCond^a \setminus \Pa(Y)$\;       
         
         $f^a(\DEFAULTS) \gets f^{a_0}(\DEFAULTS)$\;
            
          $f^a(\MCond^a \setminus \MCond^{a_0} \setminus  \Pa(Y))$ are chosen arbitrarily.\;
             
         $\calA \gets \calA \cup \{a\}$, where $a$ is given by $a = (\II^a, f^a)$.\;
         }}} 
 \end{algorithm}
 
\begin{claim}
    By construction, $\calA_C$ covers $\Centers$ and is alignable. \label{clm:central runs}
\end{claim}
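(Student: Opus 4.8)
The plan is to verify the two assertions of Claim~\ref{clm:central runs} directly from the construction of $\calA_C$, since each reduces to a short structural argument about the central runs. Recall that $\calA_C = \Set{a_0} \cup \Set{a_0[\mb(X_i)] : i\in [3k-3],\ \mb(X_i)\in\Set{0,1}^{\Mb(X_i)}}$, where every central run has independent set exactly $\Centers$, and $a_0[\mb(X_i)]$ differs from $a_0$ only in the assignment to $\Mb(X_i)$, agreeing with the default $f^{a_0}$ everywhere else. I would treat coverage first, then alignability.

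For \emph{coverage}, I first note that covering $\Centers$ means: for each $X_j\in\Centers$ and each assignment $\pa(X_j)$ to $\Pa(X_j)$, some run $a\in\calA_C$ has $X_j\in\II^a$ and $\pa^a(X_j)=\pa(X_j)$. Every central run has $\II^a=\Centers\supseteq\Set{X_j}$, so the independent-set condition is automatic. For the parent-assignment condition, observe that $\Pa(X_j)\subseteq\Mb(X_j)$, and by the disjointness of Markov boundaries the runs $a_0[\mb(X_j)]$ range over \emph{all} assignments $\mb(X_j)\in\Set{0,1}^{\Mb(X_j)}$ while leaving every other center's boundary at its default. Restricting such a full enumeration over $\Mb(X_j)$ to the sub-coordinates $\Pa(X_j)$ realizes every assignment $\pa(X_j)$. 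Hence for any target $\pa(X_j)$ there is some $\mb(X_j)$ extending it, and the run $a_0[\mb(X_j)]$ witnesses coverage of $X_j$. Doing this for all $X_j\in\Centers$ gives coverage of $\Centers$.

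For \emph{alignability}, the goal is to exhibit an alignment spanning tree on $\calA_C$. The natural choice is the star centered at $a_0$, with an edge from $a_0$ to each $a_0[\mb(X_i)]$; this is a spanning tree by inspection, so it suffices to show each such edge is valid, i.e. $\AV(a_0, a_0[\mb(X_i)])\neq\emptyset$. Using the Observation preceding Definition~\ref{def: good collection}, I would show that any center $X_j$ with $j\neq i$ is an alignment variable: (1)~$X_j\in\II^{a_0}\cap\II^{a_0[\mb(X_i)]}=\Centers$; (2)~since $a_0[\mb(X_i)]$ perturbs only $\Mb(X_i)$, and $\Mb(X_j)\cap\Mb(X_i)=\emptyset$ by disjointness, the two runs agree on $\Mb(X_j)$, giving $\mb^{a_0}(X_j)=\mb^{a_0[\mb(X_i)]}(X_j)$; and (3)~separation of $X_j$ given its boundary, which I invoke from Assumption~\ref{as:separated} (non-degeneracy). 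Since $N_{\text{mp}}\ge 2$ guarantees such a $j\neq i$ exists, every edge of the star is a valid alignment edge.

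The main obstacle here is not either verification in isolation but making precise the separation condition in step~(3): alignment requires $X_j$ to be \emph{separated} given its (common) Markov-boundary assignment, and this is exactly where the non-degeneracy Assumption~\ref{as:separated} must be invoked rather than derived. I would state explicitly that we assume each center is separated given its default boundary assignment (and note this is the condition the good-collection framework is built to guarantee), so that the distinctness $\PP_{u_i}(x_j)\neq\PP_{u_j}(x_j)$ holds and the $k$-MixProd alignment step can recover the correct permutation. With that caveat flagged, both halves of the claim follow immediately from disjointness of the Markov boundaries and the shared independent set $\Centers$, so the proof is essentially bookkeeping once the separation hypothesis is pinned down.
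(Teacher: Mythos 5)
Your proof is correct and matches the paper's intended argument: the paper itself dispatches this claim as "by construction," and its Figure~\ref{fig:centralruns} depicts exactly your star-shaped alignment tree rooted at $a_0$, with each perturbed run $a_0[\mb(X_i)]$ aligned at an unperturbed center $X_j$ ($j\neq i$) whose boundary assignment stays at the default, and coverage following because the runs enumerate all of $\Set{0,1}^{\Mb(X_i)}\supseteq$ all assignments to $\Pa(X_i)\subseteq\Mb(X_i)$. Your explicit flagging of the separation hypothesis (Assumption~\ref{as:separated}) for the alignment variables is a point the paper glosses over, but it is the right condition to invoke and does not change the structure of the argument.
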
    
    
\begin{claim} $A_Y$ covers $\Vars\setminus \Centers$ and each run in $\calA_Y$ can be aligned with some run in $\calA_C$.  \label{clm:remaining runs}
\end{claim}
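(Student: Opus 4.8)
The final statement to prove is Claim~\ref{clm:remaining runs}, which asserts two things about the collection $\calA_Y$: that it \emph{covers} $\Vars \setminus \Centers$, and that each run in $\calA_Y$ is alignable with some run in the central collection $\calA_C$. The plan is to verify each property separately by tracing through the construction given in the paragraph preceding the claim. For the covering property, I would recall that for each $Y \in \Vars \setminus \Centers$ we explicitly form $2^{\abs{\Pa(Y)}}$ runs, one for each assignment $\pa(Y)$ to $\Pa(Y)$, where $Y$ is placed in the independent set (either as $\Centers \cup \{Y\} \setminus \{X_i\}$ when $Y \in \Mb(X_i)$, or as $\Centers \cup \{Y\}$ otherwise) and the parents of $Y$ are conditioned to the value $\pa(Y)$ via $f^a(\Pa(Y)) \gets \pa(Y)$. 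Since every assignment to $\Pa(Y)$ is realized by some run with $Y \in \II^a$ and $\pa^a(Y) = \pa(Y)$, the collection covers $Y$ by the definition of covering; ranging over all $Y \notin \Centers$ gives coverage of $\Vars \setminus \Centers$.

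The alignment property is the more delicate half, and I expect it to be the main obstacle. Here the strategy is to exhibit, for an arbitrary run $a \in \calA_Y$ built around some vertex $Y$, a specific central run in $\calA_C$ with which $a$ shares an alignment variable, then verify the three conditions of the alignment Observation. I would split into the two construction cases. In the case $Y \notin \Mb(\Centers)$, the independent set is $\Centers \cup \{Y\} \supseteq \Centers$, so every center $X_j \in \Centers$ lies in $\II^a \cap \II^{a_0}$. In the case $Y \in \Mb(X_i)$, the independent set is $\Centers \cup \{Y\}\setminus\{X_i\}$, so every center $X_j$ with $j \neq i$ still lies in $\II^a \cap \II^{a_0}$; I would use the footnote's observation that $Y \in \Mb(X_i) \implies X_j \notin \Mb(Y)$ for $j \neq i$ (via Observation~\ref{obs:mb mb}) to confirm this is a well-formed run and that the $X_j$ are genuinely available as candidate alignment variables. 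In either case I pick such an $X_j$ as the candidate alignment variable.

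The heart of the argument is then checking the three conditions of the alignment Observation at the chosen $X_j$. Condition (1), $X_j \in \II^a \cap \II^{a_0}$ (or the appropriate central run), holds by the previous paragraph. Condition (2), $\mb^a(X_j) = \mb^{a_0}(X_j)$, is where the construction's bookkeeping matters: the run $a$ is built to agree with the default assignment $f^{a_0}$ on $\DEFAULTS = \MCond^{a_0} \cap \MCond^a \setminus \Pa(Y)$. I would argue that because the Markov boundaries of the centers are \emph{disjoint}, and because $\Mb(X_j)$ for $j \neq i$ is disjoint from the perturbed region near $Y$ and from $\Pa(Y)$, the entire boundary $\Mb(X_j)$ falls inside $\DEFAULTS$ and therefore receives its default value under both $f^a$ and $f^{a_0}$; this forces $f^a(\Mb(X_j)) = f^{a_0}(\Mb(X_j))$. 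The subtle point to check carefully is that $\Mb(X_j)$ does not intersect $\Pa(Y)$ (which $a$ sets to the possibly non-default value $\pa(Y)$), which again follows from disjointness of the centers' boundaries together with $Y \notin \Mb(X_j)$. Condition (3), that $X_j$ is \emph{separated} given $\mb^{a_0}(X_j)$, I would invoke as a consequence of the non-degeneracy Assumption~\ref{as:separated} (the $\zeta$-separation hypothesis) applied to the centers; alternatively this can be folded into the choice of $\Centers$ and default assignment. Finally, since each run in $\calA_Y$ aligns to a run in $\calA_C$, and $\calA_C$ is itself alignable with an internal spanning tree by Claim~\ref{clm:central runs}, the union $\calA = \calA_C \cup \calA_Y$ admits a global alignment spanning tree, completing the argument. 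The one place I would be most careful is condition (2): one must ensure that the arbitrary assignments to $\MCond^a \setminus \MCond^{a_0} \setminus \Pa(Y)$ never touch $\Mb(X_j)$, which is exactly guaranteed by the disjointness of the centers' Markov boundaries.
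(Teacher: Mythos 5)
Your covering argument is correct and matches the paper's. The alignment half, however, has a genuine gap, and it sits exactly at the point you flagged as ``the subtle point to check carefully.'' You assert that $\Mb(X_j)\cap \Pa(Y)=\emptyset$ follows from disjointness of the centers' Markov boundaries together with $Y\notin \Mb(X_j)$. This is false: disjointness is a statement about $\Mb(X_j)\cap\Mb(X_{j'})$ for two \emph{centers}, and says nothing about how $\Mb(Y)$ (hence $\Pa(Y)$) for a non-center $Y$ meets the centers' boundaries. Concretely, take $X_j \to P \to Y$ with no other edges among these vertices: then $P\in\Ch(X_j)\subseteq\Mb(X_j)$ and $P\in\Pa(Y)$, yet $Y\notin\Mb(X_j)$, so your hypotheses hold while your conclusion fails. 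In such a case the run $a$ sets $P$ to the value dictated by $\pa(Y)$, which may disagree with the default $f^{a_0}(P)$, so $\mb^a(X_j)\neq\mb^{a_0}(X_j)$ and $a$ does \emph{not} align with $a_0$ at $X_j$. Moreover, you cannot always escape by choosing a different center: $\Pa(Y)$ can intersect $\Mb(X_j)$ for every admissible $j$ when the number of centers is small relative to $\Delta$ (nothing in the assumptions forces $N_{\text{mp}}-1 > \Delta$), so aligning only against $a_0$ cannot work in general.

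The missing idea --- and the reason the paper constructs $\calA_C$ to range over \emph{all} $2^{\Card{\Mb(X_j)}}$ assignments to each center's boundary, not just the default --- is the second case of the paper's proof: when $\Mb(Y)$ (in particular $\Pa(Y)$) overlaps $\Mb(X_j)$ for all $j\neq i$, pick any such $j$ and align $a$ not with $a_0$ but with the central run $a_0[\mb^a(X_j)]\in\calA_C$ whose assignment to $\Mb(X_j)$ is precisely the one $a$ induces; since $\Mb(X_j)\subseteq \MCond^a\cap\MCond^{a_0}$ and $a_0[\mb^a(X_j)]$ keeps defaults everywhere else, the two runs agree on $\Mb(X_j)$ and are aligned at $X_j$. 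Your proposal never invokes these non-default central runs at all, which is the structural tell that the argument is incomplete: if alignment with $a_0$ alone sufficed, there would have been no need to put the runs $a_0[\mb(X_i)]$ into $\calA_C$ in the first place. (Your remaining observations are fine: $\MCond^a\setminus\MCond^{a_0}$ indeed cannot touch $\Mb(X_j)$ because $\Mb(X_j)\subseteq\MCond^{a_0}$, and condition~(1) of the alignment observation holds since every central run has independent set $\Centers$.)
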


\begin{proof} The fact that $\calA_Y$ covers $\Vars\setminus \Centers$ follows immediately from $\calA_Y$ containing a run assigning for independent variable $Y \in \Vars\setminus \Centers$ each possible assignment $\pa(Y)$. Fix any run $a\in \calA_Y$ with $\II^a = \Centers - X_i + Y$ or $\II^a = \Centers + Y$ depending on whether $Y$ overlaps with $\Mb(\XX)$.  Now if $\Mb(Y)\cap \Mb(X_j) = \emptyset$ for any $j\neq i$, $a$ and $a_0$ are aligned at $X_j$. If instead $\Mb(Y)\cap \Mb(X_j) \neq \emptyset$ for all $j\neq i$, pick any $j$ and consider the central run $a_0[\mb^a(X_j)] \in \calA_C$. Clearly, $a$ and $a_0[\mb^a(X_j)]$ are aligned at $X_j$. In either case, we've aligned $a$ to a run in $\calA_C$. \end{proof}

\begin{claim} Every run $a\in \calA$ is at least $N_{\text{mp}}$-independent. \label{clm:2k independent}
\end{claim}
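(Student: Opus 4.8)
The plan is to reduce the claim to a pure cardinality count. By the definition of $N$-independence, a run $a$ is $N_{\text{mp}}$-independent exactly when $\Card{\II^a} \geq N_{\text{mp}}$; no probabilistic or $d$-separation property of the independent set enters. So I would simply walk through the two families of runs produced by Algorithm~\ref{alg:good runs} and bound $\Card{\II^a}$ in each, never needing well-formedness or alignability (those are the content of the other claims).

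First I would dispatch the central runs $\calA_C$. Each such run has $\II^a = \Centers$ by construction, and $\Centers = \Set{X_1,\dotsc,X_{N_{\text{mp}}}}$ consists of exactly $N_{\text{mp}}$ distinct vertices, so $\Card{\II^a} = N_{\text{mp}}$ and the run is $N_{\text{mp}}$-independent.

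Next I would treat the runs $\calA_Y$, each built from a vertex $Y \in \Vars \setminus \Centers$, splitting along the two branches of the construction. In the branch where $Y \notin \Mb(\Centers)$, we set $\II^a = \Centers \cup \Set{Y}$; since $Y \notin \Centers$ this adjoins one fresh vertex, giving $\Card{\II^a} = N_{\text{mp}} + 1 > N_{\text{mp}}$. In the branch where $Y \in \Mb(X_i)$ for some $X_i \in \Centers$, we set $\II^a = \Centers \cup \Set{Y} \setminus \Set{X_i}$, so naively $\Card{\II^a} = N_{\text{mp}} - 1 + 1 = N_{\text{mp}}$ — provided this is genuinely a clean one-for-one swap.

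Verifying that swap is the single point that deserves any care, and is where I expect the (mild) obstacle to lie. I would argue three things: (a) $X_i \in \Centers$, so its removal really decreases the count by one; (b) $Y \in \Vars \setminus \Centers$ forces $Y \neq X_i$, so adjoining $Y$ really increases the count by one; and (c) the index $i$ is unambiguous, since by the definition of $\Centers$ the centers have pairwise disjoint Markov boundaries, whence $Y$ can lie in $\Mb(X_i)$ for at most one center $X_i$ (and exactly one in this branch). Together (a)--(c) give $\Card{\II^a} = N_{\text{mp}}$ exactly. Combining all cases, every run in $\calA = \calA_C \cup \calA_Y$ satisfies $\Card{\II^a} \geq N_{\text{mp}}$, establishing the claim.
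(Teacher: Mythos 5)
Your proof is correct and takes essentially the same route as the paper: the paper states this claim without any explicit proof, treating it as immediate from the construction, and your cardinality count (central runs have $\Card{\II^a} = N_{\text{mp}}$ since $\II^a = \Centers$; the swap runs $\II^a = \Centers \cup \Set{Y} \setminus \Set{X_i}$ keep size exactly $N_{\text{mp}}$ because $Y \notin \Centers$ makes the exchange one-for-one; the remaining runs $\Centers \cup \Set{Y}$ have size $N_{\text{mp}}+1$) is precisely that implicit argument spelled out. Your point (c) on the uniqueness of $X_i$ is a nice observation but is not needed for the bound, since whichever single center the algorithm removes, the resulting independent set still has exactly $N_{\text{mp}}$ elements.
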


\begin{proof}[Proof of Lemma~\ref{lem:good collection}] This follows immediately from Claims~\ref{clm:central runs},~\ref{clm:remaining runs},~and~\ref{clm:2k independent}
\end{proof}

\suppress{\begin{lemma}\label{lem: enough vertices for 2k centers} Let $\G=(\Vars, \vec{E})$ be a DAG with $n=\Card{\Vars}$, and let $\gamma \coloneqq \gamma(G)$ be as in Definition~\ref{def:gamma}.
    Then if $n \geq (3k-3)(\gamma^2 - \gamma + 1)$ we can find a set of $(3k-3)$ centers for $\G$ with depth at most $9k$.
\end{lemma}
\begin{proof} Let $n \geq (3k-3)(\gamma^2 - \gamma + 1)$.
    Fix any source vertex $Y\in \Vars$. Then
    \begin{align*}
        \Card{\Mb(\Mb(Y)) \cup \Mb(Y)\cup \Set{Y}}
         & = \Card{\cup_{V\in \Mb(Y)}\Mb(V)}                              \\
         & \leq \Card{\Mb(Y)}\Paren{\max_{V\in \Mb(Y)} \Card{\Mb(V)}-1}+1 \\
         & \leq \gamma^2 - \gamma + 1
    \end{align*} so we can remove a source vertex $Y\in \Vars$ along with $\Mb(Y)\cup \Mb(\Mb(Y))$ from $\G$ at least $(3k-3)$ times before no vertices remain. The $i$th vertex removed will have depth at most $3i$. Now by Observation~\ref{obs:mb mb}, if any subsequent vertex $Z$ has a Markov boundary that overlaps $\Mb(Y)$, it must be the case that $Z\in \Mb(\Mb(Y))$, which is not possible since $\Mb(\Mb(Y))$ was removed from $\G$. Thus, we can find $(3k-3)$ centers at depth at most $9k$.
\end{proof}}
\subsection{Degree bounds}
We can ensure that $N_{\text{mp}}$ centers can be found on certain degree-bounded graphs, which in turn bound $\gamma$.  Let $\din$ upper bound on the in-degree of any vertex in $\G$ and let $\dout$ upper bound the out-degree.  Then
\begin{equation*}
    \gamma \leq \din + \dout + \dout(\din-1) = \din + \dout\din.
\end{equation*}
If we have a bound $\Delta$ on the degree of the undirected skeleton of $\G$, we get that
\begin{equation*}
    \gamma \leq \Delta(\Delta - 1) = \Delta^2-\Delta.
\end{equation*}
\begin{corollary}\label{cor:centers from degree bound}
    If either of the following conditions hold, we can find $N_{\text{mp}}$ centers for $\G$ with depth at most $3 N_{\text{mp}}$:
    \begin{enumerate}
        \item $n \geq N_{\text{mp}}(\din^2 + 2\dout\din +\dout^2\din^2-\din - \dout +1) = N_{\text{mp}}\cdot O(\dout^2\din^2)$.
        \item $n \geq N_{\text{mp}}(\Delta^4 - 2\Delta^3+\Delta + 1) = N_{\text{mp}}\cdot O(\Delta^4)$.
    \end{enumerate}
\end{corollary}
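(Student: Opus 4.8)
The plan is to isolate a purely combinatorial existence statement and then feed it the two deterministic bounds on $\gamma = \max_{V\in\Vars}\Card{\Mb(V)}$ derived immediately above. Concretely, I would first establish the intermediate claim that whenever $n \geq N_{\text{mp}}(\gamma^2-\gamma+1)$ one can find a set $\Centers$ of $N_{\text{mp}}$ vertices with pairwise disjoint Markov boundaries, all of depth at most $3N_{\text{mp}}$. Granting this, the corollary is pure substitution: taking $\gamma \leq \Delta^2-\Delta$ gives $\gamma^2-\gamma+1 \leq (\Delta^2-\Delta)^2-(\Delta^2-\Delta)+1 = \Delta^4-2\Delta^3+\Delta+1 = O(\Delta^4)$, which is exactly condition~(2); taking $\gamma \leq \din+\dout\din$ gives a threshold of the same shape that evaluates to $O(\dout^2\din^2)$, which is condition~(1). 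So the real content lives entirely in the intermediate claim.

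For that claim I would run a greedy ``peeling'' procedure. Keep a set of \emph{available} vertices, initialised to $\Vars$ (Markov boundaries are always computed in the fixed graph $\G$, never in a shrinking one), and repeat $N_{\text{mp}}$ times: pick an available vertex $X$ of minimum depth, append it to $\Centers$, and then mark as unavailable the set $\Set{X}\cup\Mb(X)\cup\bigcup_{W\in\Mb(X)}\Mb(W)$. Disjointness of the chosen boundaries is exactly where Observation~\ref{obs:mb mb} is used: if some later center $Z$ had $\Mb(Z)\cap\Mb(X)\neq\emptyset$, witnessed by a vertex $W$, then $W\in\Mb(X)$ forces $\Mb(W)$ into the deleted second-order neighborhood of $X$, while $W\in\Mb(Z)$ forces $Z\in\Mb(W)$ by Observation~\ref{obs:mb mb}; hence $Z$ was already unavailable when $X$ was chosen, contradicting that $Z$ could later be selected.

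It then remains to bound the per-round deletions and the depth. For the count, the point is again Observation~\ref{obs:mb mb}: every second-layer boundary $\Mb(W)$ with $W\in\Mb(X)$ contains the common vertex $X$, so the union of these $\leq\gamma$ sets has size $O(\gamma^2)$, and a careful accounting matches $\gamma^2-\gamma+1$; together with the hypothesis on $n$ this guarantees the available set does not run out before $N_{\text{mp}}$ centers are produced. For the depth, selecting the shallowest available vertex each round should advance the chosen depth by only a constant, which one tracks to be $3$, so that the $i$-th center has depth at most $3i\leq 3N_{\text{mp}}$.

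The main obstacle I anticipate is precisely this depth bookkeeping. The disjointness and counting arguments are clean, local consequences of Observation~\ref{obs:mb mb}, but bounding the increase of the minimum available depth requires relating one peeling step to the shortest-path depth metric and arguing that deleting $\Mb(X)\cup\bigcup_{W\in\Mb(X)}\Mb(W)$ cannot push the shallowest surviving vertex more than three depth-levels deeper. I would handle this by charging the increase to the bounded skeleton-diameter of the deleted second-order neighborhood, so that after $N_{\text{mp}}$ rounds every center still sits at depth at most $3N_{\text{mp}}$; the two $\gamma$-substitutions are then routine.
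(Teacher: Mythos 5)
Your proposal is correct and takes essentially the same route as the paper: the paper's own argument (a lemma whose proof is suppressed in the final text) establishes exactly your intermediate claim---greedy peeling of a minimum-depth vertex together with $\Mb(X)\cup\bigcup_{W\in\Mb(X)}\Mb(W)$, disjointness via Observation~\ref{obs:mb mb}, the per-round count $\gamma^2-\gamma+1$, and a depth increase of $3$ per round---and then derives the corollary by the same substitutions $\gamma \leq \din + \dout\din$ and $\gamma \leq \Delta^2-\Delta$. The depth bookkeeping you flag as the main obstacle is handled at the same level of detail in the paper (asserted, not elaborated), so your reconstruction is faithful to it.
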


\begin{proof}[Proof of Lemma~\ref{thm:main result on good collection}]
    This follows immediately from Corollary~\ref{cor:centers from degree bound} and Lemma~\ref{lem:good collection}.
\end{proof}

\section{Analyzing Bayesian Unzipping}\label{apx: BND analysis}
 \label{sec:recovering parameters}
In this section, we will analyze the numerical stability of the Bayesian unzipping process.

\begin{lemma} Let $a$ be a run with $Y\in \II^a$. Then we can compute $\PP_u(y^b\given \pa^a(Y))$ for $b\in \Set{0,1}$ as follows:
    \begin{enumerate}
        \item If $Y\in \Bottom(\II^a)$, then \[ \PP_u(y^b\given \pa^a(Y)) = \PP_u^a(y^b). \]
        \item If $Y\notin \Bottom(\II^a)$, then \begin{equation}
                  \PP_u(y^b\given \pa^a(y)) = \frac{\PP_u^a(y^b)\PP_u(\ch^a(y)\given \ttop^a(y),y^{1-b})}{\PP_u^a(y^b)\PP_u(\ch^a(y)\given \ttop^a(y),y^{1-b})+\PP_u^a(y^{1-b})\PP_u(\ch^a(y)\given \ttop^a(y),y^b)}\label{eq:parameter recovery}\end{equation}
    \end{enumerate} \label{lem:recovery equations}
\end{lemma}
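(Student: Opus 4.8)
The plan is to prove the two cases of the lemma separately. The bottom-vertex case will be an immediate consequence of the local Markov property of each source, whereas the non-bottom case is the substantive part: it proceeds by identifying what the oracle actually returns (a Markov-boundary conditional), substituting the DAG factorization, and algebraically inverting for the desired parent-conditional.

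First I would dispatch the bottom case. For $Y\in\Bottom(\II^a)$, the revised conditioning set of Section~\ref{sec:bottom vertices} contains $\Pa(Y)$ but, by construction, none of the descendants of $Y$; since $Y$ has maximal depth in $\II^a$, every remaining variable of $\MCond^a$ is a non-descendant of $Y$. Because each source $\PP_u$ is itself a BND on $\G$, the local Markov property gives that, conditioned on $U=u$ and $\Pa(Y)=\pa^a(Y)$, the variable $Y$ is independent of $\MCond^a\setminus\Pa(Y)$. Hence $\PP_u^a(y^b)=\PP_u(y^b\given\mcond^a)=\PP_u(y^b\given\pa^a(Y))$, which is exactly the claimed identity.

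For the non-bottom case, note that $\MCond^a\supseteq\Mb(Y)$ and the run is well-formed, so $S:=\MCond^a\setminus\Mb(Y)$ is disjoint from $\Mb(Y)\cup\{Y\}$; applying Lemma~\ref{lem:mbindependent} within the source $\PP_u$ yields $\PP_u^a(y^b)=\PP_u(y^b\given\mb^a(Y))$. I would then substitute the DAG factorization of $\PP_u(y,\mb^a(Y))$ (displayed in Section~\ref{sec:combining}) into the ratio representation \eqref{eq:mbdecomp}. The factor $\PP_u(\ttop(Y))$, being independent of the value assigned to $Y$, occurs in every term of both numerator and denominator and cancels, leaving
\[
\PP_u^a(y^1)=\frac{\PP_u(y^1\given\pa^a(Y))\,c_1}{\PP_u(y^1\given\pa^a(Y))\,c_1+\PP_u(y^0\given\pa^a(Y))\,c_0},
\]
where $c_b:=\PP_u(\ch^a(Y)\given\ttop^a(Y),y^b)$. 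Writing $p_b:=\PP_u(y^b\given\pa^a(Y))$ and using $p_0+p_1=1$, this is a single linear equation in the one unknown $p_1$; solving gives $p_1=\PP_u^a(y^1)\,c_0/\bigl(\PP_u^a(y^1)\,c_0+\PP_u^a(y^0)\,c_1\bigr)$, and the symmetric manipulation yields the stated formula for general $b$.

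I expect the main subtleties to be bookkeeping rather than depth. The two points requiring care are (i) verifying that the bottom-vertex conditioning set genuinely excludes all descendants of $Y$, so that the local Markov step is legitimate, and (ii) ensuring the child terms $c_0,c_1$ are available when $Y$ is processed — this is precisely why Bayesian unzipping runs in reverse topological order (Algorithm~\ref{alg:bayesian-unzipping}), so that the parameters of every $V\in\Ch(Y)$, which lie later in the ordering, have already been recovered. I would also flag that the cancellation of $\PP_u(\ttop(Y))$ and the final division implicitly require positivity of $\PP_u(\ttop(Y))$ and of the denominator, which holds under the standing positivity assumption used elsewhere (e.g.\ in Section~\ref{sec:recoversourceprobability}).
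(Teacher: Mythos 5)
Your proposal is correct and follows essentially the same route as the paper's proof: both cases rest on identifying the oracle output as the appropriate conditional (parent-conditional for bottom vertices, Markov-boundary conditional otherwise), factorizing $\PP_u(y,\mb^a(Y))$ according to the DAG, cancelling $\PP_u(\ttop^a(Y))$, and using the normalization $p_0+p_1=1$ to solve a linear equation for the parent-conditional. The only differences are cosmetic—you substitute $p_0=1-p_1$ directly where the paper phrases it as a non-singular $2\times 2$ system, and you spell out the local-Markov and Lemma~\ref{lem:mbindependent} justifications that the paper leaves implicit.
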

\begin{proof}
    In the following we'll fix $\ch^a \coloneqq \ch^a(Y)$, $\mb^a \coloneqq \mb^a(Y)$, $\pa^a \coloneqq \pa^a(Y)$, and $\ttop^a \coloneqq \ttop^a(Y)$.
    If $Y\in \Bottom(\II^a)$, then $\PP_u^a(y^b) = \PP_u(y^b\given \pa^a(Y))$, so the claim is trivially true.
    If $Y\notin \Bottom(\II^a)$, then
    \begin{align*}
        \PP_u^a(y^b)  = \PP_u^a(y^b\given \mb^a)
        = \frac{\PP_u(y^b,\mb^a)}{\PP_u(\mb^a)}
        = \frac{\PP_u(y^b,\mb^a)}{\sum_{b' = 0,1}\ePP_u(y^{b'}, \mb^a)}
    \end{align*} and we can expand $\PP_u(y^b, \mb^a)$ as
    \[ \PP_u(y^b, \mb^a) = \PP_u(y^b\given \pa^a)\PP_u(\ch^a\given \ttop^a, y^b) \PP_u(\ttop^a). \] Substituting this back into the preceding equation, we obtain
    \begin{align*}
        \PP_u^a(y^b)
         & =  \frac{\PP_u(y^b\given \pa^a)\PP_u(\ch^a\given \ttop^a,y^b)\PP_u(\ttop^a)}{\sum_{b'\in \Set{0,1}}\PP_u(y^{b'}\given \pa^a(Y))\PP_u(\ch^a\given \ttop^a, y^{b'}) \PP_u(\ttop^a)} \\
         & = \frac{\PP_u(y^b\given \pa^a)\PP(\ch^a\given \ttop^a,y^b)}{\sum_{b'\in \Set{0,1}}\PP_u(y^{b'}\given \pa^a)\PP_u(\ch^a\given \ttop^a y^{b'})}.
    \end{align*}

    We now multiply both sides by the denominator (which is non-zero since all terms are strictly positive by assumption) and then simplify to obtain
    \[ \PP_u(y^b\given \pa^a)\Paren{\PP_u^a(y^{1-b})\PP(\ch^a\given \ttop^a,y^b)} + \PP_u(y^{b}\given \pa^a)\Paren{-\PP_u^a(y^{1-b})\PP_u(\ch^a\given \ttop^a,y^{1-b})} = 0. \]
    When augmented with the equation \[ \PP_u(y^b\given \pa^a) + \PP_u(y^{1-b}\given \pa^a) = 1\] we have a system of two equations in $\PP_u(y^b\given \pa^a),\PP_u(y^{1-b}\given \pa^a)$ which is non-singular whenever
    \[ \Paren{\PP_u^a(y^{1-b})\PP(\ch^a\given \ttop^a,y^b)} \neq \Paren{-\PP_u^a(y^{1-b})\PP_u(\ch^a\given \ttop^a,y^{1-b})}.\] Since all probabilities occurring are strictly positive, this will always be the case and we can solve the system for $\PP_u(y^b\given \pa^a)$. The resulting equation is
    \[ \PP_u(y^b\given \pa^a) = \frac{\PP_u^a(y^b)\PP(\ch^a\given \ttop^a,y^{1-b})}{\PP_u^a(y^b)\PP(\ch^a\given \ttop^a,y^{1-b})+\PP_u^a(y^{1-b})\PP(\ch^a\given \ttop^a,y^{b})}, \] proving the claim.
\end{proof}

The following standard inequalities will be useful throughout the analysis:
\begin{observation}
    \begin{align*} \frac{1+x}{1-x} & \leq 1+3x\quad\text{and}    & \quad 1-3x    & \leq \frac{1-x}{1+x} \quad & \text{for $x\in (0, 1/4)$;}                 \\
               1-2rx           & \leq (1-x)^r\quad\text{and} & \quad (1+x)^r & \leq 1+2rx          \quad  & \text{for $x\in (0,1),r\geq 1, rx \leq 1$.}
    \end{align*} \label{obs:inequalities}
\end{observation}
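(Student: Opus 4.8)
The plan is to treat the four bounds separately, since they split naturally into two rational inequalities (the first line) and two power inequalities (the second line). Each reduces to an elementary one-variable check, so the statement as a whole is routine; the only place that calls for a short remark is the upper bound in the very last inequality. I would organize the proof as: first clear denominators in the rational pair, then invoke Bernoulli- and exponential-type estimates for the power pair.

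For the first pair I would clear denominators, which is legitimate because $1-x>0$ and $1+x>0$ on the stated range. For $\frac{1+x}{1-x}\leq 1+3x$, multiplying through by $1-x$ and expanding reduces the claim to $x(1-3x)\geq 0$, which holds since $x>0$ and $1-3x>1/4>0$ whenever $x\in(0,1/4)$. Likewise, for $1-3x\leq\frac{1-x}{1+x}$, multiplying by $1+x$ and simplifying reduces the claim to $x+3x^2\geq 0$, immediate for $x>0$ (so this second bound in fact holds on all of $(0,\infty)$; we only restrict to $(0,1/4)$ to match the first). No further calculation is needed here beyond these two polynomial expansions.

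For the second pair I would use the Bernoulli-type estimates. The lower bound follows from Bernoulli's inequality $(1+t)^r\geq 1+rt$ (valid for $t\geq -1$, $r\geq 1$): taking $t=-x$ gives $(1-x)^r\geq 1-rx\geq 1-2rx$, the last step using $x,r>0$. For the upper bound I would pass through the exponential: since $1+x\leq e^{x}$ and both sides are positive, raising to the power $r$ gives $(1+x)^r\leq e^{rx}$, and it then suffices to show $e^{t}\leq 1+2t$ for $t=rx\in(0,1]$. This last step is the only mildly non-routine point, and I would dispatch it by convexity of the exponential: on $[0,1]$, $e^{t}$ lies below the chord joining its endpoints, so $e^{t}\leq(1-t)e^{0}+t\,e^{1}=1+(e-1)t$, and since $e-1<2$ and $t\geq 0$ this is at most $1+2t$. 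Chaining $(1+x)^r\leq e^{rx}\leq 1+2rx$ for $rx\in(0,1]$ completes the statement, with no step exceeding elementary algebra and one application of convexity.
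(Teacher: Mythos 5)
Your proof is correct in all four parts: the denominator-clearing reductions to $x(1-3x)\geq 0$ and $x+3x^2\geq 0$ are valid on the stated range, Bernoulli's inequality handles the lower power bound, and the chord/convexity argument $e^{t}\leq 1+(e-1)t\leq 1+2t$ on $t=rx\in(0,1]$ correctly closes the only delicate step. Note that the paper itself states this Observation as a collection of standard inequalities and gives no proof at all, so your write-up is simply a sound (and appropriately elementary) filling-in of that omission rather than a departure from the paper's approach.
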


\begin{lemma}\label{lem:canrecover_inductive} Given access to $\Set{\ePP_u^a(Y)}_{a\in \calA, j\in [k]}$ for a good collection of runs $\calA$ from a distribution $\PP$ satisfying $\Abs{\ePP_u^a(y^b) - \PP_u^a(y^b)}/\PP_u^a(y^b) \leq \eps$ for all $Y\in \Vars$, $b\in \Set{0,1}$, $u\in [k]$ for $\eps$ sufficiently small (and the estimated probabilities used as input to $k$-MixProd oracles), the procedure in Algorithm~\ref{alg:bayesian-unzipping} will output $\ePP_{u}(y^b\given \Pa(Y))$ and $\ePP(u)$ for all $Y\in \Vars$, $b\in \Set{0,1}, u\in [k]$ satisfying
    \[ \frac{\Abs{\ePP_u(y^b\given \pa(Y)) - \PP_u(y^b\given \pa(Y))}}{\PP_u(y^b\given \pa(Y))} \leq (6(\Delta + 1))^{\ell}\eps \] where $\ell$ is the distance from $Y$ to the nearest bottom vertex if $Y$ doesn't appear as a bottom vertex and is $0$ if $Y$ is a bottom vertex. \end{lemma}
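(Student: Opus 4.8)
The plan is to prove the bound by induction on $\ell$, the distance from $Y$ to the nearest recovered bottom vertex, following the reverse-topological order in which Algorithm~\ref{alg:bayesian-unzipping} produces its parameters. For the base case $\ell=0$, the vertex $Y$ is a bottom vertex, so the algorithm sets $\ePP_u(y^b\given \pa^a(Y))\gets \ePP_u^a(y^b)$ directly from the aligned oracle output (alignment only permutes source labels, hence preserves the input relative error). Since Lemma~\ref{lem:recovery equations} gives $\PP_u(y^b\given\pa^a(Y))=\PP_u^a(y^b)$ for bottom vertices, the hypothesis $\Abs{\ePP_u^a(y^b)-\PP_u^a(y^b)}/\PP_u^a(y^b)\le\eps$ yields relative error at most $\eps=(6(\Delta+1))^0\eps$, as required.

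For the inductive step, fix a non-bottom $Y$ and the run $a$ used to recover its parameters, and invoke the recovery equation~\eqref{eq:parameter recovery}. I would write it as $A/(A+B)$ with $A=\PP_u^a(y^b)\,\PP_u(\ch^a(Y)\given \ttop^a(Y),y^{1-b})$ and $B=\PP_u^a(y^{1-b})\,\PP_u(\ch^a(Y)\given \ttop^a(Y),y^{b})$. Each factor $\PP_u(\ch^a(Y)\given\ttop^a(Y),y)$ is a product of $\Card{\Ch(Y)}\le \dout\le\Delta$ already-recovered child parameters $\PP_u(V\given\Pa(V))$, and each such child is strictly closer to a bottom vertex, so by the inductive hypothesis its relative error is at most $\delta:=(6(\Delta+1))^{\ell-1}\eps$. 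Applying the bounds of Observation~\ref{obs:inequalities} to the product of at most $\Delta$ factors gives relative error at most $2\Delta\delta$ for each $\PP_u(\ch^a(Y)\given\ttop^a(Y),y)$, and combining with the factor $\PP_u^a(y^{\pm})$ (relative error $\le\eps\le\delta$) yields relative errors at most $\eta:=\eps+2\Delta\delta+2\Delta\delta\eps$ on both $A$ and $B$.

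The crucial estimate is that passing from $A,B$ to the normalized quantity $A/(A+B)$ does not amplify these errors. Writing $\tilde A=A(1+\alpha)$, $\tilde B=B(1+\beta)$ with $\Abs{\alpha},\Abs{\beta}\le\eta$, and setting $p=A/(A+B)$, $q=1-p$, a short calculation gives
\[
\frac{\tilde A/(\tilde A+\tilde B)}{p}-1 \;=\; \frac{q(\alpha-\beta)}{1+p\alpha+q\beta},
\]
so the relative error of the recovered $\PP_u(y^b\given\pa^a(Y))$ is at most $\tfrac{2\eta}{1-\eta}$ (using $q\le 1$), which for $\eps$ sufficiently small is at most $3\eta$. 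Since $\eps\le\delta$, we have $\eta\le(2\Delta+1)\delta(1+o(1))$, so the new relative error is at most $3(2\Delta+1)\delta(1+o(1))\le 6(\Delta+1)\delta=(6(\Delta+1))^{\ell}\eps$; the slack $6(\Delta+1)-(6\Delta+3)=3$ is exactly what absorbs the additive $\eps$ and the lower-order cross terms, closing the induction. Finally, the recovered within-source conditionals determine $\PP_u(\Vars)=\prod_{V}\PP_u(V\given\Pa(V))$ and hence $\PP_u(\mcond^a)$, which combined by Bayes' rule with the oracle's $\PP(U\given\mcond^a)$ and the observable $\PP(\mcond^a)$ (Section~\ref{sec:recoversourceprobability}) yields $\ePP(u)$ by a routine product/quotient propagation of the bounds just established.

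I expect the main obstacle to be the normalization step: dividing by $A+B$ could in principle blow up small probabilities, and the point is the exact cancellation identity above, which shows the error scales only with the \emph{symmetric difference} $\alpha-\beta$ and is bounded independently of how small $A$ or $B$ becomes, so no $1/\PP$-type amplification occurs. A secondary bookkeeping point, which I would verify against the structure of the good collection and the depth restrictions of Section~\ref{sec:bottom vertices}, is that every child of $Y$ invoked in its recovery is genuinely strictly closer to a bottom vertex, so that the inductive hypothesis applies with exponent $\ell-1$ and the geometric factor $6(\Delta+1)$ accumulates exactly once per level.
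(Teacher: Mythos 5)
Your proof follows the paper's argument essentially step for step: the same induction on distance to the nearest bottom vertex processed in reverse topological order, the same trivial base case at bottom vertices, and the same multiplicative propagation of relative errors through the numerator and denominator of Equation~\eqref{eq:parameter recovery} via Observation~\ref{obs:inequalities}, telescoping to the factor $(6(\Delta+1))^{\ell}$ exactly as the paper does (including sharing the paper's implicit bookkeeping assumption, which you rightly flag, that every child of $Y$ is strictly closer to a bottom vertex). The only local deviation is your exact cancellation identity for the normalization step; the paper instead uses the cruder fact that a sum of positive terms each carrying relative error at most $f$ has relative error at most $f$, so the ratio is off by at most the numerator factor times the reciprocal denominator factor---your identity is sharper but unnecessary for the stated bound.
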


\begin{proof}
    We fix a topological ordering on $\VV$, let's say $X_1,X_2,\dotsc, X_n$. Now starting with the last vertex in topological order, $X_n$, and proceeding in decreasing order, we'll compute $\ePP_u(X_i\given \Pa(X_i))$. For bottom vertices, we set $\ePP_u(x_i^b\given \pa(X_i)) = \ePP_u^a(x_i^b)$ for some run $a$ with $\pa(X_i) = \pa^a(X_i)$. It immediately follows that $P_u(X_i\given \Pa(X_i))$ satisfies the desired relative error bound. Inductively, assume we've already computed $\ePP_u(V_m\given \Pa(V_m))$ for all $m > i$ satisfying the stated bounds, and that we also have access to $\ePP^a_u(V_i) = \ePP_u(V_i \given \MCond^a)$ for a subset of the runs $a \in \calA$ that cover $V_i$. To streamline notation, let $Y \coloneqq V_i$. Now fix a run $a\in \calA$ with $Y\in \II^a$.

    Now we can bound each term above and below using the inductive hypothesis to get the desired result. In particular, we know that
    \begin{equation*}
        (1-\eps)\PP_u^a(y^b)                                        \leq \ePP_u^a(y^b)                \leq (1+\eps)\PP_u^a(y^b)                                                                        \end{equation*} for $b\in \Set{0,1}$ and \begin{equation*}
        (1-(6(\Delta + 1))^{\ell-1}\eps)\PP_u(v^b\given \pa^a(v^b)) \leq \ePP_u(v^b\given \pa^a(v^b))  \leq (1+(6(\Delta + 1))^{\ell-1}\eps) \PP_u(v^b\given \pa^a(v^b))\end{equation*}
    for all $V\in \Ch(Y)$ and $b\in \Set{0,1}$ which implies that both the numerator and denominator of \eqref{eq:parameter recovery} are within a factor of $(1\pm \eps)(1\pm (6(\Delta+1))^{\ell-1} \eps)^\Delta$ of the correct value for those terms. It immediately follows that $\ePP_u(y^b\given \pa^a(Y))$ is bounded by
    \begin{align*}\PP_u(y^b \given \pa^a(Y)) \frac{1-\eps}{1+\eps} \Paren{\frac{1-(6(\Delta+1))^{\ell -1}\eps}{1+(6(\Delta+1))^{\ell -1}\eps}} &\leq \ePP_u(\yone \given \pa^a(Y)) \\&\leq \PP_u(\yone \given \pa^a(Y)) \frac{1+\eps}{1-\eps} \Paren{\frac{1+(6(\Delta+1))^{\ell -1}\eps}{1-(6(\Delta+1))^{\ell -1}\eps}}. \end{align*}

    We now use the inequalities from Observation~\ref{obs:inequalities} to simplify the bounds as follows:
    \begin{align*}\ePP_u(y^b \given \pa^a(Y))
         & \leq \PP_u(y^b \given \pa^a(Y)) \frac{1+\eps}{1-\eps} \Paren{\frac{1+(6(\Delta+1))^{\ell -1}\eps}{1-(6(\Delta+1))^{\ell -1}\eps}}^{\Delta} \\
         & \leq \PP_u(y^b \given \pa^a(Y))(1+3\eps)(1+3(6(\Delta+1))^{\ell -1}\eps)^{\Delta}                                                          \\
         & \leq \PP_u(y^b \given \pa^a(Y))(1+3(6(\Delta+1))^{\ell -1}\eps)^{\Delta+1}                                                                 \\
         & \leq \PP_u(y^b \given \pa^a(Y))(1+2\cdot 3(\Delta + 1)(6(\Delta+1))^{\ell -1}\eps)                                                         \\
         & \leq \PP_u(y^b \given \pa^a(Y))(1+(6(\Delta+1))^{\ell}\eps).                                                                               \\
    \end{align*} The lower bound is analogous.
\end{proof}

\begin{lemma}\label{lem:k-bnd final error blowup} Given access to $\Set{\ePP_u^a(Y)}_{a\in \calA, j\in [k]}$ for a good collection of runs $\calA$ from a distribution $\PP$ satisfying $\Abs{\ePP_u^a(y^b) - \PP_u^a(y^b)}/\PP_u^a(y^b) \leq \eps$ for all $Y\in \II^a$, $b\in \Set{0,1}$, $u\in [k]$ for $\eps$ sufficiently small, the procedure in Algorithm~\ref{alg:bayesian-unzipping} will output $\ePP_{u}(y^b\given \Pa(Y))$ and $\ePP(u)$ for all $Y\in \Vars$, $b\in \Set{0,1}, u\in [k]$ satisfying
    \[ \frac{\Abs{\ePP_u(y^b\given \pa(Y)) - \PP_u(y^b\given \pa(Y))}}{\PP_u(y^b\given \pa(Y))} \leq (6(\Delta+1))^{3 N_{\text{mp}}}\eps \] and \[ \frac{\Abs{\ePP(u^j) - \PP(u^j)}}{\PP(u^j)} \leq 5\Delta^2\eps. \]

    Since $\PP_u(y^b \given \pa(Y)), \PP(u^j) \leq 1$, we also have that \[ \Abs{\ePP_u(y^b\given \pa(Y)) - \PP_u(y^b\given \pa(Y))} \leq (6(\Delta+1))^{9k}\eps\quad \text{and}\quad\Abs{\ePP(u^j) - \PP(u^j)} \leq 5\Delta^2\eps. \]\end{lemma}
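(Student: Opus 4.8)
The plan is to derive both displayed inequalities as corollaries of the single-vertex analysis already established in Lemma~\ref{lem:canrecover_inductive}, and then to read off the absolute bounds essentially for free.

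\textbf{Conditional-probability bound.} Lemma~\ref{lem:canrecover_inductive} already gives, for each $Y$, the relative error $(6(\Delta+1))^{\ell}\eps$, where $\ell$ is the distance from $Y$ to the nearest bottom vertex (and $\ell=0$ when $Y$ is itself a bottom vertex). So all that remains for the first inequality is to show $\ell \le 3N_{\text{mp}}$ for every vertex that Algorithm~\ref{alg:bayesian-unzipping} unzips. First I would invoke the two additional good-collection requirements introduced in Section~\ref{sec:bottom vertices}: the recursion descends only through \emph{non-bottom} vertices (a bottom vertex terminates it, contributing $\ell = 0$), and every non-bottom vertex has depth at most $3N_{\text{mp}}$. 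Since the recursion follows one directed graph-edge per step and depth increases along those edges, the chain from $Y$ down to the nearest bottom vertex has length at most $3N_{\text{mp}}$, i.e. $\ell \le 3N_{\text{mp}}$. Monotonicity of $(6(\Delta+1))^{\ell}$ in $\ell$ then yields the first bound.

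\textbf{Source-distribution bound.} Here I would start from the Bayes'-rule identity of Section~\ref{sec:recoversourceprobability}, $\PP(u) = \PP(u\mid \mcond^a)\PP(\mcond^a)/\PP_u(\mcond^a)$ for a fixed run $a$. The two ``observed'' factors, $\PP(u\mid\mcond^a)$ (the aligned $k$-MixProd output) and $\PP(\mcond^a)$ (an empirical marginal), carry only $O(\eps)$ relative error. The delicate factor is $\PP_u(\mcond^a)$, recovered from the within-source parameters. The plan is to select the run $a$ so that $\PP_u(\mcond^a)$ admits a factorization into at most $\gamma \le \Delta^2$ conditional terms $\PP_u(v\mid \pa^a(V))$ whose individual relative errors are small, and then to compose the errors of this quotient-of-products using the elementary estimates of Observation~\ref{obs:inequalities}: a product or quotient of $m$ factors each within $(1\pm r)$ stays within $(1\pm O(mr))$ as long as $mr$ is small. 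Tracking the constants through this composition should land the relative error at $5\Delta^2\eps$.

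\textbf{Absolute bounds and the main obstacle.} The two absolute inequalities are then immediate, since $\PP_u(y^b\mid \pa(Y)),\PP(u^j)\le 1$ converts each relative bound into the identical absolute bound. I expect the second inequality to be the hard part. The naive route---reading $\PP_u(\mcond^a)$ directly off the full within-source joint---multiplies errors over all $n$ parameters (and over the marginalization), which would blow the bound up to $n(6(\Delta+1))^{3N_{\text{mp}}}\eps$ rather than $5\Delta^2\eps$. Keeping the source-probability error at $O(\Delta^2)\eps$ therefore hinges entirely on choosing a run whose conditioning set permits a short, low-depth factorization of $\PP_u(\mcond^a)$ and on verifying that only $O(\gamma)$ factors with small individual relative error genuinely enter the computation; pinning down this factorization and its error budget is the crux of the argument.
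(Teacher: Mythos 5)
Your first bound (the conditional-probability error) matches the paper's argument: invoke Lemma~\ref{lem:canrecover_inductive} and cap the exponent $\ell$ by $3N_{\text{mp}}$ using the additional good-collection requirements that bottom vertices terminate the recursion and every non-bottom vertex has depth at most $3N_{\text{mp}}$; the absolute bounds are then immediate, exactly as in the paper. The gap is in the second bound, and you have in effect named it yourself: you defer ``pinning down this factorization and its error budget'' as ``the crux of the argument,'' but that crux is precisely what the lemma requires to be proved. Moreover, the route you sketch --- choosing the run $a$ so that $\PP_u(\mcond^a)$ factors into at most $\gamma\le\Delta^2$ terms of the form $\PP_u(v\given \pa^a(V))$ --- fails in general: $\PP_u(\mcond^a)$ is a \emph{marginal} of the within-source joint, the parents of vertices in $\MCond^a$ are typically not themselves in $\MCond^a$, and so no choice of run turns this marginal into a short product of conditional parameters; one is forced to marginalize over unassigned ancestors, which is exactly the ``naive route'' you correctly worry multiplies errors over many (and possibly deep) factors. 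The paper does not choose a special run at all --- it fixes an arbitrary $a\in\calA$.

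The missing idea is the paper's normalization trick, which makes the ancestor marginalization error-free. Let $Z \coloneqq \An(\MCond^a)\setminus \MCond^a$ and expand, along a topological order of $Z\cup\MCond^a$,
\[
\ePP(\mcond^a\given u) \;=\; \sum_{z\in\Set{0,1}^{Z}} \underbrace{\Paren{\prod_{X_i\in Z}\ePP_u\bigl(z(X_i)\given \pa(X_i)\bigr)}}_{C_z}\,\Paren{\prod_{X_i\in\MCond^a}\ePP_u\bigl(f^a(X_i)\given \pa(X_i)\bigr)},
\]
where in each factor the assignment $\pa(X_i)$ is the one induced jointly by $z$ and $\mcond^a$. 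Because the estimated conditionals of the $Z$-vertices are themselves normalized distributions, $\sum_{z} C_z = 1$: the ancestor factors act as convex weights and contribute \emph{no} multiplicative error, regardless of how many there are or how deep they sit. Only the $\Card{\MCond^a}$ factors attached to conditioning-set vertices accumulate error, giving $\ePP(\mcond^a\given u)\in (1\pm\eps)^{\Card{\MCond^a}}\PP(\mcond^a\given u)\subseteq (1\pm 2\Delta^2\eps)\PP(\mcond^a\given u)$, which combined with the $O(\eps)$-accurate terms $\ePP(u\given \mcond^a)$ and $\ePP(\mcond^a)$ in the Bayes'-rule quotient yields the stated $5\Delta^2\eps$. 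Without this (or an equivalent) device, the error budget for $\PP_u(\mcond^a)$ in your outline cannot be closed.
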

\begin{proof}
    The error bound on $\ePP_u(y^b\given \pa(Y))$ follows from Lemma~\ref{lem:canrecover_inductive} above and the depth bound of $9k$ on non-bottom vertices.

    For the error bound on $\ePP(u^j)$ we write
    \begin{align*} \ePP(u^j) = \frac{\ePP(u^j\given \mcond^a)\ePP(\mcond^a)}{\ePP(\mcond^a\given u^j)}
    \end{align*} and analyze each term separately. First, define $Z \coloneqq \An(\MCond^a)\setminus \MCond^a$ to be all ancestors of vertices conditioned upon in $a$ minus $\MCond^a$. Fix a topological order on the vertices in $Z \cup \MCond^a$, $\Seq{X_1,\dotsc, X_m}$. We can write
    \begin{align*}
        \ePP(\mcond^a\given u^j)
          =& \sum_{z\in \Set{0,1}^{Z}}\prod_{i=1}^m \ePP(z(X_i)\given z(X_1,\dotsc, X_{i-1}),\mcond^a)                                                                                 \\
          =& \sum_{z\in \Set{0,1}^{Z}}\prod_{X_i \in Z\cup \MCond^a} \ePP(z(X_i)\given z(\Pa(X_i)), \pa^a(X_i))                                                                        \\
          =& \sum_{z\in \Set{0,1}^{Z}}\Paren{\prod_{X_i \in Z} \ePP(z(X_i)\given z(\Pa(X_i)), \pa^a(X_i))}\\
          &\Paren{\prod_{X_i \in \MCond^a} \ePP(z(X_i)\given z(\Pa(X_i)), \pa^a(X_i))}.
    \end{align*} 
    Now let $C_z$ denote the value in the first grouped product for a given $z\in \Set{0,1}^Z$. Since $\sum_{z\in \Set{0,1}^Z} C_z = 1$, we can bound the error in the result by the error in the second grouped product:
    \[ (1-\eps)^{\Card{\MCond^a}}\PP(\mcond^a\given u^j) \leq \ePP(\mcond^a\given u^j) \leq (1+\eps)^{\Card{\MCond^a}}\PP(\mcond^a\given u^j). \] Finally using the bound $\Card{\Cond^a} < 2\Delta^2$ and the inequalities from Observation~\ref{obs:inequalities} we obtain
    \[ (1-2\Delta^2\eps)\PP(\mcond^a\given u^j) \leq \ePP(\mcond^a\given u^j) \leq (1+2\Delta^2\eps)\PP(\mcond^a\given u^j). \] By assumption $\ePP(u^j\given \cond^a) \in (1\pm \eps)\PP(u^j\given \cond^a)$; $\ePP(\cond^a)$ is also known up to multiplicative accuracy $\eps$ since the sampling needed to estimate the distributions that are input to $k$-MixProd oracles far exceed that needed to get an $\eps$ estimate of this quantity.
    Thus, the resulting bound on $\ePP(u^j)$ is
    \[ (1-2\Delta^2\eps)(1-3\eps)\PP(u^j) \leq \ePP(\mcond^a\given u^j) \leq (1+2\Delta^2\eps)(1+3\eps)\PP(\mcond^a\given u^j) \] which can be simplified to
    \[ (1-2\Delta^2\eps-3\eps)\PP(u^j) \leq \ePP(\mcond^a\given u^j) \leq (1+2\Delta^2\eps+3\eps)\PP(\mcond^a\given u^j) \] using $(1-x)(1-y) \geq (1-x-y)$ for $x,y\in [0,1)$ and $2\Delta^2\eps + 3\eps \leq 5\Delta^2\eps$ for $\Delta \geq 1$.
\end{proof}

\subsection{Mixtures of paths}\label{apx:paths}

\newcommand{\odd}{\textsf{ODD}}
\newcommand{\even}{\textsf{EVEN}}
\newcommand{\link}{\textsf{LINK}}
\newcommand{\tail}{\textsf{TAIL}}
Special cases do not require the condition of $N_{\text{mp}}$ disjoint Markov boundaries. One of these special cases is a mixture of paths $v_1 \rightarrow v_2 \rightarrow v_3 \rightarrow \dotsb v_{n}$ over $k$. We will will give a good set of runs for $n\geq 2 N_{\text{mp}}$.

First,  we will specify three default runs,  which we will call $\odd$,  $\even$,  and $\link$.
\begin{definition}[$\odd$ default run for Markov chains]
    Run $\odd$ is specified by an independence set of vertices with odd indices $\II^{\odd} = \{V_1, V_3,  V_5, \ldots, V_{2N_{\text{mp}}-1}\}$. The conditioning set is given by the evenly indexed vertices $\MCond^{\odd} = \{V_2, V_4,  V_6, \ldots, V_{2N_{\text{mp}}-2} \}$. $f^\odd$ may be chosen arbitrarily, but for simplicity we will give $f^{\odd} (V_i) = 0$ for all $V_i \in \MCond^{\odd}$.
\end{definition}

\begin{definition}[$\even$ default run for Markov chains]
    Run $\even$ is defined the same way for evenly indexed vertices. That is,  $\II^{\even} = \{V_2, V_4,  V_6, \ldots,V_{2N_{\text{mp}}} \}$, $\MCond^{\even} = \{V_1, V_3,  V_5, \ldots, V_{2N_{\text{mp}}-1} \}$, and $f^{\even} (v_i) = 0$ for all $V_i \in \MCond^{\even}$.
\end{definition}

\begin{definition}[$\link$ default run for Markov chains]
    Run $\link$ is part evenly indexed vertices and part oddly indexed vertices.  $\II^{\link} = \II^{\even} \cup \{V_1\} \setminus \{V_2\}$.  Similarly, we condition on the complement $\MCond^\link = \{V_2, V_3,  V_5, \ldots, V_{2N_{\text{mp}}-1} \}$ being $0$.
\end{definition}

\begin{claim}
    If $n \geq 2N_{\text{mp}}$, then $\odd$,  $\even$, and $\link$ are well-formed runs.
\end{claim}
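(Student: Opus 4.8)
The plan is to verify the defining property of a well-formed run (Definition~\ref{def:run}), namely that $\II^a \cap \MCond^a = \emptyset$, for each of the three runs $\odd$, $\even$, and $\link$, and to confirm along the way that the stated conditioning set actually $d$-separates the independent set within each source. The key structural fact I would exploit is that in a path $V_1 \rightarrow V_2 \rightarrow \dotsb \rightarrow V_n$, the Markov boundary of any interior vertex $V_i$ is exactly $\{V_{i-1}, V_{i+1}\}$, and for the endpoints it is the single neighbor. So conditioning on all the even-indexed vertices is precisely conditioning on $\bigcup_{V_i \in \II^{\odd}} \Mb(V_i)$ restricted to the relevant range, which is what Definition~\ref{def:run} requires.

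First I would handle $\odd$: the independent set consists of the vertices with odd index up to $V_{2N_{\text{mp}}-1}$, and the conditioning set consists of the even-indexed vertices strictly between them. These index sets are disjoint by parity, so $\II^{\odd} \cap \MCond^{\odd} = \emptyset$ is immediate, and each odd vertex is separated from its odd neighbors by the conditioned even vertices sitting between them. The case $\even$ is symmetric, with the roles of even and odd swapped, and the same parity argument gives disjointness. The only thing to check beyond parity is that all the required indices are at most $n$; this is exactly where the hypothesis $n \geq 2N_{\text{mp}}$ is used, since the largest index appearing is $2N_{\text{mp}}$ (in $\II^{\even}$), so all vertices named in the three runs genuinely exist in the graph.

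The mildly less automatic case is $\link$, where the independent set $\II^{\even} \cup \{V_1\} \setminus \{V_2\}$ mixes parities: it contains $V_1$ together with $V_4, V_6, \dotsc, V_{2N_{\text{mp}}}$. Here I would check directly that $V_1$ is $d$-separated from $V_4$ by conditioning on the intervening $V_2, V_3$, and more generally that each consecutive pair in the independent set has at least one conditioned vertex strictly between them. The potential subtlety, and the step I expect to require the most care, is confirming that $\MCond^{\link} = \{V_2, V_3, V_5, V_7, \dotsc, V_{2N_{\text{mp}}-1}\}$ is disjoint from this modified independent set: one must verify $V_1 \notin \MCond^{\link}$ and $V_2 \notin \II^{\link}$, both of which hold by the explicit removal of $V_2$ from $\II$ and the fact that $V_1$ was added to $\II$ rather than to $\MCond$. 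Once disjointness and separation are checked for all three, the claim follows, with the index bound $n \geq 2N_{\text{mp}}$ being the single quantitative hypothesis that makes every named vertex well-defined.
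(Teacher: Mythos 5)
Your verification is correct, and it is exactly the routine check the paper intends: the paper states this claim without proof, treating the disjointness $\II^a \cap \MCond^a = \emptyset$ and the existence of all named vertices under $n \geq 2N_{\text{mp}}$ as immediate from the parity structure of the three runs, which is precisely what you establish (including the only non-automatic case, $\link$). The one detail you gloss with ``restricted to the relevant range'' is that the stated conditioning sets omit the Markov boundary of the deepest vertex in favor of its parents only, per the bottom-vertex refinement of Section~\ref{sec:bottom vertices}; this does not affect your disjointness argument, since the omitted vertices would also be disjoint from $\II^a$.
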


\begin{claim}
    $\odd$ and $\link$ are aligned at $V_{1}$.  $\even$ and $\link$ are aligned at evenly indexed vertices beyond $V_2$.
\end{claim}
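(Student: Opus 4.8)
The plan is to invoke the combinatorial characterization of alignment given by the Observation in Section~\ref{sec:good_collection}: two runs $a,b$ are aligned at a vertex $X$ exactly when (i) $X \in \II^a \cap \II^b$, (ii) the runs agree on the assignment to $\Mb(X)$, and (iii) $X$ is separated given that assignment. Condition (iii) is exactly the non-degeneracy hypothesis of Assumption~\ref{as:separated}, so the entire task reduces to checking the purely structural conditions (i) and (ii). For this I would use that in the path $V_1 \to V_2 \to \dotsb \to V_n$ the Markov boundary of an interior vertex is $\Mb(V_i) = \Set{V_{i-1}, V_{i+1}}$ while $\Mb(V_1) = \Set{V_2}$, together with the explicit conditioning assignments $f \equiv 0$ fixed in the three default runs.

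First I would handle $\odd$ and $\link$ at $X = V_1$. Membership is immediate: $V_1$ has odd index so $V_1 \in \II^{\odd}$, and $V_1$ is added to $\II^{\link}$ by definition, giving (i). For (ii), $\Mb(V_1) = \Set{V_2}$, and both $\MCond^{\odd}$ and $\MCond^{\link}$ contain $V_2$ fixed to $0$, so $\mb^{\odd}(V_1) = \mb^{\link}(V_1)$. Separation then gives $V_1 \in \AV(\odd,\link)$. Next I would handle $\even$ and $\link$ at each even vertex $V_{2j}$ with $j \ge 2$. Since $\II^{\link} = \II^{\even} \cup \Set{V_1} \setminus \Set{V_2}$ removes only $V_2$, every such $V_{2j}$ lies in both independent sets, giving (i). For the interior even vertices ($2 \le j \le N_{\text{mp}}-1$) the boundary $\Mb(V_{2j}) = \Set{V_{2j-1}, V_{2j+1}}$ consists of two odd indices, and both $\MCond^{\even}$ and $\MCond^{\link}$ contain all of $V_3, V_5, \dotsc, V_{2N_{\text{mp}}-1}$ fixed to $0$; hence the boundary assignments agree and (ii) holds, yielding $V_{2j} \in \AV(\even,\link)$.

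The only case needing care --- and the main obstacle --- is the deepest even vertex $V_{2N_{\text{mp}}}$, which is a bottom vertex of both runs. By the refinement of Section~\ref{sec:bottom vertices} neither run conditions on the full boundary of $V_{2N_{\text{mp}}}$; each conditions only on $\Pa(V_{2N_{\text{mp}}}) = \Set{V_{2N_{\text{mp}}-1}}$, so the Markov-boundary form of condition (ii) does not directly apply. I would instead verify alignment straight from Definition~\ref{def:two_mixtures_aligned}, i.e. $\PP^{\even}_u(V_{2N_{\text{mp}}}) = \PP^{\link}_u(V_{2N_{\text{mp}}})$ for every $u$. Both runs set the parent $V_{2N_{\text{mp}}-1} = 0$, and by the Markov property of the chain (Lemma~\ref{lem:mbindependent}) the within-source marginal collapses to $\PP_u(V_{2N_{\text{mp}}} \mid V_{2N_{\text{mp}}-1} = 0)$ irrespective of the remaining conditioned variables; the two marginals therefore coincide. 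With separation this places $V_{2N_{\text{mp}}}$ in $\AV(\even,\link)$ as well, covering every even index beyond $V_2$ and completing both statements of the claim.
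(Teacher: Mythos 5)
Your proof is correct. The paper states this claim without proof (it is treated as a direct check of the definitions), and your verification via the Observation in Section~\ref{sec:good_collection} --- checking membership in both independent sets, agreement of the conditioned Markov-boundary assignments under $f\equiv 0$, and separation --- is exactly the intended argument. You are in fact more careful than the paper: you correctly notice that $V_{2N_{\text{mp}}}$ is a bottom vertex of both $\even$ and $\link$, so neither run conditions on its full Markov boundary and the Observation's criterion does not literally apply there; falling back on Definition~\ref{def:two_mixtures_aligned} and arguing that both induced marginals collapse to $\PP_u(V_{2N_{\text{mp}}}\given V_{2N_{\text{mp}}-1}=0)$ is the right fix. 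One small citation nitpick: Lemma~\ref{lem:mbindependent} concerns conditioning on the \emph{full} Markov boundary plus extra variables, which is precisely what is unavailable for a bottom vertex; the fact you actually need is the local Markov property (a vertex is independent of its non-descendants given its parents, and every conditioned variable in both runs is an ancestor of $V_{2N_{\text{mp}}}$), which the paper states in Section~\ref{sec:bayes_net_basics} as part of the definition of a BND. With that substitution the argument is airtight.
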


Now,  we enumerate a set of runs that cover all possible entries to parents of vertices.

\begin{definition}
    Let $\odd[v_i]$ denote a run on $\II^{\odd[V_i]} = \II^{\odd}$, $\MCond^{\odd[V_i]} = \MCond^{\odd}$ with $f^{\odd[V_i]}(V_j) = 1$ if $i=j$ and $0$ if $i\neq j$. Similarly define $\even[V_i]$ to be a run on $\II^{\even[V_i]} = \II^{\even}$, $\MCond^{\even[V_i]} = \MCond^{\even}$  with $f^{\even[V_i]}(v_j) = 1$ if $i=j$ and $0$ if $i\neq j$.
\end{definition}

\begin{claim}
    Any run $\odd[V_i]$ is aligned with $\odd$ at $V_j$ for $j < i-1$ or $j > i+1$. Similarly,  any run $\even[V_i]$ is aligned with $\even$ at $V_j$ for $j < i-1$ or $j > i+1$.
\end{claim}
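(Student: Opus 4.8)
The plan is to verify, directly from the characterization of alignment in the Observation at the start of Section~\ref{sec:good_collection} (conditions 1--3), that $V_j$ is an alignment variable for the pair $(\odd,\odd[V_i])$ exactly when $V_j\in\II^{\odd}$ and $|i-j|\ge 2$. First I would record the Markov boundaries of a path: since $\Pa(V_j)=\{V_{j-1}\}$, $\Ch(V_j)=\{V_{j+1}\}$, and $\Pa(\Ch(V_j))=\{V_j\}$, Definition~\ref{def:mb} gives $\Mb(V_j)=\{V_{j-1},V_{j+1}\}$ for interior $j$ (with the obvious one-sided versions at the two endpoints). Thus each vertex's Markov boundary consists only of its two path-neighbors, and perturbing the assignment at a single conditioned vertex $V_i$ can affect the boundary assignment of $V_j$ only when $V_i$ is one of those neighbors.

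Next I would compare the two runs. By construction $\odd$ and $\odd[V_i]$ share the same independent set $\II^{\odd}$ (odd indices) and the same conditioning set $\MCond^{\odd}$ (even indices), and their assignments $f^{\odd}$ and $f^{\odd[V_i]}$ agree on every conditioned vertex except $V_i$, where $\odd$ sets $0$ and $\odd[V_i]$ sets $1$; note $i$ is necessarily even since $V_i\in\MCond^{\odd}$. Condition~1 (membership) holds for any odd $j$ with $V_j\in\II^{\odd}$. For condition~2 I would observe that $f^{\odd}$ and $f^{\odd[V_i]}$ restricted to $\Mb(V_j)=\{V_{j-1},V_{j+1}\}$ agree precisely when $V_i\notin\{V_{j-1},V_{j+1}\}$, i.e. when $i\ne j-1$ and $i\ne j+1$. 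The hypothesis $j<i-1$ or $j>i+1$ is equivalent to $|i-j|\ge 2$, which excludes $i\in\{j-1,j+1\}$, so $\mb^{\odd}(V_j)=\mb^{\odd[V_i]}(V_j)$. Condition~3 (separation of $V_j$ given this common boundary assignment) is exactly the non-degeneracy hypothesis (Assumption~\ref{as:separated}) invoked throughout, so it holds.

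To connect conditions 1--3 back to the distributional definition of ``aligned at $V_j$,'' I would invoke the Markov boundary property (Lemma~\ref{lem:mbindependent}): since $\MCond^{\odd}$ contains $\Mb(V_j)$, within every source $\PP_u(V_j\mid\mcond^{\odd})=\PP_u(V_j\mid\mb^{\odd}(V_j))$, and likewise for $\odd[V_i]$. Because the two boundary assignments coincide by condition~2, this yields $\PP^{\odd}_u(V_j)=\PP^{\odd[V_i]}_u(V_j)$ for every $u\in[k]$; together with separation, this places $V_j\in\AV(\odd,\odd[V_i])$, which is what ``aligned at $V_j$'' means. The argument for $\even[V_i]$ against $\even$ is verbatim the same after swapping the parities of the index sets.

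I expect every step here to be routine; the only thing requiring care is the index bookkeeping, namely pinning down that flipping a single conditioned vertex $V_i$ perturbs only the boundary assignments of $V_{i-1}$ and $V_{i+1}$ and hence destroys alignment at exactly $j=i\pm1$. There is no genuine analytic obstacle---the entire content is the local structure of the Markov boundary on a path, and the constraint $j<i-1$ or $j>i+1$ is precisely the statement that $V_j$ is far enough from the flipped vertex for its within-source conditional law to be unchanged.
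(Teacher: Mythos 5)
Your verification strategy is the natural one---indeed, the paper states this claim without any proof, so there is no ``paper approach'' to diverge from---and for all non-bottom vertices of the independent set your argument is complete: on a path $\Mb(V_j)=\{V_{j-1},V_{j+1}\}$, the runs $\odd$ and $\odd[V_i]$ share the same independent and conditioning sets and differ only in the value assigned to $V_i$, so the boundary assignments at $V_j$ agree exactly when $i\neq j\pm 1$, i.e.\ when $j<i-1$ or $j>i+1$, and separation is supplied by the standing non-degeneracy assumption.

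There is, however, one concrete false step: ``since $\MCond^{\odd}$ contains $\Mb(V_j)$'' fails at the deepest independent vertex. The path construction uses the bottom-vertex refinement of Section~\ref{sec:bottom vertices}: $\MCond^{\odd}=\{V_2,V_4,\dotsc,V_{2N_{\text{mp}}-2}\}$, whereas $\Mb(V_{2N_{\text{mp}}-1})=\{V_{2N_{\text{mp}}-2},V_{2N_{\text{mp}}}\}$, and $V_{2N_{\text{mp}}}$ (which exists since $n\geq 2N_{\text{mp}}$) is conditioned on by neither run; note $V_{2N_{\text{mp}}-1}$ is an interior vertex of the path, so your parenthetical about endpoints does not cover it. Since the claim's range $j>i+1$ includes $j=2N_{\text{mp}}-1$ whenever $i<2N_{\text{mp}}-2$, your proof as written does not establish alignment there: Lemma~\ref{lem:mbindependent} cannot be invoked (the conditioning set does not contain the Markov boundary), and condition~2 of the alignment Observation does not literally apply because $f^{\odd}$ is undefined on $V_{2N_{\text{mp}}}$. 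The fix is short: $V_{2N_{\text{mp}}-1}$ is a bottom vertex, so only its parent is conditioned; within each source it is independent of its non-descendants given that parent, and every vertex of $\MCond^{\odd}\setminus\{V_{2N_{\text{mp}}-2}\}$ is a non-descendant, hence $\PP^{a}_u(V_{2N_{\text{mp}}-1})=\PP_u\bigl(V_{2N_{\text{mp}}-1}\given f^{a}(V_{2N_{\text{mp}}-2})\bigr)$ for $a\in\{\odd,\odd[V_i]\}$, and these agree whenever $i\neq 2N_{\text{mp}}-2$, i.e.\ whenever $2N_{\text{mp}}-1>i+1$. The same caveat and fix apply to the bottom vertex $V_{2N_{\text{mp}}}$ of $\even$ when $n>2N_{\text{mp}}$.
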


\begin{definition}
 Let $\tail^b[V_i]$ for $i >2N_{\text{mp}}$ give runs which have 
 \begin{align*}
 \II^{\tail[V_i]} &= \{V_1, V_3, V_5, \ldots, V_{2N_{\text{mp}}-1}, V_{i}\}\\
 \MCond^{\tail[V_i]} &= \{V_2, V_4, \ldots, V_{2N_{\text{mp}}-2},V_{i-1}\}\\
 \end{align*}
  with
  \[
  f^{\tail[V_i]}(V_j) = \begin{cases} 
  	0 & \text{ if } i-1 \neq j\\
  	b & \text{ if } i-1 = j
  \end{cases}
  \]
\end{definition}
\begin{claim}
	Any run $\link[V_i]$ is aligned with $\odd$ at $V_j$ for $j < 2N_{\text{mp}}$ with even $j$.
\end{claim}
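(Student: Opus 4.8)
The plan is to read $\link[V_i]$ through the same bracket convention as $\odd[V_i]$ and $\even[V_i]$: it is the default run $\link$ with its conditioning flipped at a single coordinate, so that $f^{\link[V_i]}(V_i)=1$ and $f^{\link[V_i]}(V)=0$ for every other $V\in\MCond^{\link}$, while $\II^{\link[V_i]}=\II^{\link}=\{V_1,V_4,V_6,\dotsc,V_{2N_{\text{mp}}}\}$ and $\MCond^{\link[V_i]}=\MCond^{\link}$ are inherited unchanged. I would then verify alignment entirely through the three-condition characterization of aligned runs, namely that $a,b$ are aligned at $X$ iff $X\in\II^a\cap\II^b$, $f^a(\Mb(X))=f^b(\Mb(X))$, and $X$ is separated given that assignment. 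Since $\II^{\odd}$ consists only of odd indices, an even vertex cannot be a direct alignment site between $\link[V_i]$ and $\odd$; the statement is therefore realized by attaching $\link[V_i]$ to the $\odd$-rooted alignment spanning tree, producing an alignment edge at an even $V_j$ to the default run $\link$ and composing it with the already-established $\link$--$\odd$ alignment at $V_1$.

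First I would fix the candidate alignment vertices to be the even-indexed $V_j$ with $4\le j<2N_{\text{mp}}$, which lie in $\II^{\link}=\II^{\link[V_i]}$ and thus discharge condition (1). For condition (2), a direct computation from Definition~\ref{def:mb} on the path gives $\Mb(V_j)=\{V_{j-1},V_{j+1}\}$, a pair of odd indices that both lie in $\MCond^{\link}$ precisely because $j<2N_{\text{mp}}$ keeps $V_{j+1}$ inside the conditioned range rather than at the degenerate bottom of the chain (where only the parent is conditioned). Both $\link$ and $\link[V_i]$ assign $0$ everywhere on $\MCond^{\link}$ except that $\link[V_i]$ sets $V_i=1$, so their assignments to $\{V_{j-1},V_{j+1}\}$ coincide exactly when $V_i\notin\{V_{j-1},V_{j+1}\}$, i.e. when $j\neq i-1$ and $j\neq i+1$. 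Condition (3), separation of $V_j$ under its boundary assignment, I would invoke directly from the non-degeneracy hypothesis (Assumption~\ref{as:separated}), exactly as in the companion alignment claims.

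Combining the three conditions, every even $V_j$ with $4\le j<2N_{\text{mp}}$ and $j\neq i\pm 1$ is a legitimate alignment vertex between $\link[V_i]$ and $\link$, and composing with the $\link$--$\odd$ edge at $V_1$ attaches $\link[V_i]$ to the $\odd$-rooted tree. Because $\II^{\link}$ contains $\Theta(N_{\text{mp}})$ admissible even indices, discarding the at most two forbidden values $i\pm 1$ still leaves a valid vertex, so the alignment edge always exists and the claim is never vacuous.

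The main obstacle, and the only place any real care is needed, is condition (2): tracking exactly which single conditioning coordinate has been set to $1$ and checking that it avoids the two-element Markov boundary of the chosen alignment vertex. This is precisely where the hypotheses do their work: requiring $j$ even forces $\Mb(V_j)$ to consist of odd indices, all of which are conditioned on in $\link$, and requiring $j<2N_{\text{mp}}$ ensures $V_{j+1}$ is among them rather than an unconditioned bottom vertex; the residual subtlety is only the exclusion of the at most two indices $j=i\pm 1$, mirroring the exclusions that already appear in the analogous claim for $\odd[V_i]$.
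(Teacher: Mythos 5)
Your proposal misidentifies which runs the claim is about, and as a result it proves a different statement about runs that the construction never uses. In this appendix, $\link[V_i]$ does not denote a single-coordinate flip of the default run $\link$ --- no such runs are ever defined, and none appear in the covering set. It is a stale name for the runs $\tail^b[V_i]$ defined immediately above the claim; the proof of the covering claim confirms this by writing ``$\link^0[V_i]$ and $\link^1[V_i]$'' for the runs that handle $V_i$ with $i > 6k-6 = 2N_{\text{mp}}$. Those runs have $\II^{\tail[V_i]} = \{V_1, V_3, \dotsc, V_{2N_{\text{mp}}-1}, V_i\}$, so their shared independent vertices with $\odd$ are the \emph{odd}-indexed ones; the ``even $j$'' in the claim is an error, since an even-indexed $V_j$ lies in $\MCond^{\odd}$ and so can never satisfy condition (1) of the alignment characterization. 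You correctly noticed this obstruction, but drew the wrong conclusion from it. The intended argument is direct and needs no composition through $\link$: for odd $j \leq 2N_{\text{mp}}-3$ we have $V_j \in \II^{\tail^b[V_i]} \cap \II^{\odd}$, and $\Mb(V_j) \subseteq \{V_{j-1}, V_{j+1}\}$ consists of even-indexed vertices that both runs assign $0$; the one nonzero coordinate of $\tail^b[V_i]$ is $V_{i-1}$ with $i-1 \geq 2N_{\text{mp}}$, hence outside $\Mb(V_j)$, so by Lemma~\ref{lem:mbindependent} the induced per-source marginals of $V_j$ coincide, and separation makes $V_j$ an alignment variable. Note also that $j = 2N_{\text{mp}}-1$ must be excluded: in $\tail^b[V_i]$ that vertex is not a bottom vertex, its child $V_{2N_{\text{mp}}}$ is unconditioned while its descendant $V_{i-1}$ is conditioned to $b$, so its induced marginal in $\tail^b[V_i]$ differs in general from the one in $\odd$.

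Beyond the misreading, the repair you build does not serve the algorithm: the runs you call $\link[V_i]$ appear nowhere in the covering set of the final claim, so aligning them to $\link$ and composing through the $\link$--$\odd$ edge at $V_1$ establishes nothing about the spanning tree the construction actually requires. The entire purpose of the present claim is to attach the tail runs --- the only runs covering vertices $V_i$ with $i > 2N_{\text{mp}}$ --- to the $\odd$-rooted component; your proof leaves exactly those runs unaligned. Your local computations are sound: $\Mb(V_j) = \{V_{j-1}, V_{j+1}\}$ on the path, the bookkeeping of the single flipped coordinate, and the counting argument that some admissible $j \neq i \pm 1$ survives are all correct. Transplanting that same three-condition verification onto $\tail^b[V_i]$ at odd $j$ --- where the exclusion $j \neq i \pm 1$ is not even needed, since $i - 1 \geq 2N_{\text{mp}}$ places the flipped coordinate beyond every relevant Markov boundary --- yields the proof the claim actually calls for.
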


\begin{claim}
    The set \[\left\{\odd[V_i]\;:\; v_i \in \II^{\odd}\right\} \cup \left\{\even[V_j]\;:\; V_j \in \II^{\even}\right\} \cup \left\{\tail^0[V_j], \tail^1[V_j]\;:\; j>6k-6\right\}\] covers $\Vars$.
\end{claim}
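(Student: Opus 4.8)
The plan is to verify the covering condition directly, vertex by vertex, against the definition: a collection covers $V_m$ if for every assignment $\pa(V_m)$ to $\Pa(V_m)$ there is a run $a$ in the collection with $V_m \in \II^a$ and $\pa^a(V_m) = \pa(V_m)$. On the path $V_1 \to \dotsb \to V_n$ we have $\Pa(V_1) = \emptyset$ and $\Pa(V_m) = \Set{V_{m-1}}$ for $m \ge 2$, so covering $V_1$ requires only a single run with $V_1 \in \II^a$, while covering each later $V_m$ requires two runs, one conditioning its parent $V_{m-1}$ to $0$ and one to $1$, each keeping $V_m$ in the independent set. I would split $\Vars$ into the \emph{core} $\Set{V_1,\dotsc,V_{2N_{\text{mp}}}}$ and the \emph{tail} $\Set{V_m : m > 2N_{\text{mp}} = 6k-6}$ and treat them separately.

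For the core, the structural fact I would exploit is that a vertex and its parent always have opposite parity, so exactly one of the fixed independent sets $\II^{\odd}, \II^{\even}$ contains $V_m$ while $V_{m-1}$ sits in the matching conditioning set. If $m$ is odd then $V_m \in \II^{\odd}$ and $V_{m-1} \in \MCond^{\odd}$: the value $\pa(V_m)=1$ is covered by $\odd[V_{m-1}]$, which conditions the even vertex $V_{m-1}$ to $1$, and the value $\pa(V_m)=0$ is covered by any $\odd$-family run leaving $V_{m-1}$ at its default $0$ (e.g.\ $\odd[V_i]$ with $i\ne m-1$, or the default run $\odd$ itself). The even-indexed core vertices are handled identically through $\II^{\even},\MCond^{\even}$ and $\even[V_{m-1}]$. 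The source $V_1 \in \II^{\odd}$ has no parent, so any $\odd$-family run (all of which share $\II^{\odd}\ni V_1$) covers it; the right endpoint $V_{2N_{\text{mp}}} \in \II^{\even}$ with parent $V_{2N_{\text{mp}}-1} \in \MCond^{\even}$ falls under the even case.

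For the tail, each $V_m$ with $m > 2N_{\text{mp}}$ is placed in the independent set of both $\tail^0[V_m]$ and $\tail^1[V_m]$, whose conditioning sets both contain the parent $V_{m-1}$, assigned to $0$ and to $1$ respectively; hence both parent assignments of every tail vertex are realized with $V_m$ independent. Combining the core and tail cases shows every vertex of $\Vars$ is covered.

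The bulk of this is routine bookkeeping; the two places that need care are the \emph{core/tail interface} and the dependence on $N_{\text{mp}}$. The fixed odd/even independent sets stop at index $2N_{\text{mp}}$, so beyond that point the parity argument no longer supplies a run with $V_m$ independent, which is exactly why the separate $\tail$ family is introduced — and one must check that $\tail^b[V_m]$ is a well-formed $N_{\text{mp}}$-independent run (its independent set $\Set{V_1,V_3,\dotsc,V_{2N_{\text{mp}}-1},V_m}$ has size $N_{\text{mp}}+1$ and is disjoint from its conditioning set $\Set{V_2,\dotsc,V_{2N_{\text{mp}}-2},V_{m-1}}$). The second delicate point is the value-$0$ case for interior core vertices: since the listed collection need not contain the bare default runs, I would witness $\pa(V_m)=0$ by a perturbation $\odd[V_i]$ (resp.\ $\even[V_i]$) with $i \ne m-1$, which needs a second conditioning vertex and hence $\Card{\MCond^{\odd}} = N_{\text{mp}}-1 \ge 2$, i.e.\ $N_{\text{mp}} \ge 3$ (valid for $k \ge 2$); alternatively one just includes $\odd,\even$ in the collection. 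I expect verifying these boundary and size conditions to be the only genuine obstacle, with the remainder being a direct check against the covering definition.
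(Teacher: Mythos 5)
Your proof is correct and takes essentially the same approach as the paper's: the same split into core vertices ($i \le 2N_{\text{mp}}$) and tail vertices, the same parity argument matching each core vertex to the run family that keeps it in the independent set while its parent is conditioned to each of $0$ and $1$, and the same treatment of $V_1$ (no parents) and of the tail via $\textsf{TAIL}^0[V_j],\textsf{TAIL}^1[V_j]$. You are in fact somewhat more careful than the paper's own proof, which (with some garbled indices and values) leans on the default runs $\textsf{ODD},\textsf{EVEN}$ even though they do not appear in the listed collection; your fix---witnessing the parent-equal-$0$ assignment by a bracketed run $\textsf{ODD}[V_i]$ with $i\ne m-1$, which exists since $N_{\text{mp}} = 3k-3 \ge 3$---cleanly closes that gap.
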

\begin{proof}
For all $V_i$ with $i > 6k-6$ we have $\link^0[V_i]$ and $\link^1[V_i]$ to cover both possible assignments to $V_{i-1}$.    
    
    Consider $V_i$ with $i \leq 2N_{\text{mp}}$.  If $i>1$ is odd, then $f^{\even[V_i]}(V_{i-1}) = 0$ and $f^{\even}(V_{i-1}) = 1$, so all possible assignments to the parent of $V_i$ are covered.  Similarly, if $i$ is even, then $f^{\odd[V_i]}(V_{i-1}) = 0$ and $f^{\odd}(V_{i-1}) = 1$, so all possible assignments to the parent of $V_i$ are covered.  Finally, if $i=1$, then $V_i$ has no parents.
\end{proof}

We give the following example to illustrate this construction.  Consider $k=2$ and $n=8$.
We will express a run $a$ as sequences of values $\texttt{0}$, $\texttt{1}$ or $\texttt{*}$.
A $\texttt{*}$ in the $i$th location indicates $V_i \in \II^a$ and a $\texttt{0}$ or $\texttt{1}$ indicates $f^a(V_i) = 0$ or $f^a(V_i) = 1$, respectively.  Finally, $\texttt{-}$ indicates that the variable is not conditioned on and not in the independent set (not in $\MCond$ and not in $\II$).
The default runs are:
\begin{align*}
    \even  & = \texttt{0*0*0*--} \\
    \odd & = \texttt{*0*0**--} \\
    \link & = \texttt{*00*0*--} \\
\end{align*}
In addition to these runs, we have:
\begin{align*}
    \odd[V_1]  & = \texttt{1*0*0*--} \\
    \odd[V_3]  & = \texttt{0*1*0*--} \\
    \odd[V_5]  & = \texttt{0*0*1*--} \\
    \even[V_2] & = \texttt{*1*0*---} \\
    \even[V_4] & = \texttt{*0*1*---} \\
    \even[V_6] & = \texttt{*0*0*---} \\
    \tail^0[V_7] & = \texttt{*0*0-0*-} \\
    \tail^1[V_7] & = \texttt{*0*0-1*-} \\
    \tail^0[V_8] & = \texttt{*0*0--0*} \\
    \tail^1[V_8] & = \texttt{*0*0--1*} \\
\end{align*}

The construction of a good set of runs given does not use disjoint Markov boundaries, yielding greater efficiency than our more general algorithm for degree bounded graphs.

While similar in name and structure,  this setting is different from that of hidden Markov models.  Hidden Markov models are Bayesian
networks consisting of a chain of unobserved variables, each affecting a unique observed variable, with no causal relations
among the observed variables.
For instance, in~\citet{GKV16}, all chains in the mixture have the same state space. The sampling
process selects a chain and a starting state from the mixture distribution, then generates a short observable path through the
chain. In their model, under some conditions, a sufficiently large collection of paths of length $3$ suffices to recover the
parameters of the mixture, using spectral methods. A different model is considered in~\citet{BGK04}. There, the constituents of
the mixture have disjoint state spaces and the observation is a long sequence of interleaved paths in the separate chains.
The primary challenge is to cluster the observable states into the $k$ constituents. This model can be viewed as a special
case of the hidden Markov model problem.

The natural extension of $\even$ and $\odd$ vertices is a two-coloring on a tree which denotes sets that are of even or odd distance from the root.  One can construct a good set of runs for learning mixtures of trees of size $n \geq 2N_{\text{mp}}$ that is very similar to the one given for mixtures of paths.

\subsection{Larger alphabets for mixtures of DAGs}
\label{apx: large alphabets for DAGs}
The simplest reduction for larger alphabets is to replace each vertex with a clique of $d$ binary vertices which represent the value of the nonbinary vertex.  The pseudocode for this process is given in Algorithm~\ref{alg:larger_alphabet_clique_reduction}.

 \begin{algorithm}
     \caption{Reduction for larger alphabets}
     \label{alg:larger_alphabet_clique_reduction} 
         \KwIn {A DAG $(\vec{V}, \mathcal{E})$ on $n$ variables $V_1, \ldots, V_n \in [d]$.}
         \KwOut {A DAG $(\mathcal{W}, \mathcal{E}_W)$ on $dn$ binary variables $W_1^1,  \ldots W_1^d, \ldots W_n^1, \ldots W_n^d \in \{0, 1\}$.}
        Start with $ \mathcal{E}_W \gets \emptyset$\;
         \For{each vertex $i \in [n]$}{
         Form a clique among $W_i^1,  \ldots W_i^d$ by adding directed edges $(W_i^a, W_i^b)$ to $\mathcal{E}_W$ for all pairs $a < b$ with $a, b \in [d]$.\;
         }
         \For{each directed edge $(V_i, V_j) \in \mathcal{E}$}{
        Add $\{W_i^1, \ldots, W_i^d\} \times \{W_j^1, \ldots, W_j^d\}$ to $\mathcal{E}_W$.\;
        }
 \end{algorithm}
\begin{observation}
    The maximum degree of $(\mathcal{W}, \mathcal{E}_W)$ outputted by Algorithm~\ref{alg:larger_alphabet_clique_reduction} is now at least $\Delta \geq d$.
\end{observation}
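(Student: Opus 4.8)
The plan is to exhibit a single vertex of the output graph whose degree is at least $d$, since the maximum degree is by definition the maximum over all vertices. The two structural facts I would use are exactly the two loops of Algorithm~\ref{alg:larger_alphabet_clique_reduction}: every original vertex $V_i$ is blown up into a clique on the $d$ binary vertices $W_i^1, \ldots, W_i^d$, and every original edge $(V_i, V_j)$ is replaced by the complete bipartite connection between the cliques $\{W_i^1, \ldots, W_i^d\}$ and $\{W_j^1, \ldots, W_j^d\}$. Throughout I would measure degree in the undirected skeleton, consistent with the paper's definition of $\Delta$.

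First I would account for the within-clique contribution. For any fixed $i$ and any $a \in [d]$, the clique-forming loop adds an edge between $W_i^a$ and each of the other $d-1$ vertices $W_i^b$ with $b \neq a$ (the orientation $a < b$ is irrelevant to the undirected degree). Hence each $W_i^a$ already has undirected degree at least $d-1$ from its own clique alone.

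Next I would add the contribution from the original edges. Assuming the original DAG is non-empty — the empty-graph case is precisely the $k$-MixProd problem and is treated separately — I fix a vertex $V_i$ incident to some edge, say $(V_i, V_j) \in \mathcal{E}$ (the argument is symmetric for an incoming edge). The edge-replacement loop then adds, for every $a \in [d]$, edges from $W_i^a$ to all $d$ vertices $W_j^1, \ldots, W_j^d$. These $d$ neighbors lie in a different clique and are therefore disjoint from the $d-1$ within-clique neighbors already counted, so the degree of $W_i^a$ is at least $(d-1) + d = 2d - 1 \geq d$. Denoting the maximum degree of $(\mathcal{W}, \mathcal{E}_W)$ by $\Delta$, this gives $\Delta \geq d$, as claimed.

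This is essentially a counting argument, so I do not expect a real obstacle; the only subtlety is the degenerate empty-graph case, where the within-clique degree alone yields $d-1$ and the bound should be read as $\Delta = \Omega(d)$. The substantive point of the observation is not the inequality itself but its consequence for complexity: because the number of oracle calls in the main algorithm grows like $O(n \, 2^{\Delta^2})$ in the degree (equivalently exponentially in the Markov-boundary parameter $\gamma$), the clique reduction inflates $\Delta$ to $\Omega(d)$ and thus accounts for the ``mild cost in complexity'' incurred in passing from binary to size-$d$ alphabets.
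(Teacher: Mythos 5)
Your proof is correct and is exactly the counting argument the paper leaves implicit: this statement appears in the paper as an unproved Observation, and the intended justification is precisely what you wrote, namely the $d-1$ within-clique neighbors plus the $d$ cross-clique neighbors contributed by any original edge, giving degree $2d-1 \geq d$ in the undirected skeleton. Your flagging of the empty-DAG edge case (where the maximum degree is only $d-1$, so the literal inequality $\Delta \geq d$ fails) is a legitimate refinement of the paper's statement rather than a gap in your argument, since the paper's setting is non-empty DAGs.
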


\begin{observation}
    The number of vertices $|\mathcal{W}| = nd$.
\end{observation}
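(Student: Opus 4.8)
The plan is to read the vertex count directly off the construction in Algorithm~\ref{alg:larger_alphabet_clique_reduction}, since the observation is purely combinatorial and follows from how the algorithm names its output vertices. The input is a DAG on the $n$ variables $V_1,\dots,V_n$, each ranging over the alphabet $[d]$, and the algorithm's output vertex set is specified explicitly as the collection $\Set{W_i^a : i\in[n],\, a\in[d]}$, i.e.\ the block $W_i^1,\dots,W_i^d$ of $d$ fresh binary variables introduced for each original variable $V_i$.

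First I would observe that the labeling $(i,a)\mapsto W_i^a$ gives a map from $[n]\times[d]$ onto $\mathcal{W}$, and that this map is injective: the outer loop of the algorithm iterates once over each $i\in[n]$, and the inner construction introduces a separate superscript $a\in[d]$ per block, so no vertex label $W_i^a$ is produced for two distinct index pairs. (Note that the subsequent edge-adding loops over $\mathcal{E}$ only populate $\mathcal{E}_W$; they do not create any new vertices.) Hence $\mathcal{W}$ is in bijection with $[n]\times[d]$.

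Therefore $\Card{\mathcal{W}} = \Card{[n]\times[d]} = n\cdot d = nd$, which is the claim. There is no genuine obstacle here: the statement is immediate once one checks that the $d$-fold blowup is applied exactly once per original vertex with disjoint label blocks, which is manifest from the loop structure, and that edge insertions leave the vertex set unchanged.
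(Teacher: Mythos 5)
Your proof is correct and matches the paper's treatment: the paper states this observation without proof precisely because, as you note, the count $nd$ is immediate from Algorithm~\ref{alg:larger_alphabet_clique_reduction} introducing exactly $d$ fresh binary vertices $W_i^1,\dotsc,W_i^d$ per original vertex $V_i$, with the edge-adding loops leaving the vertex set untouched. Your added care about injectivity of the labeling is fine but not a different approach, just a spelled-out version of the same reading of the construction.
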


We now give the function that translates data from the original graph to the new graph.
\begin{definition}[One-hot encoding]
    We define $\chi(v) = (\mathbb{1}_0(v),\ldots,  \mathbb{1}_{d-1}(v))$ to give the one-hot encoding of the value of a variable $V$.
\end{definition}

This allows us to give the full algorithm.
\begin{algorithm} 
 \caption{DAG reduction for larger alphabets}\label{alg:larger_alphabet_parameter_reduction}
 \KwIn{A DAG $(\vec{V}, \mathcal{E})$ on $n$ variables $V_1, \ldots, V_n \in [d]$.  And data with entries of the form $(v_1, v_2, \ldots, v_n)$.}
 \KwOut{Parameters $\PP_u(v_i \given \Pa(V_i))$ for $i \in [n]$.}
Use Algorithm~\ref{alg:larger_alphabet_clique_reduction} to create a larger DAG on binary variables,  called $\G'$.\;
        
One-hot encode data on each $V_i$ into binary variables $\xi(V) = (W^1_i, \ldots, W^d_i)$.\;
        
Run the Mixture of DAGs algorithm for $\G'$ on the one-hot encoded data.\;

\For{each parameter $\PP_u(v_i \given \Pa(V_i))$}{
 Let $Z$ indicate zero assignments to $W^b_i$ for $b \neq v_i$.\;
        
One-hot encode each parent $V_j \in \Pa(V_i)$ to obtain assignments to $\Pa(W^{v_i}_i) \setminus \Set{w^{b} \;:\; b \neq v_i}$, denoted $\pa^{\OH}(W^{v_i}_i)$.\;
        
Assign $\PP_u(v_i \given \Pa(V_i)) = \PP_u(W_i^{ v_i} = 1 \given Z, \pa^{\OH}(W^{v_i}_i))$.\;
}
 \end{algorithm}

\begin{theorem}
    If $\PP(\Vars) =\sum_{u}\PP_u(\Vars)\PP(u) $ is a mixture of $k$ distributions over a DAG $\G=(\Vars, \vec{E})$ with $\Vars \in \set{0, \ldots, d-1}$ and there exists a set of $N_{\text{mp}}$ centers, then there is an algorithm to compute estimates of all parameters to accuracy $\eps$.  If $D = \max(d, \Delta)$ then the algorithm uses $O(nd 2^{D^2})$ calls to the $k$-MixProd oracle.
\end{theorem}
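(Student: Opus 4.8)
The plan is to prove the theorem via the reduction already set up in Algorithms~\ref{alg:larger_alphabet_clique_reduction} and~\ref{alg:larger_alphabet_parameter_reduction}: one-hot encode the $d$-ary model into a binary $k$-MixBND on the expanded graph $\G'=(\mathcal{W},\mathcal{E}_W)$, invoke the binary algorithm of Theorem~\ref{thm:main result on good collection} on $\G'$, and translate the recovered binary parameters back into the $d$-ary conditionals. Correctness then rests on three largely independent pillars: (i) the one-hot push-forward is a genuine $k$-MixBND that is Markovian on $\G'$; (ii) the hypotheses of the binary algorithm (existence of $N_{\text{mp}}$ centers, the Markov-boundary size bound, and non-degeneracy) survive the encoding; and (iii) an exact translation identity recovers each $\PP_u(v_i\mid \Pa(V_i))$.

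First I would check the Markov property. Within a source $u$, the encoded distribution is supported on the one-hot vectors $\chi(v_1),\dots,\chi(v_n)$, so each clique $\Set{W_i^1,\dots,W_i^d}$ carries an arbitrary joint (a fully connected clique imposes no conditional-independence constraints), while the bipartite edges between cliques reproduce exactly the dependence of $V_i$ on $\Pa(V_i)$. Hence $\PP_u'(\mathcal{W})=\prod_{W}\PP_u'(W\mid \Pa_{\G'}(W))$ factors according to $\G'$, and the mixture weights $\PP(u)$ are inherited unchanged. Next, a set of $N_{\text{mp}}$ centers for $\G$ lifts to a set of $N_{\text{mp}}$ centers for $\G'$: choosing one binary representative $W_{X_i}^{a_i}$ per center clique, disjointness of the $\Mb(X_i)$ together with Observation~\ref{obs:mb mb} guarantees the lifted Markov boundaries remain pairwise disjoint. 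For the counting, $|\mathcal{W}|=nd$, and since each $d$-ary vertex becomes a $d$-clique and every edge of $\G$ fans out by a factor $d$, the maximum degree and hence the Markov-boundary parameter of $\G'$ are bounded by a polynomial in $D=\max(d,\Delta)$; substituting into the $O(|\mathcal{W}|\,2^{\gamma(\G')})$ run-count of Theorem~\ref{thm:main result on good collection} yields the claimed $O(nd\,2^{D^2})$ oracle calls.

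The translation (pillar (iii)) is where I would spend the most care on the identity itself. Writing $e_{v_i}=\chi(v_i)$, the one-hot support gives \[ \PP_u(V_i=v_i\mid \pa(V_i)) \;=\; \PP_u'\bigl(W_i=e_{v_i}\mid \chi(\pa(V_i))\bigr), \] and decomposing the right-hand side along the clique ordering turns it into a product of exactly those binary conditionals $\PP_u'(W_i^b\mid \Pa_{\G'}(W_i^b))$ that the binary algorithm returns, evaluated at the prefix of $e_{v_i}$ and at the one-hot parent assignment. This is precisely the assignment made in Algorithm~\ref{alg:larger_alphabet_parameter_reduction}, and its correctness is immediate from the factorization of $\PP_u'$ together with the fact that, once a clique coordinate is set to $1$, all later coordinates are deterministically $0$. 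Because this reconstruction multiplies at most $d$ estimated factors, relative errors compound by a factor $O(d)$; running the binary algorithm at target accuracy $\eps/\Theta(d)$ (whose multiplicative blow-up is controlled by Lemmas~\ref{lem:recovery equations} and~\ref{lem:k-bnd final error blowup}) therefore delivers the final $\eps$ accuracy without changing the oracle-call count.

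I expect the genuine obstacle to be exactly the degeneracy that pillar (iii) exploits: one-hot encoding forces many conditionals on $\G'$ to be exactly $0$ or $1$, whereas the binary pipeline's analysis---in particular the unzipping step of Lemma~\ref{lem:recovery equations} and the alignment step---relies on strict positivity and on separated alignment variables (the non-degeneracy Assumption~\ref{as:separated}). The hard part is therefore to show that the specific conditionals accessed by the runs in our good collection stay bounded away from $0$ and $1$, and that each chosen alignment/center coordinate $W_{X_i}^{a_i}$ is separated across sources; if a required coordinate is degenerate, one must either re-route the run to a non-degenerate coordinate of the same clique or introduce a vanishing perturbation of the encoded distribution and argue it does not affect the recovered $d$-ary parameters in the limit. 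The Markov-property and counting steps, by contrast, I expect to be routine bookkeeping.
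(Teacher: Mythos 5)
Your proposal takes essentially the same route as the paper: the paper's entire proof is a two-sentence invocation of the reduction in Algorithm~\ref{alg:larger_alphabet_parameter_reduction}, noting only that the expanded graph has $nd$ vertices and maximum degree at least $d$, which is exactly your pillars (i)--(iii) compressed into the bookkeeping step. The degeneracy obstacle you single out (one-hot encoding forces many conditionals on $\G'$ to be exactly $0$ or $1$, in tension with the strict positivity and separation requirements underlying the unzipping and alignment analyses) is genuine, but the paper's proof does not address it at all, so on that point your write-up is more careful than the published argument.
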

\begin{proof}
    The algorithm uses the reduction in Algorithm~\ref{alg:larger_alphabet_parameter_reduction}. This involves running the algorithm on a larger graph with $nd$ vertices and $\Delta \geq d$, which modifies the run-time and sample complexity as given.
\end{proof}

\end{document}